\newif\ifanonymized
\newcommand{\anonymized}[1]{%
  \ifanonymized
  \else
    #1
  \fi
}
\newtheorem{theorem}{Theorem}
\newtheorem{lemma}{Lemma}
\newtheorem{corollary}{Corollary}
\newtheorem{definition}{Definition}
\newtheorem{proposition}{Proposition}
\newmdenv[
  backgroundcolor=gray!10,
  linecolor=black,
  linewidth=0.8pt,
  roundcorner=4pt,
  skipabove=10pt,
  skipbelow=10pt,
  nobreak=true 
]{procbox}
\newtheorem{innerprocedure}{Procedure}
\newenvironment{procedure}[1][]
  {\begin{procbox}\begin{innerprocedure}[#1]}
  {\end{innerprocedure}\end{procbox}}
\newtheorem*{assumption*}{\assumptionnumber}
\providecommand{\assumptionnumber}{}
\newcommand*\rel@kern[1]{\kern#1\dimexpr\macc@kerna}
\newcommand*\widebar[1]{%
  \begingroup
  \def\mathaccent##1##2{%
    \rel@kern{0.8}%
    \overline{\rel@kern{-0.8}\macc@nucleus\rel@kern{0.2}}%
    \rel@kern{-0.2}%
  }%
  \macc@depth\@ne
  \let\math@bgroup\@empty \let\math@egroup\macc@set@skewchar
  \mathsurround\z@ \frozen@everymath{\mathgroup\macc@group\relax}%
  \macc@set@skewchar\relax
  \let\mathaccentV\macc@nested@a
  \macc@nested@a\relax111{#1}%
  \endgroup
}
\DeclareMathOperator*{\argmin}{argmin}
\DeclareMathOperator{\sign}{sign}
\def\R{\mathbb{R}}
\def\T{\mathsf{T}}
\def\htheta{\hat{\theta}}
\def\th{^{\textnormal{th}}}
\def\one{\mathds{1}}
\def\cA{\mathcal{A}}
\def\cB{\mathcal{B}}
\def\cK{\mathcal{K}}
\def\q{q}
\def\g{g}
\def\b{b}
\def\v{v}
\def\u{u}
\def\x{x}
\def\y{y}
\def\z{z}
\newcommand{\ttheta}{\tilde{\theta}}
\newcommand{\btheta}{\theta}
\newcommand{\bttheta}{\ttheta}
\newcommand{\bomega}{\omega}
\newcommand{\cov}{\mathrm{cov}}
\newcommand\numberthis{\addtocounter{equation}{1}\tag{\theequation}} 
\newcommand{\tick}{\textcolor{green!60!black} {\checkmark}} 
\newcommand{\cross}{\textcolor{red!70!black}{$\times$}} 
\def\GD{gradient descent} 
\def\GEQ{gradient equilibrium} 
\title{Calibrated Multi-Level Quantile Forecasting}  
\author{Tiffany Ding \and Isaac Gibbs \and Ryan J. Tibshirani} 
\date{University of California, Berkeley}
\begin{document}

\maketitle

\begin{abstract}
We develop an online method that guarantees calibration of quantile forecasts at multiple quantile levels simultaneously. In this work, a sequence of quantile forecasts is said to be \emph{calibrated} provided that its $\alpha$-level predictions are greater than or equal to the target value at an $\alpha$ fraction of time steps, for each level $\alpha$. Our procedure, called the \emph{multi-level quantile tracker} (MultiQT), is lightweight and wraps around any point or quantile forecaster to produce adjusted quantile forecasts that are guaranteed to be calibrated, even against adversarial distribution shifts. Critically, it does so while ensuring that the quantiles remain ordered, e.g., the 0.5-level quantile forecast will never be larger than the 0.6-level forecast. Moreover, the method has a no-regret guarantee, implying it will not degrade the performance of the existing forecaster (asymptotically), with respect to the quantile loss. In our experiments, we find that MultiQT significantly improves the calibration of real forecasters in epidemic and energy forecasting problems, while leaving the quantile loss largely unchanged or slightly improved.



\end{abstract}

\section{Introduction}

Probabilistic forecasts are commonly conveyed via quantiles. An $\alpha$-level 
quantile forecast attempts to predict the value below which some unknown target
outcome $y_t$ falls with probability $\alpha$. Consider a forecaster that, at
each time $t$, outputs a vector of quantile forecasts  
\[
q_t = (q_t^{\alpha_1}, q_t^{\alpha_2}, \dots, q_t^{\alpha_m}),
\]
for prespecified quantile levels $\cA = \{\alpha_1, \alpha_2, \dots,
\alpha_m\}$, where $0 < \alpha_1 < \alpha_2 < \dots < \alpha_m$. Forecasts of
this type inform decision making in a wide range of applications, such as public
health \citep{doms2018assessing, lutz2019applying}, inventory management
\citep{cao2019quantile}, and energy grid operation \citep{hong2016probabilistic}. When decisions are made on the basis of forecasts that are
\emph{calibrated}, this can lead to a reliability guarantee. For example, if a
retailer has access to a sequence of calibrated 0.95-level quantile
forecasts of weekly demand, and they ensure their inventory level meets these demand forecasts, then this guarantees that they run out of stock at
most 5\% of weeks.

Although a single ($m=1$) quantile is sometimes sufficient for decision making, 
this is not true in general. When there are multiple downstream users, each with
different risk tolerances and uses for the forecasts, it is often more useful to 
provide forecasts at multiple ($m \ge 2$) quantile levels. In this work, we
seek to produce multi-level quantile forecasts that satisfy the following two
useful properties: 

\begin{enumerate}
\item \emph{Calibration.} For any sequence of target values $y_1, y_2, \dots$, including sequences chosen adversarially, the long-run coverage of the $\alpha$-level quantile forecasts should converge to $\alpha$ for each $\alpha \in \cA$. That is, if we define \smash{$\cov_t^{\alpha} = \one\{y_t \leq
    q_t^{\alpha}\}$}, then we want   
  \begin{equation}\label{eq:long_run_coverage} 
  \lim_{T\to\infty} \frac{1}{T} \sum_{t=1}^T \cov_t^{\alpha} = \alpha,
  \quad \text{for all $\alpha \in \cA$}.
  \end{equation}
  This ensures a coherence between forecasts and realized values, even
  if the distribution of the target changes over time.   

\item \emph{Distributional consistency.} Forecasts should also be ordered 
  across quantile levels. That is, we want: 
  \begin{equation}\label{eq:no_crossing}
  q_t^{\alpha_1} \le q_t^{\alpha_2} \le \dots \le q_t^{\alpha_m}, 
  \quad \text{for all $t = 1,2,\dots$}.
  \end{equation}
  Without this ordering, the vector of forecasts would not correspond to a valid
  probability distribution, making it difficult for decision makers to interpret
  or trust. 
\end{enumerate}

There are many methods for producing quantile forecasts, including classical
time series models such as ARIMA and exponential smoothing, as well as modern
machine learning approaches such as random forests, and deep neural
networks. However, these forecasts often fail to satisfy calibration. Our aim is
to take any existing forecaster and transform its predictions \emph{online} (in
real time) so that the resulting forecasts satisfy both
\eqref{eq:long_run_coverage} and \eqref{eq:no_crossing} for any sequence of
outcomes. Henceforth, we refer to this joint goal as \emph{calibration without 
  crossings}. Furthermore, subject to calibration without crossings, we want the
forecasts to remain \emph{sharp}: all else equal, the forecasts should correspond to a
probability distribution with low variance (the quantile predictions should not
be too dispersed), so that they provide the least possible uncertainty about the
outcome. 

Our first objective, online calibration, has been studied extensively
for the single $(m=1)$ quantile setting in the online conformal
prediction literature, beginning with \cite{gibbs2021adaptive}. Online conformal algorithms
achieve distribution-free calibration \eqref{eq:long_run_coverage} for a single
level $\alpha$. Of particular relevance to our paper is the quantile tracker
(QT) algorithm from \cite{angelopoulos2023conformal}. The idea behind this
method is simple: to track the $\alpha$-level quantile over time, we should
increase our current quantile estimate if it is smaller than $y_t$ (it
``miscovers'') and we should decrease our estimate if it is larger than or equal to
$y_t$ (it ``covers''). The amount by which we increase or decrease these
estimates is chosen to yield a long-run coverage of $\alpha$, which is guaranteed
whenever the target values are bounded in magnitude.

It is natural to try to use QT to solve the multi-level quantile calibration problem. However, simply applying this algorithm to multiple levels separately
often results in quantile crossings, violating
\eqref{eq:no_crossing}; in experiments on the COVID-19 Forecast Hub 
dataset from \cite{cramer2022evaluation}, QT produced crossings at 87\% of
time steps on average (see Appendix \ref{sec:QT_crossings_APPENDIX}).     
To solve the problem of simultaneously calibrating multiple quantiles without producing
crossings, we develop a procedure that we call the \emph{multi-level quantile  
  tracker (MultiQT)}, which combines a QT-style update for each level with an
ordering step to ensure forecasted quantiles are distributionally consistent. As
we later show, various naive ways of combining individual quantile calibration and ordering techniques do not
achieve calibration, but our method provably does.      

To derive the calibration guarantee for MultiQT, we first connect our goal of
calibration without crossings to a more general problem of \emph{constrained
  gradient equilibrium.}  Many statistical objectives in online settings
(including calibration) are special cases of a condition introduced by
\cite{angelopoulos2025gradient} called \emph{gradient equilibrium}, which says
that the average of the loss function gradients evaluated at the chosen iterates
converges to zero as the number of time steps goes to infinity. They show that to produce iterates which
achieve gradient equilibrium, one can simply run online gradient descent, provided that the losses satisfy certain weak
conditions. However, it was heretofore not
known whether gradient equilibrium can be achieved if the iterates must obey constraints, such as in our multi-level quantile
forecasting setting, where our forecasts must lie in the set of ordered
vectors. We provide an affirmative answer by showing that \emph{lazy gradient
  descent}, which combines online gradient updates with a projection step to 
satisfy the iterate constraints, provably achieves gradient equilibrium as long
as the loss function and constraint set jointly satisfy an additional condition
we call \emph{inward flow}. 

We show that the loss function and constraint set for the calibration without crossings problem satisfy inward flow. Thus, MultiQT, which can be written as lazy gradient descent on that loss function and constraint set, inherits a calibration guarantee from our more general analysis of constrained gradient equilibrium.
Finally, we prove a no-regret guarantee for MultiQT with respect to
the quantile loss. Due to the standard decomposition of the quantile loss into
calibration and sharpness terms, this result can be informally interpreted as saying that
MultiQT achieves calibration without paying a steep price in terms of sharpness.
 
\subsection{A peek at results: calibrating COVID-19 forecasts} 

To illustrate the behavior of our method in practice, we begin with a brief case
study. During the COVID-19 pandemic, forecasting teams submitted forecasts each
week to the United States COVID-19 Forecast Hub of COVID-19 deaths in each
state one, two, three, and four weeks into the future. In Figure
\ref{fig:example_raw}, we display one team's one-week-ahead forecasts for weekly
COVID-19 deaths in California. We can see that these quantile forecasts are too
narrow and biased downward; focusing in on the calibration plot shown in the right panel, we see that the forecasts fail to cover the true death count at the
desired rate and convey more certainty than is appropriate.
To remedy this, our proposed MultiQT method can be applied in real time to
recalibrate such forecasts. Figure \ref{fig:example_calibrated} shows the
results of running MultiQT online, where at each time $t$, the method uses the 
performance of the forecasts up through time $t-1$ to correct the current
forecast. We observe that MultiQT corrects the downward bias, particularly 
present in the upper quantiles, and the resulting forecasts achieve close to
perfect calibration. By improving the coherence of the forecasts with eventual
death counts, the use of a recalibration method such as MultiQT can improve the
quality of public communication about the expected trajectory of pandemics 
and help inform timely public health decisions regarding allocation of 
scarce resources and hospital staffing \citep{cramer2022evaluation}. We will
return to this COVID-19 forecasting application in Section
\ref{sec:experiments}.  

\begin{figure}[t!]
\begin{subfigure}{\textwidth} 
\centering
\begin{subfigure}{0.48\textwidth}
\includegraphics[width=\linewidth]{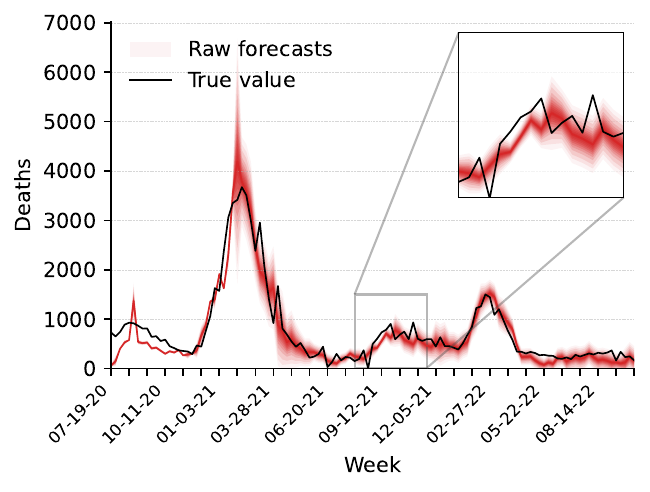}
\end{subfigure}%
\begin{subfigure}{0.23\textwidth}
\raisebox{1.65em}{
\includegraphics[width=\linewidth]{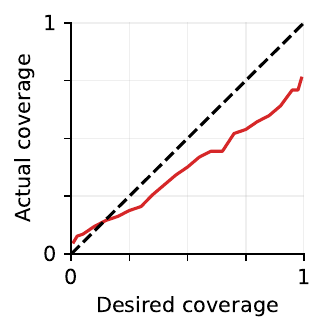}}
\end{subfigure}
\caption{Raw forecasts and their calibration.}
\label{fig:example_raw}
\end{subfigure}

\begin{subfigure}{\textwidth}
\centering 
\begin{subfigure}{0.48\textwidth}
\includegraphics[width=\linewidth]{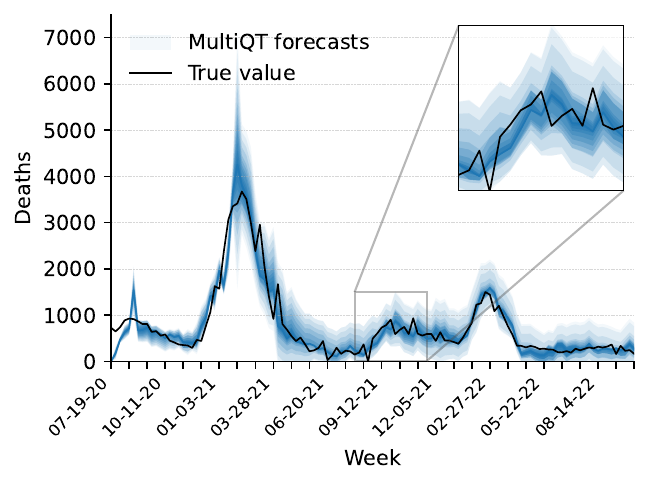}
\end{subfigure}%
\begin{subfigure}{0.23\textwidth}
\raisebox{1.65em}{
\includegraphics[width=\linewidth]{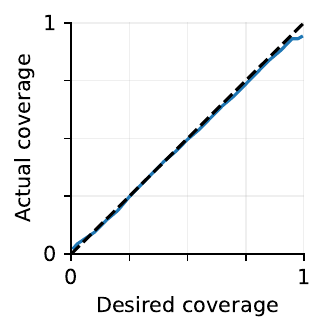}}
\end{subfigure}
\caption{Forecasts and calibration after applying MultiQT.}
\label{fig:example_calibrated}
\end{subfigure}

\caption{One-week-ahead forecasts of weekly COVID-19 deaths in California from
  July 11, 2020 to October 22, 2022 by forecaster \texttt{RobertWalraven-ESG},
  before (top) and after (bottom) applying MultiQT. Forecasts are made at 23
  levels, roughly equally-spaced in between 0.01 and 0.99. To visualize these
  forecasts, we plot colored bands where the lightest opacity connects the 0.01
  and 0.99 level forecasts, the next lightest connects the 0.025 and 0.975 level
  forecasts, and so on.}   
\label{fig:example}
\end{figure}

\subsection{Related work}

Online calibration of a single quantile in the presence of distribution shift
has been studied extensively in the context of online conformal prediction,
beginning with \cite{gibbs2021adaptive}. The central idea underlying many of
these methods is to run online gradient descent on the quantile loss, applied to an iterate either in $\alpha$-space \citep{gibbs2021adaptive} or in
$y$-space \citep{angelopoulos2023conformal}. The latter has the advantage of
leading to fewer infinite sets in general (as well as not requiring an expensive quantile
computation at each time step). Later developments in this line of work 
include ways to adaptively set the learning rate \citep{zaffran2022adaptive, 
  gibbs2024conformal}, extensions to 
losses besides coverage \citep{feldman2022achieving, lekeufack2024conformal},
approaches tailored to multi-horizon forecasting \citep{yang2024bellman,
  wang2024online} or that exploit error predictability
\citep{hu2025distribution}, and methods that consider strongly adaptive
regret \citep{bhatnagar2023improved, hajihashemi2024multi}. Gradient
equilibrium, proposed in \cite{angelopoulos2025gradient}, generalizes the
concept of online calibration to a setting with arbitrary losses, and in doing
so, generalizes the analysis of online gradient descent that underlies work on
online conformal prediction.

A complementary line of work on calibration uses Blackwell approachability and
related ideas, resulting in procedures that are generally more complex than
those based on gradient descent but also offer stronger (conditional)
guarantees. 
The basic idea underpinning many papers on calibration \citep{foster1999proof,
  foster2021forecast} and defensive forecasting \citep{vovk2005defensive,
  perdomo2025defense} is that certain properties defined in terms of
time-averages can be cast as special cases of Blackwell approachability. For
example, the convex set in Blackwell approachability can be defined to encode 
zero calibration error. \cite{gupta2022online} builds on these ideas to develop
algorithms for group-conditional quantile calibration, which are then applied to
online conformal prediction in \cite{bastani2022practical}. This was later
extended to the high-dimensional setting by \cite{noarov2023high}. 
While this framework could in principle accommodate multiple quantile
calibration without crossings, it lacks a practical implementation for this
problem setting, since it would require solving nontrivial inner optimization 
problems at each time step. In contrast, our approach is easily implementable
and wraps around any existing forecaster, albeit targeting a weaker goal 
of unconditional calibration. Also related are \cite{deshpande2023calibrated}
and \cite{marx2024calibrated}, which use Blackwell approachability to calibrate 
probabilistic forecasts that specify a distribution over $y_t$.  

Quantile prediction has a long history of study in statistics, dating back
to the seminal work of \cite{koenker1978regression}. Though it is traditionally
studied in the offline setting (with i.i.d.\ data), the problem of mitigating 
quantile crossing when jointly learning multiple quantiles has been present from
the start \citep{bassett1982empirical}. In the offline setting, solutions have been proposed in the
form of post-processing \citep{chernozhukov2010quantile, fakoor2023flexible},
constrained optimization \citep{liu2009stepwise}, or neural network learning
architectures that enforce monotonicity of the output vector
\citep{gasthaus2019probabilistic, park2022learning}.
In the online setting, \cite{zhang2024benefit} proposes a method that enforces
monotonicity but achieves only a no-regret guarantee and not a calibration
guarantee. \cite{li2025neural} make use of ideas from
\cite{angelopoulos2023conformal} to design a loss function for training a
forecaster that targets coverage with no quantile crossings, but in practice
their method still produces crossings (at roughly 10\% of time steps in their
experiments).  

Finally, our work relates to a broader literature on forecast recalibration,
which considers ways to improve the calibration of an existing forecaster 
\citep{brocklehurst1990recalibrating}
or an ensemble of forecasters \citep{hamill1997verification, raftery2005using,
  gneiting2013combining, vandendool2017probability}. 

\section{Methods}

In this section, we present our online method for generating calibrated,
distributionally consistent quantile forecasts given an arbitrary base forecaster. We do so by learning offsets that result in
calibrated forecasts when added to the base forecasts. All omitted proofs in
this and subsequent sections are deferred to Appendix \ref{sec:proofs_APPENDIX}
or \ref{sec:negative_results_APPENDIX} unless otherwise stated. 

\paragraph{Notation.} 

We use \smash{$\b_t = (b_t^{\alpha_1}, b_t^{\alpha_2}, \dots, b_t^{\alpha_m})
  \in \R^m$} to denote the vector of base forecasts at time $t$, where
\smash{$b_t^{\alpha}$} is the base forecast for level $\alpha$. We use
$\btheta_t \in \R^m$ to denote the offset vector at time $t$ (which we
adjust online) and $\q_t = \b_t + \btheta_t \in \R^m$ to denote the
corresponding vector of recalibrated forecasts at time $t$. As with the base
forecasts, we will often index elements of the offset and recalibrated forecast
vectors by the quantile level, as in \smash{$\theta_t^{\alpha}$} and
\smash{$q_t^{\alpha}$} for a level $\alpha$. We define \smash{$\cov_t^{\alpha} =
  \one\{y_t \leq q_t^{\alpha}\}$} to be the coverage indicator for the
$\alpha$-level forecast.  For a closed convex set $C \subseteq \R^d$, we use
\smash{$\Pi_C(x) = \argmin_{z \in C} \|x - z\|_2^2$} to denote the projection of
$x$ onto $C$. We define $\cK =\{x \in \R^d: x_1 \leq x_2 \leq \dots \leq x_d 
\}$ to be the $d$-dimensional isotonic cone, where the dimension $d$ can be understood from
context (in general, $d = m$). We refer to the projection \smash{$\Pi_{\cK}$} onto $\cK$ as
\emph{isotonic regression}.

\paragraph{Base forecasts.} 

We assume the base forecasts are distributionally consistent:
\[
b_t^{\alpha_1} \leq b_t^{\alpha_2} \leq \dots \leq b_t^{\alpha_m}, 
\]
for all $t$. These base forecasts can be constructed in any way, e.g.,
\smash{$b_t^{\alpha} = f_t^{\alpha}(x_t)$} where \smash{$f_t^{\alpha}$} is some
(possibly time-varying) predictor that optionally incorporates information from
features $x_t$ and is trained on past data $(x_s, y_s)$, $s < t$. In problems with no
base forecaster, we can set the base forecasts equal to zero
(i.e., \smash{$b_t^{\alpha} = 0$} for all $\alpha$ and $t$). If instead there
is a point forecaster that forecasts the mean or median $\mu_t$ at each time
$t$, we can set the base forecasts to this point forecast (i.e.,
\smash{$b_t^{\alpha} = \mu_t$} for all $\alpha$ and $t$).   

\medskip
We begin by presenting some relevant background on the quantile tracker, which
we then build on to present our proposed method, MultiQT.

\subsection{Background: quantile tracker} 

Given a desired coverage level $\alpha$, the \emph{quantile tracker (QT)}
method from \cite{angelopoulos2023conformal} works as follows. Given some
initial offset \smash{$\theta_1^{\alpha} \in \R$} and learning rate $\eta > 0$,
at each $t = 1,2,\dots$, we issue the adjusted forecast \smash{$q_t^{\alpha} =
  b_t^{\alpha} + \theta_t^{\alpha}$}, observe $y_t$, and then update the offset
according to:
\begin{equation}\label{eq:QT_update_rule}
\theta_{t+1}^{\alpha} = \theta_t^{\alpha} - \eta (\cov_t^{\alpha}-\alpha). 
\end{equation}
The update rule \eqref{eq:QT_update_rule} is intuitive: we increase the offset
by $\eta \alpha$ if we miscover, which makes it more likely we will cover at the
following time step, and decrease the offset by $\eta (1-\alpha)$ if we cover. 

The next result is from Proposition 1 of \cite{angelopoulos2023conformal}. 
It shows that the QT is guaranteed to achieve long-run coverage, as long as the
errors from the base forecaster are bounded.      

\begin{proposition}[\citealp{angelopoulos2023conformal}]
\label{prop:QT_guarantee}
Assume that \smash{$|y_t - b_t^{\alpha}| \leq R$} for all $t$ and some $R \geq  
0$. Then,  for all $T \geq 1$, the QT iterates \eqref{eq:QT_update_rule} satisfy the coverage error
bound
\[
\bigg|\frac{1}{T} \sum_{t=1}^T \cov^{\alpha}_t - \alpha \bigg| \leq
  \frac{2|\theta_1^{\alpha}| + R+\eta}{\eta T}.
\]
\end{proposition}

Consistent with this guarantee, the QT algorithm usually works well in
practice. Unfortunately, applying the QT updates separately to \emph{multiple} quantile
levels often results in crossed quantiles, which is undesirable. A natural
solution idea is to run QT separately for each level and then simply order the
forecasts at each time step before revealing them to the user. Two ways of enforcing ordering are by sorting the given vector of quantile forecasts, or by applying
isotonic regression. Perhaps surprisingly, neither one is able to achieve
calibration in general, as the next result shows.

\begin{proposition}
\label{prop:ordering_QT_fails}  
For a set $\cA$ of $m$ quantile levels, and for each $\alpha \in \cA$, let
\smash{$q_t^{\alpha}$} be obtained by the QT update rule 
\eqref{eq:QT_update_rule}. Given a map $G : \R^m \to \R^m$, let    
\[
\hat{\q}_t = G(q_t) \in \R^m
\]
be the vector obtained by using G to post process the vector \smash{$q_t =
  (q^{\alpha_1}, q^{\alpha_2}, \dots, q_t^{\alpha_m}) \in \R^m$} of QT forecasts
at time $t$. Then, for both \smash{$G(v) =  
  (v_{(1)}, \dots, v_{(m)})$}, which sorts the entries of its input, and 
\smash{$G(v) = \Pi_\cK(v)$}, which performs isotonic regression, there exists   
a set of quantile levels $\cA$ and sequence of target values and base
forecasts $(y_t,\b_t)$ with bounded errors (i.e., \smash{$|y_t - b_t^{\alpha}|$}
is bounded for all $\alpha$ and $t$) such that for any learning rate $\eta > 0$, there is an $\alpha \in \cA$ where  $\lim_{T  \to \infty} \frac{1}{T}
  \sum_{t=1}^T \one\{y_t \leq \hat{q}_t^{\alpha}\} \neq \alpha$ --- that is, the $\alpha$-level forecasts fail to achieve calibration.
\end{proposition}

The intuition for this result is simple: by \Cref{prop:QT_guarantee}, for each
$\alpha$, we know that the sequence \smash{$q_t^{\alpha}$} of QT iterates is
guaranteed to achieve long-run coverage $\alpha$. If we replace a nonvanishing
fraction of the values in this sequence with some arbitrary value, then we should
not expect the resulting sequence to still have coverage $\alpha$. This is
precisely what happens when crossings happen sufficiently often: whenever a crossing
occurs, applying $G$ maps \smash{$\hat{q}_t^{\alpha}$} to a value not equal to
\smash{$q_t^{\alpha}$} (under sorting, it gets mapped to \smash{$q_t^{\beta}$}
for some $\beta \neq \alpha$, and under isotonic regression, it gets mapped to
some local average). As a result, the long-run coverage of
\smash{$\hat{q}_t^{\alpha}$} will differ from that of \smash{$q_t^{\alpha}$}.
Based on this intuition, we construct a formal negative example in the appendix.  

\subsection{Multi-level quantile tracker}

We now describe our method, called the \emph{multi-level quantile tracker} 
(MultiQT), which adapts QT to the multiple quantile setting. This method is
simple and, as we will later show, has compelling theoretical guarantees and
strong empirical performance. At a high level, MultiQT maintains two vectors of
offsets: one hidden and one played. The hidden offsets, denoted
\smash{$\bttheta_t \in \R^m$}, do not generally result in ordered forecasts when
added to the base forecasts, but the played offsets, denoted $\btheta_t \in
\R^m$, do. MultiQT is described in Procedure \ref{proc:multiQT}.

\begin{procedure}
\label{proc:multiQT}
Choose some initial value \smash{$\bttheta_1 \in \R^m$} and learning rate $\eta
> 0$. For $t = 1,2,\dots$, repeat the following.   
\begin{enumerate}
\item Compute the played offset $\btheta_t = \Pi_{\cK-b_t}(\bttheta_t)$. 
\item Play the forecast $\q_t = \b_t + \btheta_t$. 
\item Observe $y_t$ and update the hidden offset: for each $\alpha \in \cA$,
\begin{equation}\label{eq:multiQT_update}
\ttheta_{t+1}^{\alpha} = \ttheta_t^{\alpha} - \eta (\cov_t^{\alpha}
- \alpha).
\end{equation}
\end{enumerate}
\end{procedure}

Note that steps 1 and 2 can be combined into a single step:
\begin{equation}\label{eq:multiQT_projection}
\q_t = \Pi_{\cK}(\b_t + \bttheta_t).
\end{equation}
This is equivalent because \smash{$\Pi_C(x+b)=b+\Pi_{C-b}(x)$} for any closed 
convex set $C \subseteq \R^d$ and vectors $x, b \in \R^d$ (where $C-b = \{x - b 
: x \in C\}$). 
Writing the MultiQT forecast $\q_t$ in this way makes it clear that it belongs
to $\cK$ and is thus distributionally consistent. However, when running MultiQT in 
practice, it is convenient to implement each iteration as
\eqref{eq:multiQT_projection} followed by \eqref{eq:multiQT_update}. When 
reasoning about its properties (calibration or regret), it is more convenient to 
use the form in Procedure \ref{proc:multiQT}. It is worth noting that each 
isotonic projection step \smash{$\Pi_{\cK}$} can be computed efficiently in
$O(m)$ time (where recall $m = |\cA|$ is the number of quantile levels), using
the pool adjacent violators algorithm (PAVA) \citep{ayer1955empirical,
  barlow1972statistical}.   

We highlight that in \eqref{eq:multiQT_update} the hidden offset vector is
updated based on the coverage induced by the played one: what appears in 
this update is \smash{$\cov_t^{\alpha} = \one\{y_t \leq b_t^{\alpha} + 
  \theta_t^{\alpha}\}$}, rather than \smash{$\one\{y_t \leq b_t^{\alpha} +  
  \ttheta_t^{\alpha}\}$}. More abstractly, the update takes a gradient step 
starting from the hidden offset but uses the gradient evaluated at the played
offset. As we will see, this combination (known generally as lazy gradient
descent) turns out to be crucial for achieving the desired calibration
guarantee.      

\paragraph{MultiQT with delayed feedback or lead time.} 
Procedure \ref{proc:multiQT} assumes that at each time $t$ we are able to
observe the outcome $y_t$ before making our next forecast $q_{t+1}$. However, there are settings where this is not the case. We model such settings using a general framework of \emph{delayed feedback}, in which the outcome associated with a forecast made at time $t$ is revealed only after the forecast is made at time $t+D$ for some constant delay $D \geq 0$. 

This framework naturally captures forecasting problems with a positive \emph{lead time}, defined as the number of time steps between forecast issuance and outcome realization. 
For example, in weekly COVID-19 death forecasting, a four-week-ahead forecast has a lead time of four and corresponds to a feedback delay of $D=3$. A lead time of one ($D = 0$) corresponds to the standard
MultiQT setting, whereas $D \geq 1$ can be understood as a delayed feedback
problem.
The lead time is also referred to as the forecast horizon.


In the delayed feedback setting with delay $D \geq 0$, we can run a modification
of MultiQT that is exactly like Procedure \ref{proc:multiQT} except the hidden
offset update in \eqref{eq:multiQT_update} is replaced with  
\begin{equation}\label{eq:multiQT_update_with_delay}   
\ttheta_{t+1}^{\alpha} = \ttheta_t^{\alpha} - \eta(\cov_{t-D}^{\alpha} -
\alpha)
\end{equation}
for $t > D$ and \smash{$\ttheta_{t+1}^{\alpha} = \ttheta_t^{\alpha}$}
for $t \leq D$. In other words, at time $t$ we update the hidden offset with
the (delayed) feedback observed at time $t$, except for the first $D$ time steps where no
feedback is observed. Compared to the original MultiQT update
\eqref{eq:multiQT_update}, the difference is that the coverage indicator used in
the above update corresponds to the forecast from time $t-D$, rather than time
$t$.     

\section{Gradient equilibrium} 
\label{sec:constrained_GEQ}

To show that MultiQT solves the problem of calibration without crossings, we
will first solve a more general problem we call \emph{constrained gradient 
  equilibrium} and then show that MultiQT is an instance of this general
solution (Figure \ref{fig:proof_structure}). Thinking about our problem at the  
more abstract gradient equilibrium level gives us a framework for cleanly
proving the desired calibration guarantee.    

\begin{figure}[H]
\centering
\begin{tikzpicture}[
  >=latex,
  every node/.style={font=\sffamily},
  concept/.style={draw, thick, rounded corners=6pt, fill=blue!5, minimum width=4.2cm, minimum height=1.1cm, align=center},
  conceptB/.style={concept, fill=green!5},
  arrow/.style={->, very thick},
  relabel/.style={font=\footnotesize}
]

\definecolor{royalblue}{HTML}{1E40AF}

\matrix[column sep=48mm, row sep=18mm]{
  \node[concept] (multiqt) {MultiQT}; & \node[concept] (lazygd) {Lazy gradient descent}; \\
  \node[conceptB] (calib) {Calibration\\without crossings}; & \node[conceptB] (geq) {Constrained GEQ}; \\
};

\draw[arrow] (multiqt.east) -- node[relabel, above, midway] {is an instance of} (lazygd.west);
\draw[arrow] (calib.east)   -- node[relabel, above, midway] {is an instance of} (geq.west);

\draw[arrow, royalblue]
  (lazygd.south) -- node[relabel, right, pos=0.52, xshift=2mm] {\shortstack{achieves$^*$\\(\Cref{prop:GEQ_lazyGD})}} (geq.north);

\draw[arrow, royalblue, dotted]
  (multiqt.south) -- node[relabel, right, pos=0.52, xshift=2mm] {\shortstack{achieves\\(inherits guarantee)}} (calib.north);

\node[overlay, anchor=north west, text=royalblue, font=\scriptsize]
  at ($(calib.south west)+(0,-.25cm)$)
  {$^*$under conditions on losses and constraints};

\end{tikzpicture}
\vspace{-10pt}
\caption{Illustration of the relationships between the MultiQT procedure, lazy 
  gradient descent, calibration without crossings, and constrained gradient
  equilibrium.}       
\label{fig:proof_structure}
\end{figure}
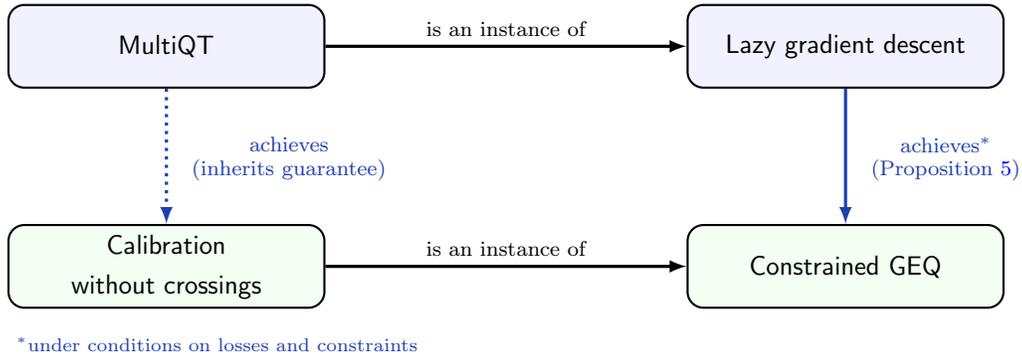

We begin by recalling the definition of gradient equilibrium, from
\cite{angelopoulos2025gradient}. 

\begin{definition}
A sequence of iterates $\btheta_t \in \R^d$, $t=1,2,\dots$ is said to satisfy
\emph{gradient equilibrium} (GEQ) with respect to a sequence of real-valued loss 
functions $\ell_t$, $t=1,2,\dots$ if   
\begin{equation}\label{eq:GEQ_definition}
\lim_{T \to \infty} \frac{1}{T} \sum_{t=1}^T \g_t(\btheta_t) = \mathbf{0},
\end{equation}
where each $\g_t(\theta)$ is a gradient (or subgradient) of $\ell_t$ at
$\theta_t$, assumed to be differentiable (or subdifferentiable) on its  
domain, and $\mathbf{0}$ is the $d$-dimensional zero vector. 
\end{definition}

There are many problems in online learning in which the iterates should be 
restricted to constraint sets (which may vary over time). This motivates
the following definition.

\begin{definition} 
A sequence of iterates $\btheta_t \in \R^d$, $t=1,2,\dots$ is said to satisfy
\emph{constrained gradient equilibrium} (constrained GEQ) with respect to a
sequence of loss functions $\ell_t$, $t=1,2,\dots$ and sets $C_t \subseteq
\R^d$, $t=1,2,\dots$ if \GEQ{} \eqref{eq:GEQ_definition} holds and,
additionally, $\theta_t \in C_t$ for all $t = 1,2,\dots$.   
\end{definition}

It is worth noting that, in optimization, it is common to reformulate a constraint set $C$ via a
characteristic function, denoted $I_C$ (zero on $C$ and $\infty$
otherwise) that is added to the loss $\ell_t$ and then treated as an unconstrained problem. However,  constrained \GEQ{} as we define it here is
\emph{not} the same as \GEQ{} with respect to the modified loss sequence
\smash{$\ell_t + I_{C_t}$}. This is an important distinction that we will revisit shortly.

\paragraph{Calibration without crossings as constrained GEQ.}

For $\alpha \in [0,1]$, let $\rho_{\alpha} : \R \times \R \to \R$ be the
$\alpha$-level quantile loss, where    
\begin{equation}\label{eq:quantile_loss}
\rho_{\alpha}(\hat y, y) = \begin{cases}
\alpha |y-\hat y| & \text{if $y \geq \hat y$} \\
(1-\alpha) |y - \hat y| & \text{otherwise}. 
\end{cases}
\end{equation}
Given a set of levels $\cA$ with $m = |\cA|$, and a vector of forecasts 
$\q \in \R^m$ at these levels, let \smash{$\rho_{\cA} : \R^m \times \R 
  \to \R$} be the aggregated quantile loss, where   
\begin{equation}\label{eq:aggregated_quantile_loss}
\rho_{\cA}(\q, y) = \sum_{\alpha \in \cA} \rho_{\alpha}(q^{\alpha}, y).
\end{equation}
Now, for each $t$, define a loss function $\ell_t$ on $\btheta_t \in \R^m$ that 
applies the aggregated quantile loss to $\q_t = \b_t + \btheta_t$ and $y_t$,
where $\b_t \in \R^m$ is a vector of base forecasts:
\begin{equation}\label{eq:multiqt_loss}
\ell_t(\btheta_t) = \rho_{\cA}(\b_t + \btheta_t, y_t) 
= \sum_{\alpha \in \cA} \rho_{\alpha}(b_t^{\alpha} + \theta_t^{\alpha}, y_t).  
\end{equation}
We will call \eqref{eq:multiqt_loss} the \emph{MultiQT loss}. A subgradient of
the MultiQT loss at $\btheta_t$ is 
\begin{equation}\label{eq:multiqt_gradient} 
\g_t(\btheta_t) =
\begin{bmatrix}   
\cov_t^{\alpha_1} - \alpha_1\\
\cov_t^{\alpha_2} - \alpha_2\\
\vdots \\    
\cov_t^{\alpha_m} - \alpha_m
\end{bmatrix}
\end{equation}
where we recall that \smash{$\cov_t^{\alpha} = \one\{y_t \leq q_t^{\alpha}\} = 
  \one\{y_t \leq b_t^{\alpha} + \theta_t^{\alpha}\}$}. To streamline
presentation, we will often refer to \eqref{eq:multiqt_gradient} as the
``gradient'' of the MultiQT loss. We now observe the following equivalence:    
\[
\lim_{T \to \infty} \frac{1}{T} \sum_{t=1}^T \g_t(\btheta_t) = \mathbf{0} \quad \iff
\quad \lim_{T \to \infty} \frac{1}{T} \sum_{t=1}^T \cov^{\alpha}_t = \alpha, \; 
\text{for all $\alpha \in \cA$}.   
\]
In other words, for a sequence of quantile forecasts $\q_t$, $ t= 1,2,\dots$ to 
be calibrated, it suffices to show that the offsets $\btheta_t$, $t = 1,2,\dots$
satisfy \GEQ{} with respect to the MultiQT loss defined in
\eqref{eq:multiqt_loss}.  

Furthermore, recall that our goal is to derive offsets that, once added to the base forecasts, yield forecasts that are not only calibrated but are also distributionally consistent. Setting the constraint set at time $t$ as
\begin{equation} \label{eq:multiqt_constraint}
C_t = \cK - \b_t,
\end{equation}
which is the isotonic cone shifted by the base forecast $\b_t$, ensures that the 
resulting forecast $\q_t$ does not have crossed quantiles. Thus, calibration 
without crossings is an instance of constrained \GEQ{}, for the losses and
constraints defined above.

\subsection{Background: gradient descent} 
\label{sec:gradient_descent}

Before solving the constrained \GEQ{} problem, we first consider the (unconstrained) \GEQ{} problem.
It turns out that we do not need to devise new algorithms in order to produce
iterates that satisfy \GEQ{}. Online gradient descent, a standard
algorithm in online learning, can also be used to solve the \GEQ{}
problem.

\paragraph{Gradient descent achieves GEQ.}

Given some initial point $\btheta_1 \in \R^d$ and learning rate $\eta > 0$, 
recall that \emph{online gradient descent}, which we will often simply call
gradient descent (GD), obtains iterates via 
\begin{equation}\label{eq:OGD_update}
\btheta_{t+1} = \btheta_t - \eta \g_t(\btheta_t),
\end{equation}
for $t=1,2,\dots$, where $\g_t(\btheta_t)$ is a (sub)gradient of the loss at 
$\btheta_t$. As explained in \cite{angelopoulos2025gradient}, we can rearrange   
\eqref{eq:OGD_update} to get $\g_t(\theta_t) = (\btheta_t - \btheta_{t+1})/\eta$
and then average over $t$ to yield  
\begin{equation}\label{eq:OGD_avg_gradient}
\frac{1}{T} \sum_{t=1}^T \g_t(\btheta_t) = \frac{\btheta_1 - \btheta_{T+1}}
{\eta T}.  
\end{equation}
Since $\btheta_1$ is chosen by us, it is bounded. Thus if we can bound
$\btheta_{T+1}$, this would imply a bound on the average gradient.   
\cite{angelopoulos2025gradient} show that a sufficient condition for
$\theta_{T+1}$ to be bounded or sublinear in $T$ is for the loss functions to
satisfy two conditions. The first is Lipschitzness; this is a standard 
condition, and we recall that a loss $\ell$ is said to be \emph{L-Lipschitz} if
for all $\btheta$, all of its subgradients $\g(\btheta)$ satisfy
$\|g(\theta)\|_2 \leq L$. The second is a new condition that they call restorativity,  which we describe below.     

\begin{definition}
A loss $\ell$ is said to be \emph{$(h, \phi)$-restorative}, for a constant $h 
\geq 0$ and nonnegative function $\phi$, if it has a subgradient $\g(\btheta)$
at each $\btheta$ which satisfies    
\begin{equation}\label{eq:restorativity}
\langle \btheta, \g(\btheta) \rangle \geq \phi(\btheta), \quad \text{whenever  
  $\|\btheta\|_2 > h$}, 
\end{equation}
where $\langle \u, \v \rangle = \u^\T \v$.
\end{definition}

Intuitively, restorativity \eqref{eq:restorativity} tells us that whenever the
iterates get too far from the origin, the negative gradient will push the
iterate back towards the origin. This can be seen most easily in the
one-dimensional setting where $\theta \in \R$ and $\phi(\theta) = 0$: in this
case, restorativity says that $\sign(\theta) = \sign(g(\theta))$ whenever
$|\theta| \geq h$. In other words, if $\theta$ is large in magnitude, then the
negative gradient will be anti-aligned with it, so following the negative
gradient will decrease the magnitude of $\theta$.

This intuition is formalized in the following result, which appears as
Proposition 5 in \cite{angelopoulos2025gradient}. It says that \GD{} produces 
iterates that grow slowly when the losses are restorative.

\begin{proposition}[\citealp{angelopoulos2025gradient}]
\label{prop:prop5_GEQ}
Assume that for each $t$, the loss function $\ell_t$ is $L$-Lipschitz and
$(h_t,0)$-restorative. Then, for all $T \geq 1$, the \GD{} iterates produced according to  
\eqref{eq:OGD_update} satisfy 
\[
\|\btheta_{T+1}\|_2 \leq \sqrt{\|\btheta_1\|_2^2 + \eta^2 L^2 T + 2 \eta L
  \sum_{t=1}^T h_t}.
\]
If $h_t$ is nondecreasing, then this implies 
\[
\bigg\| \frac{1}{T} \sum_{t=1}^T \g_t(\theta_t) \bigg\|_2 \leq
\frac{2\|\btheta_1\|_2}{\eta T} + \sqrt{\frac{L^2}{T} + \frac{2Lh_T}{\eta T}}, 
\]
which goes to zero as $T \to \infty$ as long as $h_T$ is sublinear in $T$. 
\end{proposition}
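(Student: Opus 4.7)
The plan is to first control $\|\btheta_{T+1}\|_2$ via a one-step squared-norm recurrence, and then combine that bound with the closed-form expression \eqref{eq:OGD_avg_gradient} for the average gradient. The main subtlety will be that restorativity only directly controls the cross term $\langle \btheta_t, \g_t(\btheta_t)\rangle$ when $\|\btheta_t\|_2 > h_t$; in the complementary regime one has to fall back on Cauchy--Schwarz, which gives a weaker but still tolerable bound that is exactly what contributes the $h_t$ term to the final rate. Once this case split is in hand, the rest is algebra.

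Concretely, expanding the OGD update \eqref{eq:OGD_update} gives
\begin{align*}
\|\btheta_{t+1}\|_2^2 = \|\btheta_t\|_2^2 - 2\eta \langle \btheta_t, \g_t(\btheta_t)\rangle + \eta^2 \|\g_t(\btheta_t)\|_2^2.
\end{align*}
The last term is at most $\eta^2 L^2$ by $L$-Lipschitzness. For the cross term, if $\|\btheta_t\|_2 > h_t$, then $(h_t, 0)$-restorativity gives $\langle \btheta_t, \g_t(\btheta_t)\rangle \geq 0$, so $-2\eta \langle \btheta_t, \g_t(\btheta_t)\rangle \leq 0$. Otherwise $\|\btheta_t\|_2 \leq h_t$, and Cauchy--Schwarz together with Lipschitzness yields $-2\eta \langle \btheta_t, \g_t(\btheta_t)\rangle \leq 2\eta \|\btheta_t\|_2 \|\g_t(\btheta_t)\|_2 \leq 2\eta L h_t$. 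In either case, $\|\btheta_{t+1}\|_2^2 \leq \|\btheta_t\|_2^2 + 2\eta L h_t + \eta^2 L^2$, and telescoping from $t=1$ to $T$ yields
\begin{align*}
\|\btheta_{T+1}\|_2^2 \leq \|\btheta_1\|_2^2 + \eta^2 L^2 T + 2\eta L \sum_{t=1}^T h_t,
\end{align*}
which after taking square roots is the first claimed bound.

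For the second inequality, I would apply \eqref{eq:OGD_avg_gradient} and the triangle inequality to get $\bigl\|\tfrac{1}{T}\sum_{t=1}^T \g_t(\btheta_t)\bigr\|_2 \leq (\|\btheta_1\|_2 + \|\btheta_{T+1}\|_2)/(\eta T)$, then substitute the bound on $\|\btheta_{T+1}\|_2$ and use subadditivity of the square root, $\sqrt{a+b} \leq \sqrt{a} + \sqrt{b}$, to peel off the $\|\btheta_1\|_2$ contribution from under the radical. The assumption that $h_t$ is nondecreasing gives $\sum_{t=1}^T h_t \leq T h_T$, and dividing through by $\eta T$ and simplifying collapses the expression to the stated $\frac{2\|\btheta_1\|_2}{\eta T} + \sqrt{L^2/T + 2L h_T/(\eta T)}$. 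This tends to $0$ provided $h_T/T \to 0$, i.e., whenever $h_t$ grows sublinearly in $t$.
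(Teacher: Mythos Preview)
Your proof is correct and follows essentially the same approach as the paper: the paper proves the more general delayed/constrained version (\Cref{prop:GEQ_lazyGD_with_delay}) via exactly this one-step squared-norm expansion, the same case split on $\|\btheta_t\|_2 \lessgtr h_t$ using restorativity versus Cauchy--Schwarz, telescoping, and then the same $\sqrt{a+b}\leq\sqrt a+\sqrt b$ and $\sum_t h_t\leq Th_T$ simplifications, recovering \Cref{prop:prop5_GEQ} as the special case $D=0$, $C_t=\R^d$.
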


When all loss functions $\ell_t$ are $(h,0)$-restorative, note that the rate at which \GEQ{}
is obtained in \Cref{prop:prop5_GEQ} is \smash{$1/\sqrt{T}$}.
\cite{angelopoulos2025gradient} show that \GEQ{} rates of order $1/T$ are
possible under stronger conditions, including when $\phi$ is lower bounded by a
positive constant (rather than zero, as assumed in \Cref{prop:prop5_GEQ}). They
also show that the one-dimensional case is special: when $d=1$, $L$-Lipschitzness and $(h,0)$-restorativity are sufficient for achieving the fast $1/T$ rate.

By the calculations at the start of this section 
\eqref{eq:quantile_loss}--\eqref{eq:multiqt_gradient} specialized to the
singleton $\cA = \{\alpha\}$, we can see that the QT update
\eqref{eq:QT_update_rule} is simply online gradient descent with respect  
to the loss \smash{$\ell_t(\theta) = \rho_{\alpha}(b_t^{\alpha}+\theta, y_t)$}.
This loss is Lipschitz with $L=1$ and, under bounded errors $|y_t -
b_t^{\alpha}| \leq R$, it can be shown to be $(R,0)$-restorative. Hence, 
the calibration guarantee for QT can be derived as a special case of
\Cref{prop:prop5_GEQ}. Indeed, the exact result in \Cref{prop:QT_guarantee}
(which shows calibration is achieved at the rate $1/T$) can be derived as a
special case of the one-dimensional refinement of \Cref{prop:prop5_GEQ}. We
refer to Corollary 1 of \cite{angelopoulos2025gradient}.  

\paragraph{Projected gradient descent does not achieve constrained GEQ.}

Perhaps the most common way to enforce iterate constraints is via projection.  
Now that we have seen \GD{} achieves gradient equilibrium (under some mild  
conditions), a natural first guess for achieving constrained gradient
equilibrium would be to run projected \GD{}.  Given closed convex constraint sets
$C_t$, $t=1,2,\dots$, an initial $\theta_1 \in C_1$, and learning rate
$\eta > 0$, \emph{projected \GD{}} obtains iterates via the update rule:
\begin{equation}\label{eq:PGD_update}
\btheta_{t+1} = \Pi_{C_{t+1}}(\btheta_t - \eta \g_t(\btheta_t)),
\end{equation}
for $t = 1,2,\dots$. Unfortunately, projected \GD{} does not guarantee
constrained \GEQ{} in general, and, in fact, provably fails to achieve our goal
of calibration without crossings.

To see why, first observe that we can view projected \GD{} as ordinary \GD{} on
the modified loss sequence \smash{$\tilde\ell_t = \ell_t + I_{C_t}$}, where 
\[
I_{C_t}(\btheta) = \begin{cases}
0 & \text{if $\btheta \in C_t$} \\ 
\infty & \text{otherwise}
\end{cases}
\]
is the characteristic function of $C_t$. Subgradients of \smash{$\tilde\ell_t$}
at $\btheta$ are of the form \smash{$\tilde{\g}_t = \g_t(\btheta) +
  \v_t(\btheta)$}, where $\v_t(\btheta)$ is a subgradient of \smash{$I_{C_t}$}
at $\btheta$, i.e., an element of the normal cone of $C_t$ at $\btheta$. 

Next, as noted in Appendix B of \cite{angelopoulos2025gradient}, due to the \GEQ{} guarantee of gradient descent (\Cref{prop:prop5_GEQ}), the projected gradient descent iterates will achieve \GEQ{} with respect to this modified sequence, meaning 
\smash{$\lim_{T \to \infty} \frac{1}{T} \sum_{t=1}^T \tilde{\g}_t(\btheta_t) =
  0$}.
This implies  
\[
\lim_{T \to \infty} \frac{1}{T} \sum_{t=1}^T \g_t(\btheta_t) = - \lim_{T \to
  \infty} \frac{1}{T} \sum_{t=1}^T \v_t(\btheta_t).  
\]
Since the right-hand side is not zero in general,
projected \GD{} is not guaranteed to achieve constrained \GEQ{}. The next 
proposition goes further and shows that projected \GD{} provably fails to solve
our calibration without crossings problem.

\begin{proposition}
\label{prop:PGD_fails}
There exists a set of levels $\cA$ and sequence of target values and base
forecasts $(y_t,\b_t)$ with bounded errors (i.e., \smash{$|y_t - b_t^{\alpha}|$} is
bounded for all $\alpha$ and $t$) such that for any learning rate $\eta > 0$,
projected \GD{} \eqref{eq:PGD_update}, with the gradient $\g_t$ as defined in   
\eqref{eq:multiqt_gradient} and constraint set $C_t$ as defined in
\eqref{eq:multiqt_constraint}, fails to achieve calibration: \smash{$\lim_{T  \to
    \infty} \frac{1}{T} \sum_{t=1}^T \one\{y_t \leq b_t^{\alpha} +
  \theta_t^{\alpha}\} \neq \alpha$} for some $\alpha \in \cA$. 
\end{proposition}

\subsection{Lazy gradient descent}

As we saw above, incorporating constraints into \GD{} in the standard way (via
direct projection at each step), fails to achieve constrained \GEQ{}. Another way
to incorporate constraints into online gradient descent is to use what are known
as lazy updates \citep{shalev2012online, hazan2019introduction}. Presented with the same constraint sets as in projected \GD{}, we implement lazy updates by maintaining two sequences: a
hidden sequence \smash{$\bttheta_t$} and a played sequence $\btheta_t$. 
More precisely, \emph{lazy online gradient descent} (lazy GD) begins with an initial point $\bttheta_1 \in C_1$ and learning rate $\eta > 0$, and obtains iterates via a two-step procedure: 
\begin{align}  
\label{eq:lazyGD_projection} 
\btheta_{t} &= \Pi_{C_{t}}(\bttheta_{t}), \\
\bttheta_{t+1} &= \bttheta_t - \eta \g_t(\btheta_t). 
 \label{eq:lazyGD_update}
\end{align}
``Lazy'' refers to how \eqref{eq:lazyGD_update} takes the gradient step starting 
from the hidden iterate \smash{$\bttheta_t$} rather than the played iterate 
$\btheta_t$. If we instead took the gradient step starting
from $\btheta_t$ in \eqref{eq:lazyGD_update}, then this would be equivalent to
projected \GD{} \eqref{eq:PGD_update}. 

The utility of the lazy updates can be seen immediately; by rearranging
\eqref{eq:lazyGD_update} and averaging over $t$, just as in unconstrained
gradient descent, we get
\begin{equation}
\label{eq:lazyGD_avg_gradient}
\frac{1}{T} \sum_{t=1}^T \g_t(\btheta_t) = \frac{\bttheta_1 - \bttheta_{T+1}}  
{\eta T}. 
\end{equation}
This calculation leverages the fact that in lazy \GD{} we have effectively
decoupled the updates of the hidden iterates from projection, and thus to track
the average gradient, we can track total movement in the hidden sequence. The
only difference from the previous result \eqref{eq:OGD_avg_gradient} for ordinary
\GD{} is that the hidden iterates appear on the right-hand side in
\eqref{eq:lazyGD_avg_gradient}, rather than the played iterates. 

Thus, to bound the average gradients of lazy gradient descent, we want to bound the hidden iterates. Recall that to control the growth of the played iterates in standard gradient descent, we controlled the inner product between $\btheta_t$ and $\g_t(\btheta_t)$ via restorativity. To control the growth of the hidden iterates of lazy gradient descent, the relevant inner product is now between \smash{$\bttheta_t$} and $\g_t(\btheta_t)$.
Note carefully that we seek to measure the alignment of an
iterate \smash{$\bttheta_t$} with the gradient at a \emph{different} point: the
result $\btheta_t$ of projection onto the constraint set $C_t$. Restorativity of
the loss alone is not sufficient for this purpose. We need to introduce an
additional condition that controls the joint behavior of the loss and the
constraint set.

\begin{definition}
For a loss $\ell$ and set $C$, the pair $(\ell, C)$ is said to satisfy
\emph{inward flow} if there is a subgradient $\g(\btheta)$ at each $\btheta$
that satisfies    
\begin{equation}
\label{eq:inward_flow}
-\g(\btheta) \in T_C(\btheta), \quad \text{for $\btheta\in\mathrm{bd}(C)$}, 
\end{equation}
where $T_C(x)$ denotes the tangent cone of $C$ at $x$, defined as
\[
T_C(x) = \mathrm{cl}\big\{ y : \text{there exists $\delta > 0$ such that 
  $x+\varepsilon y \in C$ for all $\varepsilon \in (0,\delta]$} \big\}.
\]
In the above, $\mathrm{bd}(S)$ denotes the boundary of a set $S$, and
$\mathrm{cl}(S)$ denotes the closure of $S$.
\end{definition}

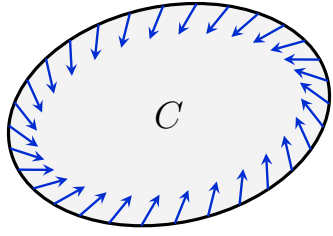
\begin{figure}[t]
\centering
\begin{tikzpicture}[scale=2.4, >=stealth]

\def\a{.9} 
\def\b{.6} 
\def\rot{12} 

\begin{scope}[rotate=\rot]
  \draw[fill=gray!20, fill opacity=0.45, draw=black, line width=1.2pt]
  (0,0) ellipse ({\a} and {\b});

  \foreach \th in {0,12,...,348} {
    \pgfmathsetmacro{\x}{\a*cos(\th)}
    \pgfmathsetmacro{\y}{\b*sin(\th)}

    \pgfmathsetmacro{\nx}{- \x/(\a*\a)}
    \pgfmathsetmacro{\ny}{- \y/(\b*\b)}
    \pgfmathsetmacro{\nlen}{sqrt(\nx*\nx+\ny*\ny)}
    \pgfmathsetmacro{\ux}{\nx/\nlen}
    \pgfmathsetmacro{\uy}{\ny/\nlen}

    \pgfmathsetmacro{\txraw}{- \a*sin(\th)}
    \pgfmathsetmacro{\tyraw}{  \b*cos(\th)}
    \pgfmathsetmacro{\tlen}{sqrt(\txraw*\txraw+\tyraw*\tyraw)}
    \pgfmathsetmacro{\tx}{\txraw/\tlen}
    \pgfmathsetmacro{\ty}{\tyraw/\tlen}

    \pgfmathsetmacro{\alpha}{0.95}  
    \pgfmathsetmacro{\beta}{0.55}   

    \pgfmathsetmacro{\vx}{\alpha*\ux + \beta*\tx}
    \pgfmathsetmacro{\vy}{\alpha*\uy + \beta*\ty}
    \pgfmathsetmacro{\vlen}{sqrt(\vx*\vx+\vy*\vy)}
    \pgfmathsetmacro{\vux}{\vx/\vlen}
    \pgfmathsetmacro{\vuy}{\vy/\vlen}

    \pgfmathsetmacro{\L}{0.2}

    \draw[->, line width=0.9pt, blue!65!teal]
      (\x,\y) -- ({\x + \L*\vux}, {\y + \L*\vuy});
  }

  \node at (0,0) {\Large $C$};
\end{scope}

\end{tikzpicture}
\vspace{-25pt}
\caption{Visualization of inward flow, with the arrows representing the negative 
  gradient $-g$ evaluated at different points on the boundary of the constraint
  set $C$.}      
\label{fig:inward_flow}
\end{figure}

Informally, inward flow \eqref{eq:inward_flow} says that if we start at any
$\btheta$ on the boundary of the constraint set and take an arbitrarily small
step in the direction of the negative gradient, then this will keep us within
the constraint set. In other words, following the direction of steepest descent
will lead us inward, ``flowing'' further into the constraint set. Figure
\ref{fig:inward_flow} provides a visualization.  


Combining both restorativity and inward flow, we are able to establish the
following result.   

\begin{proposition}
\label{prop:GEQ_lazyGD}
Assume that for each $t$, the loss function $\ell_t$ is $L$-Lipschitz and
$(h_t,0)$-restorative, the set $C_t$ is closed and convex, and the pair
$(\ell_t, C_t)$ satisfies inward flow. Then, for all $T \geq
1$,  the lazy \GD{} iterates produced by
\eqref{eq:lazyGD_projection} and \eqref{eq:lazyGD_update} satisfy 
\[
\|\bttheta_{T+1}\|_2 \leq \sqrt{\|\bttheta_1\|_2^2 + \eta^2 L^2 T + 2 \eta L
  \sum_{t=1}^{T} h_t}.
\]
If $h_t$ is nondecreasing, then this implies  
\[
\bigg\| \frac{1}{T} \sum_{t=1}^T \g_t(\btheta_t) \bigg\|_2 \leq
\frac{2\|\bttheta_1\|_2 }{\eta T} + \sqrt{\frac{L^2}{T} + \frac{2 L h_T}{\eta
    T}}, 
\]
which goes to zero as $T \to \infty$ as long as $h_T$ is sublinear in $T$.
\end{proposition}

We remark that \Cref{prop:prop5_GEQ} can be recovered as a special case of the   
above result: \GD{} is an instance of lazy \GD{} where the constraint set at 
every time $t$ is $C_t = \R^d$, thus the projection is the identity, and 
\smash{$\btheta_t = \bttheta_t$}. Since $\R^d$ has no boundary, inward flow is 
trivially satisfied, and in this way \Cref{prop:GEQ_lazyGD} exactly reduces to 
\Cref{prop:prop5_GEQ}.        

The above result can also be extended to the delayed feedback setting. With
delay $D \geq 0$, we can generalize lazy \GD{} and update the hidden iterates
according to:
\begin{equation}
\label{eq:lazyGD_update_with_delay}
\bttheta_{t+1} = \bttheta_t - \eta g_{t-D}(\btheta_{t-D}), 
\end{equation}
where we set $\g_t(\btheta_t)= 0$ for $ t \leq 0$. We maintain the projection
step \eqref{eq:lazyGD_projection} for obtaining the played iterates.     

\begin{proposition}
\label{prop:GEQ_lazyGD_with_delay}
Under the conditions of \Cref{prop:GEQ_lazyGD}, in the setting with feedback delay $D \geq 0$, the lazy \GD{} iterates produced by \eqref{eq:lazyGD_projection} and \eqref{eq:lazyGD_update_with_delay} satisfy    
\[
\|\bttheta_{T+D+1}\|_2 \leq \sqrt{\|\bttheta_1\|_2^2 + \eta^2 L^2 (2D+1)T + 
2 \eta L \sum_{t=1}^T h_t}.
\]
If $h_t$ is nondecreasing, then this implies 
\[
\bigg \|\frac{1}{T} \sum_{t=1}^T \g_t(\btheta_t) \bigg\|_2 \leq
\frac{2\|\bttheta_1\|_2 }{\eta T} + \sqrt{\frac{L^2(2D+1)}{T} + \frac{2 L
    h_T}{\eta T}},
 \]
which goes to zero as $T \to \infty$ as long as $h_T$ is sublinear in $T$. 
\end{proposition}

\paragraph{Summary of positive and negative results on constrained GEQ.} 

In Table \ref{tab:method_comparison} we review the results we have
established thus far on constrained \GEQ{} from Propositions
\ref{prop:ordering_QT_fails}, \ref{prop:PGD_fails}, and \ref{prop:GEQ_lazyGD}. 
Post hoc projection is the method from \Cref{prop:ordering_QT_fails}, with $G =
\Pi_\cK$; recall in that proposition we showed it fails to attain calibration
without crossings, and hence it fails to achieve constrained \GEQ{} in general.
Projected \GD{} similarly fails according to \Cref{prop:PGD_fails}, whereas
lazy \GD{} achieves constrained \GEQ{} (under restorativity and inward flow) by   
\Cref{prop:GEQ_lazyGD}.  

\begin{table}[H]
\caption{Summary of results on online methods that incorporate gradient
  updates and projection onto constraints. Note that the post hoc projection and
  projected \GD{} methods are rewritten here using a hidden sequence to make the differences between methods more salient.}  
\label{tab:method_comparison}
\centering 
\definecolor{lightgray}{RGB}{245,245,245}
\tcbset{box align=base}
\renewcommand{\arraystretch}{1.4}
\setlength{\tabcolsep}{8pt}
\begin{tabular}{>{\RaggedRight\arraybackslash}p{3.5cm} c c c}
\toprule
 & \textbf{\underline{Post hoc projection}} &
   \textbf{\underline{Projected GD}} &
   \textbf{\underline{Lazy GD}} \\ [0.5em]  

\textbf{Projection} &
\multicolumn{3}{c}{
  \tcbox[
    colframe=black!70,
    colback=black!10,
    boxrule=0.5pt,
    arc=2pt,
    left=136pt, right=136pt, top=1pt, bottom=1pt,
    boxsep=1pt,
    width=\dimexpr\linewidth-2\tabcolsep\relax
  ]{$\theta_t = \Pi_{C_t}(\tilde{\theta}_t)$}
} \\
[0.4em]  

\textbf{Hidden update} &
\tcbox[
  colframe=black!70,
  colback=lightgray,
  boxrule=0.5pt,
  arc=2pt,
  left=4pt, right=4pt, top=0pt, bottom=0pt
]{$\tilde{\theta}_{t+1} = \tilde{\theta}_t - \eta_t g_t(\tilde{\theta}_t)$}
&
\tcbox[
  colframe=black!70,
  colback=lightgray,
  boxrule=0.5pt,
  arc=2pt,
  left=4pt, right=4pt, top=0pt, bottom=0pt
]{$\tilde{\theta}_{t+1} = \theta_t - \eta_t g_t(\theta_t)$}
&
\tcbox[
  colframe=black!70,
  colback=lightgray,
  boxrule=0.5pt,
  arc=2pt,
  left=4pt, right=4pt, top=0pt, bottom=0pt
]{$\tilde{\theta}_{t+1} = \tilde{\theta}_t - \eta_t g_t(\theta_t)$}
\\



\textbf{Enforces constraint?} & 
\tick & \tick & \tick \\

\textbf{Achieves GEQ?} &   
\cross & \cross & $\;\;\tick^*$ \\
\bottomrule
\end{tabular}

\begin{tablenotes}
\footnotesize 
\item[] *under restorativity and inward flow 
\end{tablenotes}
\end{table}

Why does lazy \GD{} succeed in achieving constrained \GEQ{}, while the other two methods
fail? Here we give some intuition. First, \emph{post hoc projection discards
  current information}, as feedback from the played iterate is never
incorporated into subsequent updates. Specifically, the observed gradient  
$\g_t(\btheta_t)$ is not used to inform future updates, and this turns out to be 
problematic when the goal is to drive the average of such gradients to zero.  
Second, \emph{projected gradient descent discards past information}, since the
hidden update does not depend on the hidden iterate \smash{$\bttheta_t$}. The
hidden iterate encodes accumulated knowledge about the gradients of interest
over the sequence thus far (recall \eqref{eq:lazyGD_avg_gradient}), and
``forgetting'' this information again turns out to be problematic. Observe that 
\emph{lazy gradient descent combines both sources of information:} it preserves 
past knowledge by starting its update from \smash{$\bttheta_t$}, while at the
same time incorporating current feedback via the gradient evaluated at
\smash{$\btheta_t$}. This blend of retaining history and responding to 
present information is what distinguishes lazy gradient descent and enables it
to achieve constrained gradient equilibrium.    


\section{MultiQT theory}
\label{sec:multiQT_guarantees}

Having introduced the framework of constrained gradient equilibrium in the last
section, we are now ready to present the theoretical guarantees for
MultiQT. Recall the proof roadmap illustrated in Figure
\ref{fig:proof_structure}, and note that we have already shown (i) calibration
without crossings is an instance of constrained \GEQ{}, and (ii) lazy \GD{}
solves constrained \GEQ{} problems that satisfy Lipschitz, restorativity, and
inward flow conditions.  What remains to be shown are (iii) MultiQT is the
appropriate instantiation of lazy \GD{} for the problem of calibration without
crossings, and (iv) the calibration without crossings problem satisfies
the needed conditions (Lipschitz, restorativity, and inward flow). We address
(iii) in the next paragraph, and (iv) in the following subsection. After this, a
calibration guarantee for MultiQT will follow directly from the \GEQ{} theory developed in the previous section. 

\paragraph{MultiQT is lazy gradient descent.} 
 
This follows directly from the calculations already given at the start of
Section \ref{sec:constrained_GEQ}. Referring back to Procedure
\ref{proc:multiQT}, we can see that the hidden update \eqref{eq:multiQT_update}
is equivalent to \eqref{eq:lazyGD_update} with respect to the MultiQT gradient 
in \eqref{eq:multiqt_gradient}, and the played update is equivalent to
projection onto $C_t$ as in \eqref{eq:multiqt_constraint} (indexed slightly
differently because Procedure \ref{proc:multiQT} is clearer in the
context of forecasting).  

\subsection{Calibration guarantee}

It remains to show that the MultiQT losses and constraints satisfy the
Lipschitz, restorativity, and inward flow conditions. Lipschitzness is
straightforward to show: examining the gradient in \eqref{eq:multiqt_gradient},
we see that
\[ 
\|\g_t(\btheta)\|_2 = \sqrt{\sum_{\alpha \in \cA} (\cov_t^{\alpha} - \alpha)^2} 
\leq \sqrt{\sum_{\alpha \in \cA} 1} = \sqrt{m}, 
\]
so each loss is $\sqrt{m}$-Lipschitz. 

The second condition, restorativity, is satisfied by the MultiQT losses as long
as the errors between the base forecast and target values are bounded, as the 
following lemma establishes.

\begin{lemma}
\label{lemma:multiqt_cond1}
Assume that \smash{$|y_t - b_t^{\alpha}| \leq R$} for all $\alpha$.
Let \smash{$d_{\cA} = \min_{\alpha \in\cA} \min\{\alpha, 1-\alpha\}$} be the 
minimum distance between any level in $\cA$ and the boundary of $[0,1]$.    
Then the MultiQT loss defined in \eqref{eq:multiqt_loss} is $(h,0)$-restorative
for any \smash{$h \geq Rm^{3/2} / d_{\cA}$}, where recall $m = |\cA|$.
\end{lemma}

We next verify the third condition, inward flow, for the MultiQT loss and (shifted)
isotonic cone. 

\begin{lemma}
\label{lemma:multiqt_cond2} 
The MultiQT loss defined in \eqref{eq:multiqt_loss}, with gradient in
\eqref{eq:multiqt_gradient}, and the constraint $C_t$ defined in
\eqref{eq:multiqt_constraint}, together satisfy inward flow:
\smash{$-\g_t(\btheta) \in T_{C_t}(\btheta)$} for all $\btheta$ on the boundary 
of $C_t$. 
\end{lemma}

Before moving on, we reflect on the above two lemmas. While restorativity of the 
MultiQT loss is to be expected based on results from the single quantile case
\citep{angelopoulos2025gradient}, the fact that the MultiQT loss satisfies
inward flow over the isotonic cone is perhaps more surprising. Inward flow is
highly nontrivial, and requires the gradients of the loss to ``cooperate'' with
the geometry of the constraint set. This can fail to hold even in seemingly 
natural optimization problems, which is a point we return to in Section \ref{sec:discussion}. 

We now state the main result for MultiQT, which follows directly from the
previous results.  

\begin{theorem}
\label{thm:calibration}
Assume that \smash{$|y_t - b_t^{\alpha}| \leq R$} for all $\alpha,t$. Then, for all $T \geq 1$, the
MultiQT iterates from Procedure \ref{proc:multiQT} satisfy the $\ell_2$ coverage
error bound     
\[
\sqrt{\sum_{\alpha \in \cA} \bigg( \frac{1}{T} \sum_{t=1}^T \cov_t^{\alpha} -
  \alpha \bigg)^2} \leq \frac{2\|\bttheta_1\|_2}{\eta T} + \sqrt{\frac{m}{T} +
  \frac{2Rm^2}{\eta d_{\cA}T}},
\]
where recall $m = |\cA|$, and \smash{$d_{\cA}$} is as in
\Cref{lemma:multiqt_cond1}. Since $\|x\|_\infty \leq \|x\|_2$ for any vector $x \in
\R^m$, the right-hand side above is also an upper bound for \smash{$\big| \frac{1}{T}
  \sum_{t=1}^T \cov_t^{\alpha} - \alpha \big|$}, for each $\alpha \in \cA$.
\end{theorem}

\begin{proof}
We apply \Cref{prop:GEQ_lazyGD} to the current problem setting. Its conditions 
are verified by Lemmas \ref{lemma:multiqt_cond1} and \ref{lemma:multiqt_cond2},
with \smash{$L = \sqrt{m}$} and \smash{$h = Rm^{3/2} / d_{\cA}$}. 
\end{proof}

\Cref{thm:calibration} tells us that MultiQT is guaranteed to achieve
calibration, as described in \eqref{eq:long_run_coverage}. Moreover, the
projection step ensures that the forecasts satisfy the distributional
consistency property from \eqref{eq:no_crossing}. Thus, we have shown that the 
MultiQT method is guaranteed to satisfy our initial desiderata. This is true
even in the delayed feedback setting, as the next generalization shows. 

\begin{theorem}
\label{thm:calibration_with_delay}
Under the conditions of \Cref{thm:calibration}, in the setting with feedback delay $D \geq 0$, the MultiQT iterates from Procedure
\ref{proc:multiQT} with the modification in \eqref{eq:multiQT_update_with_delay}
satisfy   
\[
\sqrt{\sum_{\alpha \in \cA} \bigg( \frac{1}{T} \sum_{t=1}^T \cov_t^{\alpha} -
  \alpha \bigg)^2} \leq \frac{2\|\bttheta_1\|_2}{\eta T} +
\sqrt{\frac{m(2D+1)}{T} + \frac{2Rm^2}{\eta d_{\cA}T}}.
\]
\end{theorem}

\begin{proof}
We apply \Cref{prop:GEQ_lazyGD_with_delay} to the current problem
setting. Its conditions are again verified by Lemmas \ref{lemma:multiqt_cond1}
and \ref{lemma:multiqt_cond2}.   
\end{proof}

We can see from the above theorem that the bound worsens with 
increasing delay, which is unsurprising. 

\subsection{Regret guarantee}
\label{sec:regret_guarantee}

In this subsection, we provide a regret guarantee for MultiQT with respect to the MultiQT loss (the aggregated quantile loss). Beyond being the loss we take gradient updates with respect to, its relevance can be motivated by
the fact that it admits a decomposition into terms that can be interpreted as
emphasizing calibration and sharpness. To be precise, suppose that the set $\cA$
of quantile levels is symmetric around 1/2, and can therefore be written as
\[
\cA = \bigcup_{\beta \in \cB} \{\beta/2,\, 1 - \beta/2\},
\]
for some set $\cB \subset [0, 1/2]$. Then it can be shown \citep{bracher2021evaluating} that the
aggregated quantile loss \smash{$\rho_{\cA}$} in
\eqref{eq:aggregated_quantile_loss} has the following alternative
representation: 
\begin{equation}
\label{eq:quantile_loss_decomp}
\rho_{\cA}(\q, y) = \sum_{\beta \in \cB} \bigg[ \underbrace{\mathrm{dist}(y,  
  [q_{\beta/2}, q_{1-\beta/2}]) \vphantom{\frac{1}{2}}}_{\text{``calibration''}}  
  \,+\, \underbrace{\frac{\beta}{2} (q_{1-\beta/2} -
  q_{\beta/2})}_{\text{``sharpness''}} \bigg],    
\end{equation}
where $\mathrm{dist}(y,
[a,b])$ is zero if $y$ lies inside $[a,b]$, and otherwise equals the distance 
to the closest endpoint.
The expression on the right-hand side corresponds to the \emph{weighted interval score} of a collection of equi-tailed prediction intervals \smash{$[q_{\beta/2},
q_{1-\beta/2}]$}, $\beta \in \cB$. In each summand, the first term---which
measures the distance of the target $y$ to the interval \smash{$[q_{\beta/2},
  q_{1-\beta/2}]$}---can be interpreted as a calibration penalty,
whereas the second term---which measures the length of the interval---can be interpreted as a sharpness penalty. 

Given that we have already shown via \GEQ{} theory that the MultiQT method achieves
calibration (which as we have defined it, means long-run coverage per quantile
level), the regret theory below can be interpreted in light of the decomposition
\eqref{eq:quantile_loss_decomp} as saying that MultiQT-adjusted forecasts also
encourage sharpness, i.e., they give rise to prediction intervals that are as
concentrated and informative as possible. Moreover, the quantile loss is a
proper scoring rule for quantile forecasts \citep{gneiting2007strictly,
gneiting2023model}, commonly used in the applied forecasting community, and
regret results with respect to quantile loss may therefore be meaningful in
their own right.

We now turn to our regret guarantee for MultiQT. We will study  
\[
\frac{1}{T} \sum_{t=1}^T \ell_t(\btheta_t) - \inf_{\btheta \in \R^m} 
\frac{1}{T} \sum_{t=1}^T \ell_t(\btheta),  
\]
called the (average) \emph{regret} of MultiQT iterates $\btheta_t$ with respect
to the MultiQT losses $\ell_t$ defined in \eqref{eq:multiqt_loss}. The analysis of regret in
this setting is somewhat nonstandard, because of the time-varying constraints
$C_t$, $t = 1,2,\dots$. In the appendix we derive a more general regret bound for
problems with constraints satisfying inward flow, of which the
following is a consequence.   

\begin{theorem}
\label{thm:multiQT_regret}
Assume that \smash{$|y_t - b_t^{\alpha}| \leq R$} for all $\alpha$ and $t$ and define
\smash{$\ell_t(\btheta) = \rho_{\cA}(b_t+\btheta, y_t)$}. Then, for all $T \geq 1$, the MultiQT iterates
from Procedure \ref{proc:multiQT} satisfy the regret bound     
\begin{equation}
\label{eq:multiQT_regret1}
\frac{1}{T} \sum_{t=1}^T \ell_t(\btheta_t) - \inf_{\btheta \in \R^m} 
\frac{1}{T} \sum_{t=1}^T \ell_t(\btheta) \leq \frac{\|\bttheta_1\|_2^2}{\eta T}
+ \frac{R^2 m}{\eta T} + \frac{\eta L^2}{2}.
\end{equation}
\end{theorem}

When \smash{$\bttheta_1 = 0$}, the learning rate that minimizes the right-hand
side in \eqref{eq:multiQT_regret1} is \smash{$\eta = R \sqrt{2m}/(L \sqrt{T})$};
plugging this in gives the bound  
\begin{equation}
\label{eq:multiQT_regret2}
\frac{1}{T} \sum_{t=1}^T \ell_t(\btheta_t) - \inf_{\btheta \in \R^m}
\frac{1}{T} \sum_{t=1}^T \ell_t(\btheta) \leq \frac{RL \sqrt{2m}}{\sqrt{T}}.  
\end{equation}
This is a no-regret result: the right-hand side converges to zero as $T \to
\infty$, at the rate \smash{$1/\sqrt{T}$}.  
Note that we can take as the comparator the vector of all zeros ($\btheta = \mathbf{0}$),
hence \eqref{eq:multiQT_regret2}, or more generally \eqref{eq:multiQT_regret1},
also bounds the excess average quantile loss suffered by MultiQT compared to
\emph{no adjustment}, i.e., compared to that incurred by the original base
forecasts. 

Furthermore, the above result bounds the regret of MultiQT
iterates compared to the vector $\btheta^*$ of empirical quantiles in hindsight. More precisely, each entry \smash{$\btheta^{*,\alpha}$} is an 
$\alpha$-level empirical quantile of \smash{$y_t - b_t^{\alpha}$}, 
$t = 1,\dots,T$. In fact, $\btheta^*$ is the optimal comparator --- that is, it minimizes the average loss through time $T$.

The next theorem generalizes the previous one to the setting of delayed feedback.    

\begin{theorem}
\label{thm:multiQT_regret_with_delay}
Under the conditions of \Cref{thm:multiQT_regret}, in the setting with feedback delay $D \geq 0$, the MultiQT iterates from Procedure
\ref{proc:multiQT} with the modification in \eqref{eq:multiQT_update_with_delay} 
satisfy
\[
\frac{1}{T} \sum_{t=1}^T \ell_t(\btheta_t) - \inf_{\btheta \in \R^m} 
\frac{1}{T} \sum_{t=1}^T \ell_t(\btheta) \leq \frac{\|\bttheta_1\|_2^2}{\eta T}
+ \frac{R^2 m}{\eta T} + \frac{\eta (2D+1) L^2}{2}.
\]
\end{theorem}

\subsection{Calibration-regret tradeoff?}

When the learning rate $\eta$ is chosen appropriately, the bounds in
\Cref{thm:calibration} and \Cref{thm:multiQT_regret} guarantee that the
$\ell_2$ calibration error and regret, respectively, vanish at the rate
\smash{$1/\sqrt{T}$}. However, the choice of learning rate
is pulled in opposite directions by these results: the theoretical bounds tell us that calibration improves with larger $\eta$, whereas regret improves with smaller $\eta$.

To balance the bounds provided in these theorems, we can identify the
dominant terms: \smash{$O(1/\sqrt{\eta T})$} in \Cref{thm:calibration}, versus
$O(\eta)$ in \Cref{thm:multiQT_regret}. Equating these two leads to the choice
of learning rate \smash{$\eta = O(T^{-1/3})$}, which then yields a
\smash{$O(T^{-1/3})$} bound on both calibration error and regret.

This leads to an interesting question: is this tradeoff fundamental?
That is, must any method necessarily trade off calibration and regret (as we have
defined them here)? And if so, is \smash{$O(T^{-1/3})$} the optimal common rate
at which they both can be controlled?

More refined results on MultiQT calibration error, which we will describe later
in the discussion, suggest that the answer to the latter question may be no in
general, and most certainly no in a more specialized case. When the base
forecasts are point forecasts (i.e., \smash{$b_t^\alpha = \mu_t$} for all
$\alpha$ and $t$), we can obtain faster rates for the calibration error of
MultiQT, with dominant term $O(1/(\eta T))$; balancing this with the regret
bound leads to a choice of learning rate \smash{$\eta = O(T^{-1/2})$}, which
provides a \smash{$O(T^{-1/2})$} bound on calibration error and regret.  Whether
this can be extended beyond point forecasts is an open question, as is the
general tradeoff between calibration and regret (or \GEQ{} error and regret,
even more generally).

\section{Experiments}
\label{sec:experiments}

We apply the MultiQT method to two real forecasting datasets relating to
COVID-19 deaths and renewable energy production.\anonymized{\footnote{Code for reproducing our experiments is available at \url{https://github.com/tiffanyding/multiQT}.}} To evaluate calibration, we
will investigate plots of the actual (empirical) coverage 
\[
\widehat{\mathrm{cov}}^{\alpha} = \frac{1}{T} \sum_{t=1}^T \cov_t^{\alpha}
\]
versus the desired (nominal) coverage $\alpha$, over the given set $\cA$ of
quantile levels. We will also examine the $\ell_1$ calibration error, normalized
by the number of levels $m = |\cA|$, defined as   
\[
\text{Calibration error} = \frac{1}{m} \sum_{\alpha \in \cA}
|\widehat{\mathrm{cov}}^{\alpha} - \alpha|.
\]
In the applied forecasting literature, it is common to measure miscalibration by 
first computing the probability integral transform (PIT) values associated with
the forecast distributions and target values over time and then reporting the 
entropy of these PIT values \citep{gneiting2007probabilistic,
  rumack2022recalibrating}. As this metric is not specific to quantile forecasts and requires a conversion to the cumulative density function, we
primarily study calibration error as defined in the above display, but we
provide results using PIT entropy in Appendix
\ref{sec:pit_entropy_results_APPENDIX} for completeness.  

We additionally study the quantile loss, normalized by the number of levels,
defined as 
\[
\text{Quantile loss} = \frac{1}{Tm} \sum_{t=1}^T \rho_{\cA}(\q_t, y_t).
\]
As explained in Section \ref{sec:regret_guarantee}, this loss function is
proper, emphasizes sharpness, and is commonly used in the forecasting community
(where it is often written in an equivalent form, called the weighted interval
score).  

Lastly, to set the learning rate $\eta$ in MultiQT, we modify a heuristic
proposed in \cite{angelopoulos2023conformal} for the learning rate in QT.  They
set the learning rate adaptively: the learning rate at time $t$ is set to be 0.1
times the largest absolute error \smash{$|y_s - b_s^{\alpha}|$} seen in the last
50 time steps $s = t-50,\dots,t-1$. We replace this max of recent errors with a
90\% quantile to avoid setting excessively large learning rates after
encountering a single large error. Specifically, we set the learning rate at
time $t$ as 
\[
\eta_t = \max\bigg\{ 0.1 \cdot \mathrm{Quantile}_{0.9}\bigg( \bigcup_{\alpha \in
  \cA}\{|y_s - b_s^{\alpha}|\}_{s=t-50}^{t-1} \bigg), \, \epsilon \bigg\},    
\]
for $\epsilon=0.1$. The lower limit of $\epsilon$ ensures that the learning rate 
is positive even if the residuals are zero.  

\subsection{COVID-19 death forecasting}

During the COVID-19 pandemic, forecasts of the pandemic's trajectory were used
to help guide short-term decisions relating to policy and resource allocation,
as well as for general public communication. The United States COVID-19 Forecast
Hub is a repository of forecasts made in real time of topline COVID-19 outcomes
collected over the pandemic, in a large collaborative effort between researchers
and the United States Center for Disease Control and Prevention
\citep{cramer2022united}. The COVID-19 Forecast Hub ran from April 2020 through
April 2024, and it collected forecasts of COVID-19 cases, hospitalizations, and 
deaths for subsets of this full four-year period, at varying spatial and
temporal resolutions. For our analysis, we focus on forecasts of weekly COVID-19
deaths at the state level, which were collected for 23 quantile levels,   
\[
\cA = \{0.01, 0.025, 0.05, 0.1, 0.15, \dots, 0.85, 0.9, 0.95, 0.975, 0.99\},
\]
and at $h$ weeks ahead, for horizons (lead times) $h \in \{1,2,3,4\}$.  

We apply our MultiQT procedure to weekly state-level COVID-19 death forecasts generated by 15 forecasting teams, corresponding to $15\times 50 = 750$ time series for each forecast horizon $h$. 
This set of forecasters is obtained by starting from the set of forecasters considered in \cite{cramer2022evaluation}, then filtering out forecasters with missing forecasts or forecasts for fewer than 50 time steps for any state. The selected forecasters have forecasts for periods ranging from 68 to 152 weeks. 

We apply MultiQT separately to each forecaster-state combination. When applying 
MultiQT to forecasts that are $h=1$ week ahead, we use the implementation described in 
Procedure \ref{proc:multiQT}; for $h \in \{2,3,4\}$, we run the delayed feedback
version of MultiQT with the modification in
\eqref{eq:multiQT_update_with_delay}, where $D = h-1$. For some forecasting teams, their
forecasts are well calibrated to begin with, while for others, their forecasts 
are systematically biased in some way (too low or too high, or their confidence 
bands are too narrow or too wide). In general, we find that wrapping the MultiQT 
method around these base forecasts successfully corrects such biases and
improves calibration, as we now describe.   

Figure \ref{fig:raw_coverage_covid_h=4} displays the calibration of
one-week-ahead death forecasts from the COVID-19 Forecast Hub. Each colored line corresponds to a single forecaster for a single location.
When one of these lines is below the black dashed line, it means that forecasts are
biased downward for the corresponding levels, whereas being above the dashed line means
that forecasts are biased upward. Both forms
of miscalibration generally dilute the utility of forecasts to decision
makers. Figure \ref{fig:cal_coverage_covid_h=4} plots the calibration of the
same forecasts after applying MultiQT. We see that MultiQT reduces both types of
bias (it brings lines closer to the diagonal, from above and below). Furthermore, Figure \ref{fig:covid_calibration_all_h} in the appendix shows that MultiQT similarly improves two-, three-, and four-week-ahead forecasts.

Figure \ref{fig:covid_arrows} illustrates the change in calibration error and
quantile loss induced by MultiQT, with one panel per forecast horizon $h$. Each
arrow represents one forecaster averaged over all states. The tail of each arrow
represents the performance of the raw forecasts, while its head represents the
performance after we apply MultiQT. All arrows point downward, which tells us
that MultiQT achieves its goal of improving calibration. In fact, after
recalibrating with MultiQT, most forecasters achieve calibration
error close to zero, corresponding to perfect calibration. We also see that this
improvement in calibration does not significantly degrade the quantile loss
and, in fact, often leads to a slight improvement. This is consistent with the
regret guarantee stated in \Cref{thm:multiQT_regret} (and we note that our
choice of learning rate in this section is more aligned with improving
calibration, whereas if we were to target regret, we would choose a learning
rate decreasing with time).      

\begin{figure}[t]
\centering
\begin{subfigure}{0.3\textwidth}
\includegraphics[width=\linewidth]{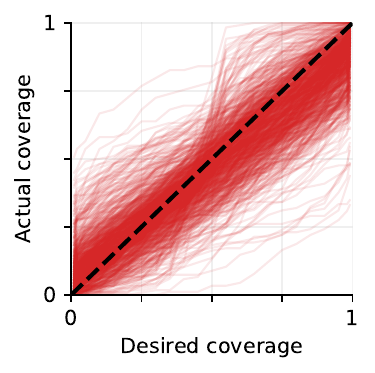}
\caption{Raw forecasts}
\label{fig:raw_coverage_covid_h=4}
\end{subfigure}%
\hspace{5pt}
\begin{subfigure}{0.3\textwidth}
\includegraphics[width=\linewidth]{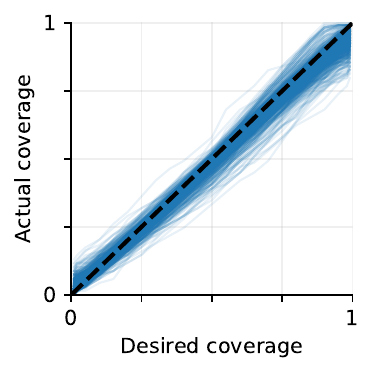}
\caption{MultiQT forecasts}
\label{fig:cal_coverage_covid_h=4}
\end{subfigure} 
\caption{Actual versus desired coverage for one-week-ahead COVID-19 death
  forecasts before (left) and after (right) applying MultiQT. Each line
  corresponds to a single forecaster for a single location.}
\label{fig:covid_calibration_h=1}
\end{figure}

\begin{figure}[t]
\includegraphics[width=\textwidth]{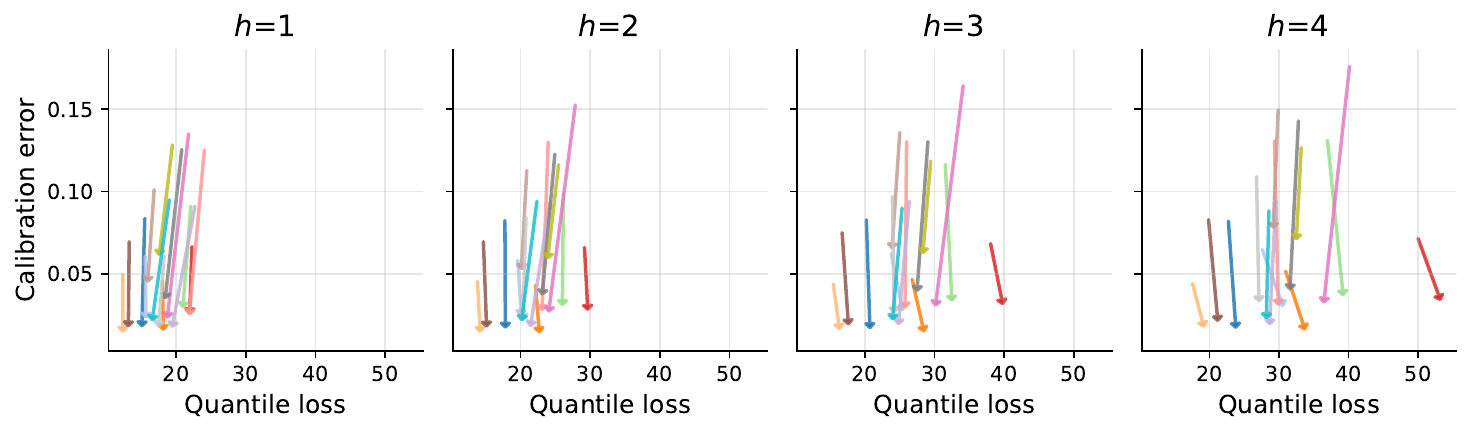}
\caption{Calibration error versus quantile loss for raw forecasts (tail of
  arrow) and MultiQT forecasts (head) for $h$-week-ahead COVID-19
  death forecasts, where $h\in\{1,2,3,4\}$. Each color represents a forecaster,
  and the coordinates of the head and tail are determined by averaging the given
  metric across all 50 states for the specified horizon. For each metric, lower
  is better.}    
\label{fig:covid_arrows}
\end{figure}

In Figures
\ref{fig:all_forecasters_ca_h=1_pt1}-\ref{fig:all_forecasters_vt_h=1_pt2} in the
appendix, we provide visualizations of the MultiQT-adjusted forecasts for each individual
COVID-19 forecaster.  

\subsection{Energy forecasting}

While renewable energy sources such as wind and solar hold great promise for
reducing carbon emissions, a significant downside is that they suffer from
uncertain production due to the inherent stochasticity of weather. This
uncertainty must be properly accounted for in order to successfully integrate
renewable energy sources into the energy grid. Grid operators rely on accurate
forecasts of renewable energy production to determine whether (and for what
times) it is necessary to procure additional energy reserves via what are known
as balancing capacity markets \citep{hirth2015balancing, balancing_market}.  

The ARPA-E PERFORM dataset was assembled to help develop more efficient and 
reliable energy grids \citep{bryce2023solar}. It consists of probabilistic
forecasts made by the National Renewable Energy Laboratory, a national
laboratory of the U.S. Department of Energy, for wind and solar energy generated
at various sites in the United States along with the actual values, all measured
in megawatts. Quantile forecasts are made at 99 levels $\cA$, which are evenly spaced from 0.1 to 0.99. For our analysis, we focus on day-ahead forecasting
for sites belonging to the Electric Reliability Council of Texas (ERCOT), the
main operator of the electrical grid in Texas. For wind power, there are 264
sites, and for solar power, there are 226 proposed sites, making a total of 490
sites. Day-ahead forecasts are made at 12:00 p.m.\ CST each day for energy  
production during each hour of the subsequent day. The dataset provides
forecasts for each day of 2018. 

We run MultiQT separately for each hour of the day. For example, one sequence of
targets we consider is the wind production of a particular site at 10:00 a.m.\ on
January 1, 10:00 a.m.\ on January 2, 10:00 a.m.\ on January 3, and so on. 
Motivated by how balancing capacity products are available in four-hour blocks \citep{balancing_market}, we specifically focus on the hours 2:00 a.m., 6:00 a.m., 10:00 a.m., 2:00 p.m.,
6:00 p.m., and 10:00 p.m. Each of these hours belongs to a different four-hour
block and its forecasts can be used to inform whether a balancing capacity
product will be needed for that time block. For the first three hours we
consider (2:00 a.m., 6:00 a.m., 10:00 a.m.), feedback from the previous day's
forecast is available before the next day's forecasts are issued at 12:00 p.m.,
so there is no delay in feedback. However, for the afternoon and evening hours
(2:00 p.m., 6:00 p.m., and 10:00 p.m.), there is a one-day delay because we do
not observe feedback for these hours before issuing the next day's forecasts.  
Therefore, for these hours, we run MultiQT with a feedback delay of $D=1$.     

Figure \ref{fig:energy_calibration_curves} displays the calibration of quantile
forecasts before and after applying MultiQT to the forecasts for energy
production at 10:00 a.m. We can see that the raw wind forecasts are generally
biased upward, with calibration curves falling above the diagonal line, and the 
solar forecasts are generally too narrow, with calibration curves that are too flat 
(nearly horizontal). MultiQT corrects each of these issues and delivers near
perfect calibration. A similar improvement in the calibration of energy
forecasts can be seen for other hours of the day, displayed in Figure 
\ref{fig:wind_calibration_all_hours} in the appendix.

Figure \ref{fig:energy_arrows} again illustrates the change in calibration error
and quantile loss induced by MultiQT for all six hours we consider, in the same format
as Figure \ref{fig:covid_arrows} for the COVID-19 dataset. Here, each arrow
corresponds to a different site. The results are qualitatively similar to those
for the COVID-19 dataset: MultiQT consistently improves forecast calibration and
never substantially increases the average quantile loss. In particular, for the
solar forecasts, we generally see a strong improvement in quantile loss due to
the extremely poor calibration of the raw forecasts.  


\begin{figure}[t]
\centering 
\begin{subfigure}{0.49\textwidth} 
\centering
\begin{subfigure}{0.5\textwidth}
\includegraphics[width=\linewidth]{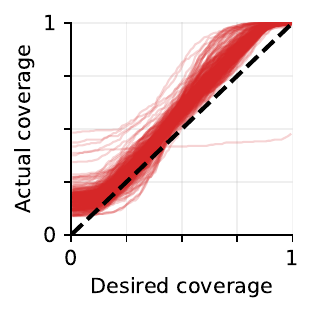}
\end{subfigure}%
\begin{subfigure}{0.5\textwidth}
\includegraphics[width=\linewidth]{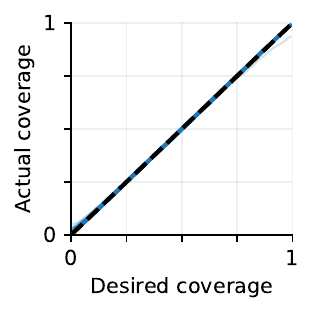}
\end{subfigure}
\caption{Wind}
\end{subfigure}%
\begin{subfigure}{0.49\textwidth} 
\centering
\begin{subfigure}{0.5\textwidth}
\includegraphics[width=\linewidth]{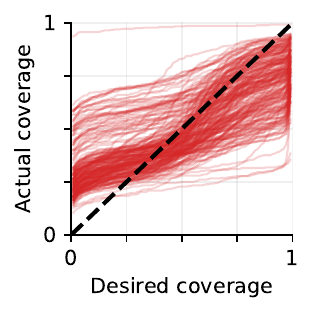}
\end{subfigure}%
\begin{subfigure}{0.5\textwidth}
\includegraphics[width=\linewidth]{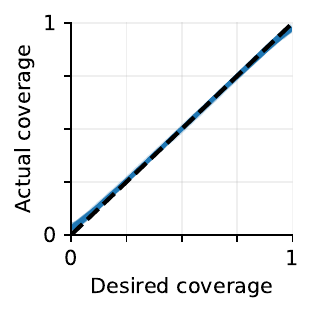}
\end{subfigure}
\caption{Solar}
\end{subfigure}
\caption{Actual versus desired coverage for day-ahead wind (a) and solar (b)
  energy forecasts for the 10:00 a.m.\ time period. In each of (a) and (b), the
  left panel corresponds to raw forecasts, and the right panel to
  MultiQT-adjusted forecasts; each line corresponds to a different site.}          
\label{fig:energy_calibration_curves}
\end{figure}

\begin{figure}[t]
\centering
\begin{subfigure}{0.5\textwidth}
\includegraphics[width=\linewidth]{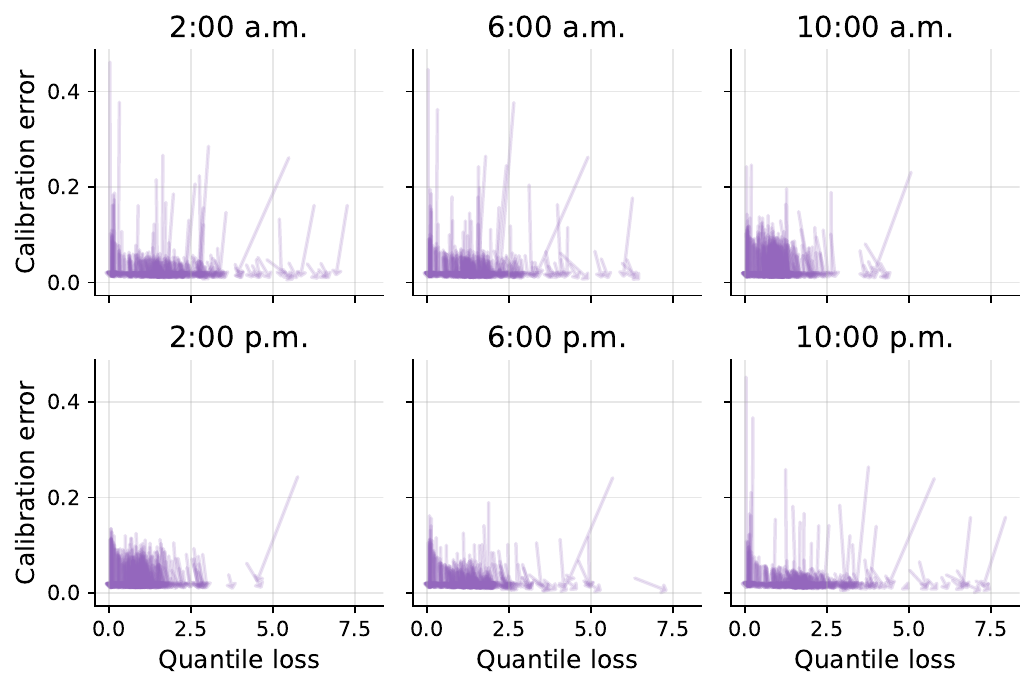}
\caption{Wind}
\label{fig:energy_wind_arrows}
\end{subfigure}%
\begin{subfigure}{0.5\textwidth}
\includegraphics[width=\linewidth]{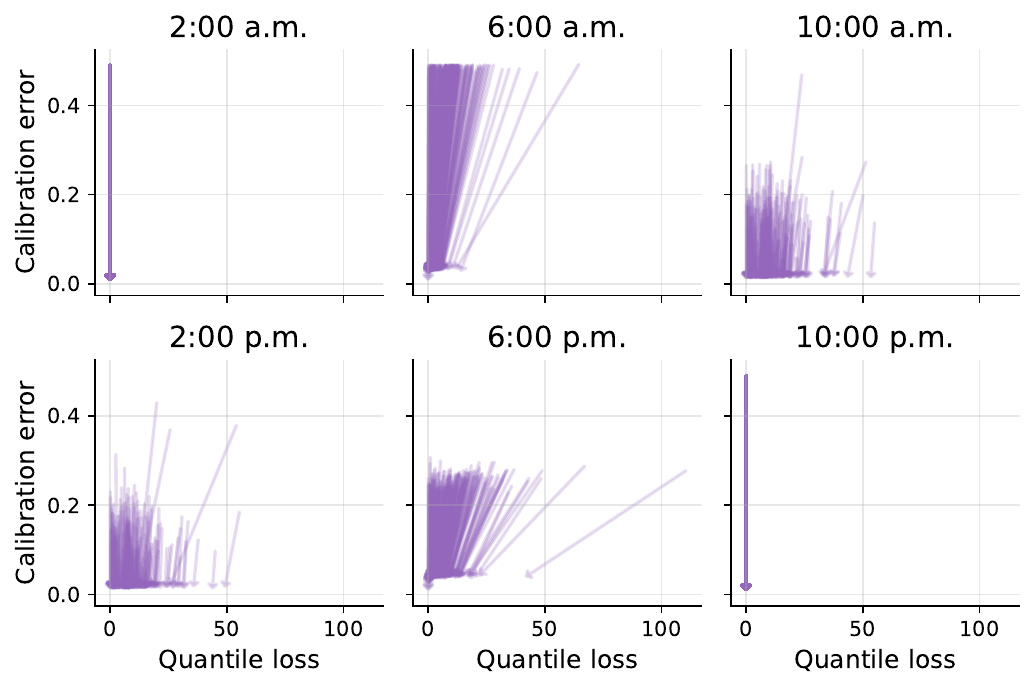}
\caption{Solar}
\label{fig:energy_solar_arrows}
\end{subfigure}
\caption{Calibration error versus quantile loss for raw forecasts (tail of
  arrow) and MultiQT forecasts (head) for day-ahead wind and solar energy
  production at 2:00 a.m., 6:00 a.m., 10:00 a.m., 2:00 p.m., 6:00 p.m., and
  10:00 p.m. Each arrow represents a different site. For each metric, lower 
  is better.}    
\label{fig:energy_arrows}
\end{figure}

As a case study to better understand how MultiQT changes the raw forecasts, we
visualize the forecasts before and after applying MultiQT for a wind energy site
whose raw forecasts were particularly miscalibrated in Figure
\ref{fig:energy_case_study}. For visual clarity, we show forecasts
only for a 50-day period starting from March 1, 2018 but the underlying
experiment covers the entire year of 2018. We can see that the raw forecasts
are upwardly biased and too narrow in many places compared to the true energy
output.  MultiQT largely corrects for this and provides a better representation
of the uncertainty. In Figures
\ref{fig:wind_8forecasters}-\ref{fig:solar_8forecasters} in the appendix, we
present analogous plots for additional wind and solar sites.

\begin{figure}[t]
\begin{subfigure}{\textwidth}
\centering
\begin{subfigure}{0.52\textwidth}
\includegraphics[width=\linewidth]{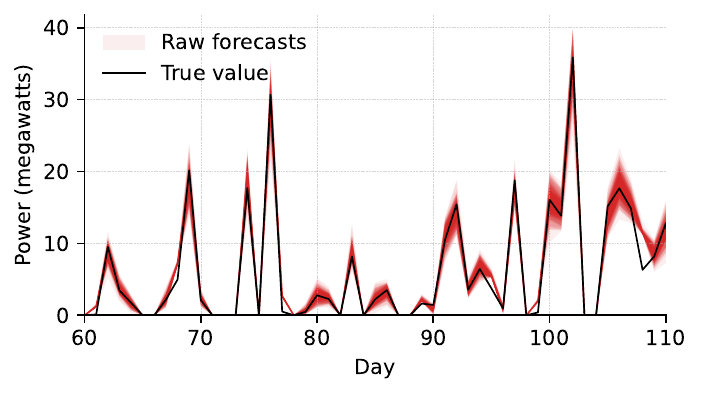}
\end{subfigure}%
\begin{subfigure}{0.23\textwidth}
\centering\raisebox{0.1em}{
\includegraphics[width=\linewidth]{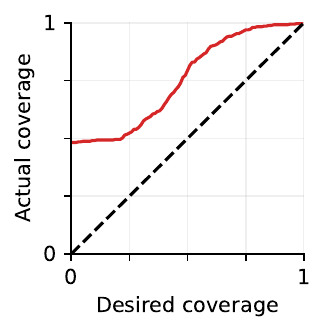}}
\end{subfigure}
\caption{Raw forecasts and their calibration.}
\end{subfigure}

\begin{subfigure}{\textwidth} 
\centering
\begin{subfigure}{0.52\textwidth}
\includegraphics[width=\linewidth]{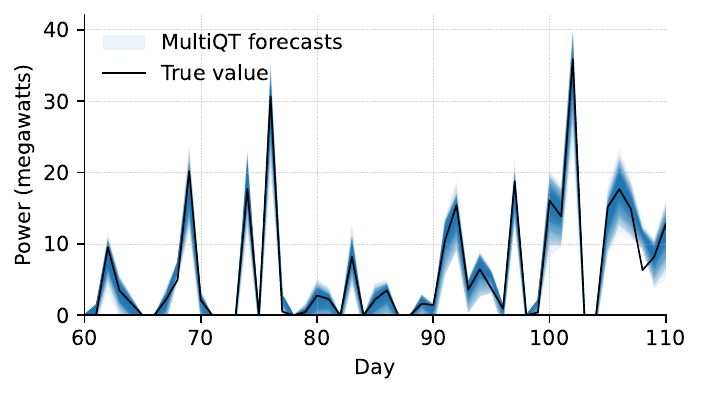}
\end{subfigure}%
\begin{subfigure}{0.23\textwidth}
\centering\raisebox{0.1em}{
\includegraphics[width=\linewidth]{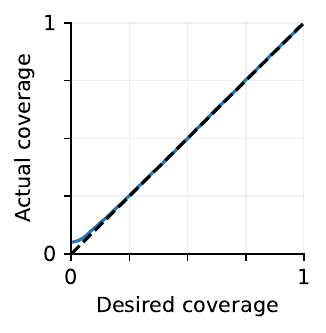}}
\end{subfigure}
\caption{Forecasts and calibration after applying MultiQT.}
\end{subfigure}

\caption{Day-ahead wind energy forecasts for the site
  \texttt{Wind\_Power\_Partners\_94\_Wind\_Farm} for 10:00 a.m.\ each day from
  March 1, 2018 to April 20, 2018, before (top) and after (bottom) applying
  MultiQT.}   
\label{fig:energy_case_study}
\end{figure}

\section{Discussion}
\label{sec:discussion}

In this paper, we proposed a simple procedure which wraps around any existing 
online quantile forecaster to produce corrected forecasts that are guaranteed to
be calibrated without crossing, meaning quantile forecasts at successive levels
are always properly ordered. Our method, MultiQT, is an instance of lazy gradient
descent applied to a particular online learning problem, involving quantile
losses and the isotonic cone. To establish a calibration guarantee (for
arbitrary and potentially even adversarial data sequences), we abstract to a
more general problem of achieving constrained gradient equilibrium (GEQ) via lazy
gradient descent, and we derive new \GEQ{} theory for this algorithm. We also
derive a regret guarantee with respect to the quantile loss. In experiments with datasets from
COVID-19 and energy forecasting, we find that MultiQT
significantly improves the calibration of real forecasters,
for the most part without sacrificing quantile loss, and often slightly
improving it.

We finish by discussing some topics related to the main thrust of our paper,  
and ideas for future work.

\paragraph{From quantile forecasts to prediction intervals.}

Throughout, we have touched on some reasons why achieving calibration while maintaining distributional consistency is
an important problem. Here is yet another useful consequence: these two properties 
together allow us to construct nested prediction intervals with the correct
long-run coverage, where ``nested'' means that the $(1-\alpha)$-level interval  
will always be contained in the $(1-\beta)$-level interval when $\alpha > 
\beta$. We emphasize that properly ordered quantiles is critical for obtaining
nested intervals; otherwise nestedness may not be satisfied, e.g., at a given time step
we might have one or both of the endpoints of the 0.5-level prediction interval
lying outside the 0.9-level prediction interval. Coverage of such intervals can be
verified directly: to construct equi-tailed $(1-\alpha)$-level intervals, we can
define \smash{$I^{1-\alpha}_t = (q_t^{\alpha/2}, q_t^{1-\alpha/2}]$}, and then    
\begin{align*}
\frac{1}{T} \sum_{t=1}^T \one\{y_t \in I^{1-\alpha}_t\} 
&= 1 - \frac{1}{T} \sum_{t=1}^T \one\{y_t \leq q_t^{\alpha/2}\} - 
\frac{1}{T} \sum_{t=1}^T \one\{y_t > q_t^{1-\alpha/2}\} \\
&= 1 - \alpha/2 - \alpha/2 = 1 -\alpha.
\end{align*}
Note that have implicitly relied on properly ordered quantiles in using 
\smash{$q_t^{\alpha/2} \leq q_t^{1-\alpha/2}$} for all $t$. Further, if
each $y_t$ is continuously distributed, then using closed intervals
\smash{$[q_t^{\alpha/2}, q_t^{1-\alpha/2}]$}, $t = 1,2,\dots$ will achieve the 
correct coverage with probability one.    

\paragraph{Faster rates for GEQ and calibration error.} 

In Section \ref{sec:multiQT_guarantees}, we showed that the calibration error of
the MultiQT forecasts approaches zero at a \smash{$1/\sqrt{T}$} rate. This was
established based on new theory for constrained \GEQ{} in Section
\ref{sec:constrained_GEQ}. Here we refine these analyses to give faster rates
under stronger assumptions. Proofs are given in Appendix
\ref{sec:proofs_for_faster_rate_APPENDIX}. 

We first refine the \GEQ{} result in \Cref{prop:GEQ_lazyGD} by assuming that the function 
$\phi$ that appears in the definition of restorativity
\eqref{eq:restorativity} can be lower bounded by a positive constant, and that 
the distance between pairs of hidden and played iterates remains bounded.  

\begin{proposition}
\label{prop:GEQ_with_bounded_distance}
Under the conditions of \Cref{prop:GEQ_lazyGD}, additionally assume that each
loss $\ell_t$ is now $(h_t,\phi_t)$-restorative, with $\phi_t(\btheta) \geq \eta
L^2/2$ for $\|\btheta\|_2 > h_t$, and that each pair of hidden and played
iterates \smash{$(\bttheta_t, \btheta_t)$} remains within a bounded distance of
each other: \smash{$\|\bttheta_t - \btheta_t\|_2 \leq B$}. If $h_t$ is a
nondecreasing sequence, then, for all $T \geq 1$, the lazy \GD{} iterates in
\eqref{eq:lazyGD_projection} and \eqref{eq:lazyGD_update} satisfy 
\[
\|\bttheta_{T+1}\|_2 \leq \max\{\|\bttheta_1\|_2, h_T\} + B + \eta L. 
\]
This implies  
\[
\bigg\| \frac{1}{T} \sum_{t=1}^T \g_t(\btheta_t) \bigg\|_2 \leq
\frac{2\|\bttheta_1\|_2 }{\eta T} + \frac{L}{T} + \frac{h_T + B}{\eta T},
\]
which goes to zero as $T \to \infty$ as long as $h_T$ is sublinear in $T$.
\end{proposition}

When $h_t = h$ for all $t$, \Cref{prop:GEQ_lazyGD} gives a \smash{$1/\sqrt{T}$}
rate for \GEQ{}. By adding a stronger assumption on $\phi$, and importantly, the 
assumption that hidden and played sequences do not diverge away from each other,
we see that \Cref{prop:GEQ_with_bounded_distance} improves this to a $1/T$ rate.

The assumption that $\phi$ is lower bounded by a positive constant, which 
\cite{angelopoulos2025gradient} refer to as a ``positive curvature''
condition, is not strong. We can show that quantile loss satisfies this
condition as an extension of \Cref{lemma:multiqt_cond1}. The assumption that
\smash{$\|\bttheta_t - \btheta_t\|_2$} remains bounded is trickier to 
analyze. Though it seems intuitive that in most instances we would expect this 
to be the case, it is nonetheless challenging to verify formally. Fortunately,
the next lemma shows that this is true for MultiQT in a specialized setting,
where the base forecasts are point forecasts.   

\begin{lemma}
\label{lemma:point_forecasts_bounded_distance}
If the base forecasts are point forecasts, i.e., \smash{$b_t^{\alpha} = \mu_t$}
for all $\alpha$ and $t$, and $|y_t - \mu_t| \leq R$ for all $t$, then the MultiQT iterates starting from initialization \smash{$\bttheta_1 \in \cK$} satisfy 
\smash{$\|\bttheta_t - \btheta_t\|_2 \leq \eta m^{3/2} / \sqrt{3}$} for all $t$.  
\end{lemma}

Combining the previous results we get the following $1/T$ rate on the
calibration error of MultiQT when the base forecasts are point forecasts.   

\begin{corollary}
\label{cor:multiQT_point_forecast_calibration_rate}
Under the conditions of \Cref{lemma:point_forecasts_bounded_distance}, for all $T \geq 1$, 
the MultiQT iterates from Procedure \ref{proc:multiQT} obey the $\ell_2$
coverage error bound     
\[
\sqrt{\sum_{\alpha \in \cA} \bigg( \frac{1}{T} \sum_{t=1}^T \cov_t^{\alpha} -
  \alpha \bigg)^2} \leq \frac{2\|\bttheta_1\|_2 }{\eta T} + \frac{\sqrt{m}}{T} +
\frac{m^{3/2}}{2 d_{\cA} T} + \frac{R m^{3/2}}{d_{\cA} \eta T} +
\frac{m^{3/2}}{\sqrt{3} T}, 
\]
where recall $m = |\cA|$, and \smash{$d_{\cA}$} is as in
\Cref{lemma:multiqt_cond1}. Additionally recall that, since $\|x\|_\infty \leq \|x\|_2$ for any vector $x \in
\R^m$, the right-hand side above is also an upper bound for \smash{$\big| \frac{1}{T}
  \sum_{t=1}^T \cov_t^{\alpha} - \alpha \big|$}, for each $\alpha \in \cA$.
\end{corollary}

It is an open question whether MultiQT maintains a bounded distance between
hidden and played iterates in general, when each $b_t$ is an arbitrary ordered
vector of quantile forecasts, and hence whether the $1/T$ rate of
\Cref{cor:multiQT_point_forecast_calibration_rate} extends to this general 
setting.

\paragraph{Inward flow.} 

A key condition we used in our analysis is inward flow, which says that the
negative gradient field points inwards at the boundary of the constraint set. We
showed that lazy gradient descent achieves constrained \GEQ{} when inward
flow is satisfied, and our calibration guarantee for MultiQT relied on the fact
that the MultiQT loss and constraint set jointly satisfy inward flow. This
property is far from trivial, 
and can fail even in seemingly natural modifications of the MultiQT problem; for
example, we might seek $\varepsilon$-separated quantiles, and define for a
constant $\varepsilon \geq 0$ the constraint set
\begin{equation}
\label{eq:eps_separation}
C_t^{\varepsilon} = \Big\{ x \in \R^m : x_i + \varepsilon \leq x_{i+1}, \; i =
1,2, \dots, m-1 \Big\} - \b_t. 
\end{equation}
Notice that setting $\varepsilon = 0$ recovers the original constraint set
\eqref{eq:multiqt_constraint}. Unfortunately, the MultiQT loss and the
$\varepsilon$-separated constraint set \smash{$C_t^{\varepsilon}$} do not
satisfy inward flow. This is visualized in Figure
\ref{fig:gradient_field_epsilon}: we can see that the negative gradient, denoted by the small arrows, does not point inwards at all points on the boundary
of \smash{$C_t^{\varepsilon}$}. We can contrast this with Figure \ref{fig:gradient_field},
which visualizes the constraint used in MultiQT.

Violating inward flow means that the result in \Cref{thm:calibration} does not
apply but leaves open the possibility that other analyses may be used to establish
a calibration result; however, we show that this is  not the case for the
$\varepsilon$-separated constraint set, and construct a
formal negative example in Appendix \ref{sec:negative_results_APPENDIX}. 

\begin{figure}[h!]
\centering
\begin{subfigure}{0.36\textwidth}
\centering

\tikzset{>={Stealth[length=2.6pt,width=3.5pt]}}

\begin{tikzpicture}[line cap=round]

\def\aone{0.30}   
\def\atwo{0.70}   
\def\Y{0.55}      
\def\s{0.32}      
\def\L{2.4}       
\def\LA{2.6}      
\def\grid{-2.4,-1.6,-0.8,0,0.8,1.6,2.4}
\def\tick{0.09}   
\def\eps{0}    

\fill[gray!35]
  (-\LA,-{\LA+\eps}) --
  (-\LA,\LA) --
  ({\LA-\eps},\LA)
  -- cycle;



\node[gray, scale=1.1] at (-1.87,1.87) {$C_t$};

\draw[<->] (-\LA,0) -- (\LA,0) node[pos=1,below right] {$\theta_t^{\alpha_1}$};
\draw[<->] (0,-\LA) -- (0,\LA) node[pos=1,above left] {$\theta_t^{\alpha_2}$};

\draw (-\tick, \Y) -- (\tick, \Y) node[left=4pt] {$y_t$};   
\draw (\Y, -\tick) -- (\Y, \tick) node[below=4pt] {$y_t$};  


\foreach \x in \grid {
  \foreach \y in \grid {

    \pgfmathsetmacro{\vx}{(\x<\Y ? \aone : \aone-1)}
    \pgfmathsetmacro{\vy}{(\y<\Y ? \atwo : \atwo-1)}
    \pgfmathsetmacro{\len}{sqrt((\vx)^2 + (\vy)^2)}
    \pgfmathsetmacro{\ux}{\vx/(\len+1e-9)}
    \pgfmathsetmacro{\uy}{\vy/(\len+1e-9)}

    \pgfmathsetmacro{\d}{abs(\y - (\x + \eps))}

    \ifdim \d pt<0.01pt
      \def\arrowcolor{blue}%
    \else
      \def\arrowcolor{blue!40}%
    \fi

    \draw[->, line width=1pt, \arrowcolor]
      (\x,\y) -- ++({\s*\ux},{\s*\uy});
  }
}

\end{tikzpicture}
\vspace{-20pt}
\caption{Original constraint set}
\label{fig:gradient_field}
\end{subfigure}%
\hspace{10pt}
\begin{subfigure}{0.36\textwidth}
\centering

\tikzset{>={Stealth[length=2.6pt,width=3.5pt]}}

\begin{tikzpicture}[line cap=round]

\def\aone{0.30}   
\def\atwo{0.70}   
\def\Y{0.55}      
\def\s{0.32}      
\def\L{2.4}       
\def\LA{2.6}      
\def\grid{-2.4,-1.6,-0.8,0,0.8,1.6,2.4}
\def\tick{0.09}   
\def\eps{1.60}    

\fill[gray!35]
  (-\LA,-{\LA+\eps}) --
  (-\LA,\LA) --
  ({\LA-\eps},\LA)
  -- cycle;


\draw[color=violet] (-\tick, \eps) -- (\tick, \eps) node[left=4pt, text=violet]{$\varepsilon$};   

\node[gray, scale=1.1] at (-1.87,1.87) {$C_t^{\varepsilon}$};

\draw[<->] (-\LA,0) -- (\LA,0) node[pos=1,below right] {$\theta_t^{\alpha_1}$};
\draw[<->] (0,-\LA) -- (0,\LA) node[pos=1,above left] {$\theta_t^{\alpha_2}$};

\draw (-\tick, \Y) -- (\tick, \Y) node[left=4pt] {$y_t$};   
\draw (\Y, -\tick) -- (\Y, \tick) node[below=4pt] {$y_t$};  


\foreach \x in \grid {
  \foreach \y in \grid {

    \pgfmathsetmacro{\vx}{(\x<\Y ? \aone : \aone-1)}
    \pgfmathsetmacro{\vy}{(\y<\Y ? \atwo : \atwo-1)}
    \pgfmathsetmacro{\len}{sqrt((\vx)^2 + (\vy)^2)}
    \pgfmathsetmacro{\ux}{\vx/(\len+1e-9)}
    \pgfmathsetmacro{\uy}{\vy/(\len+1e-9)}

    \pgfmathsetmacro{\d}{abs(\y - (\x + \eps))}

    \ifdim \d pt<0.01pt
      \def\arrowcolor{blue}%
    \else
      \def\arrowcolor{blue!40}%
    \fi

    \draw[->, line width=1pt, \arrowcolor]
      (\x,\y) -- ++({\s*\ux},{\s*\uy});
  }
}

\end{tikzpicture}
\vspace{-20pt}
\caption{$\varepsilon$-separated constraint set}
\label{fig:gradient_field_epsilon}
\end{subfigure} 
\caption{Visualization of the negative gradient field of the MultiQT loss
  (arrows) for two quantiles, with no base forecaster (\smash{$b_t^{\alpha_1} =  
    b_t^{\alpha_2} = 0$}), and with the target value $y_t$ as drawn. The inward
  flow property is satisfied for the original MultiQT constraint set (left), but
  not the $\varepsilon$-separated constraint set (right).}      
\label{fig:gradient_field_quantile_loss}
\end{figure}
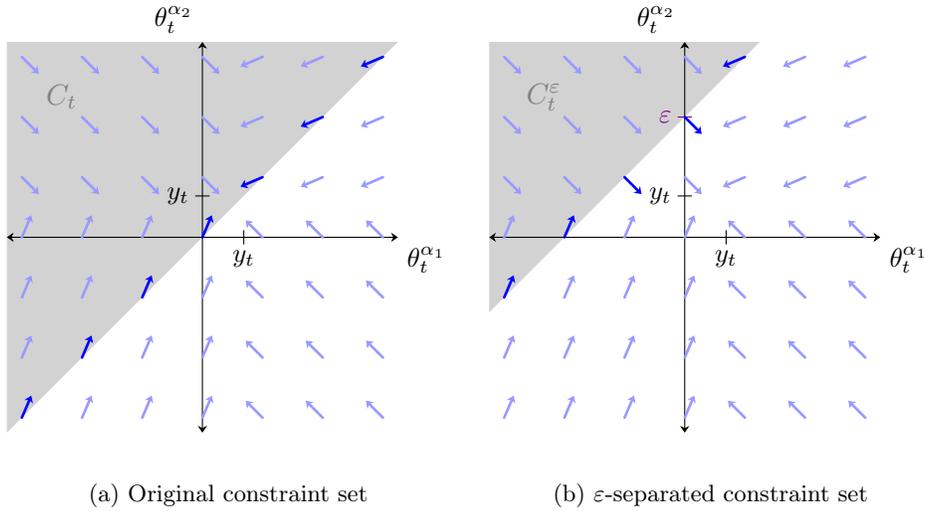

In general, it is unclear to us to what degree inward flow is necessary for lazy
\GD{} to achieve constrained \GEQ{}, and if not, whether there are other
sufficient conditions that may be more naturally satisfied in some settings. 
The study of \GEQ{} under iterate constraints is still nascent and requires
further development.     


\paragraph{Future work.}

We close with some ideas for future work, in addition to ones already
mentioned. In multi-horizon forecasting settings, such as the COVID-19
forecasting problem (where forecasts were simultaneously issued for outcomes
one, two, three, and four weeks ahead), the forecast residuals share correlation 
between successive horizons, and one may leverage scorecasting techniques as in
\cite{angelopoulos2023conformal, wang2024online} to improve sharpness at an
individual quantile level. Combining this with MultiQT would be an important
practical development. 

Another idea is to approach conditional notions of calibration, which require 
coverage to be obtained at each quantile level conditional on the quantile
prediction between issued. This is of course stronger than the notion considered
in our paper (which does not perform any conditioning). It is worth noting that 
our notion of coverage in this paper corresponds to a discretization of what is
called probabilistic calibration (also called PIT calibration) in the
forecasting literature, see, e.g., \cite{gneiting2007probabilistic}. Conditional
versions will have different names, depending on what precisely is being 
conditioned on, with the strongest version being called auto-calibration, see,
e.g., \cite{gneiting2023regression}. As a future direction, it would be
desirable to be able to encode conditional notions of calibration as a form of 
gradient equilibrium, and show this can be obtained with standard lightweight
online methods such as (lazy) gradient descent, as existing methods for
achieving conditional calibration such as \cite{noarov2023high} are more
computationally complex.    

A final idea would be to consider an infinite-dimensional version of our
problem, where the base forecasts take the form of a quantile function $b_t :
[0,1] \to \R$. This could be seen as taking $m \to \infty$ in our current setup.
This poses numerous challenges (algorithmically and theoretically), but would  
nonetheless be an interesting direction.  

\anonymized{
\section*{Acknowledgments}

We thank Rina Barber and Aaron Roth for helpful discussions and Erez Buchweitz
for guidance on working with the COVID-19 Forecast Hub dataset. TD was supported
by the National Science Foundation Graduate Research Fellowship Program grant
no.\ 2146752. IG and RJT were supported by the Office of Naval Research grant
no.\ N00014-20-1-2787.   
}

{\RaggedRight
\bibliographystyle{plainnat}
\bibliography{references}}

\newpage
\appendix


\section{Distributional inconsistency of QT}
\label{sec:QT_crossings_APPENDIX}

To demonstrate that running the quantile tracker (QT) separately for each
quantile level cannot be used to solve calibration without crossings, we run
this method on the COVID-19 dataset from \cite{cramer2022united}, as described in Section \ref{sec:experiments}. We also use the learning rate heuristic described in that section. We run QT on 750 time series of one-week-ahead forecasts of weekly COVID-19 deaths at the state level (15 forecasters $\times$ 50 states).   Figure \ref{fig:QT_crossing_frac} plots the fraction of time steps in each time series that have at least one pair of crossed quantiles (where we say a crossing has occurred at time $t$ if there exists quantile 
levels $\alpha < \beta$ where the corresponding QT-adjusted forecasts 
satisfy \smash{$q_t^{\alpha} > q_t^{\beta}$}). We see QT produces
distributionally inconsistent quantiles at 87\% of time steps on average, which
is practically undesirable.       

\begin{figure}[h!]
\centering
\includegraphics[width=0.55\linewidth]{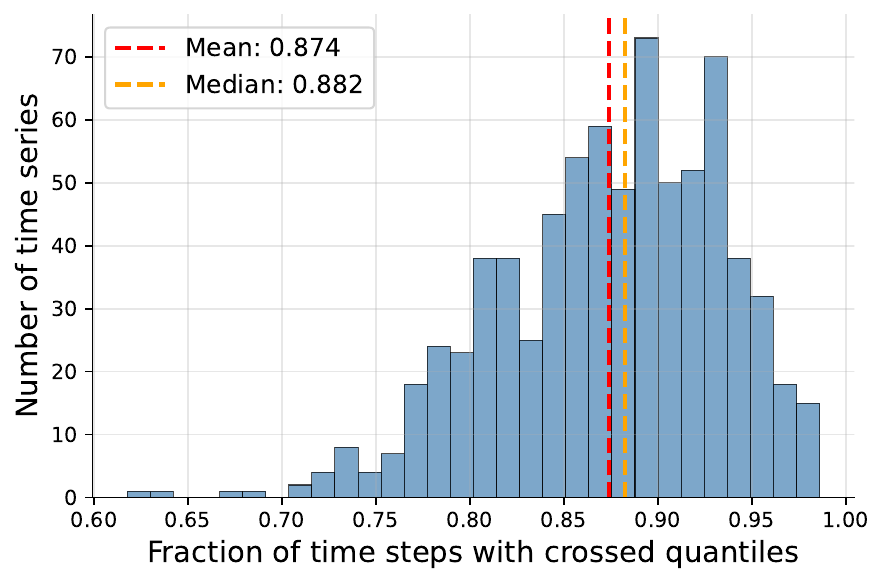}
\caption{Histogram of the fraction of time steps with crossings after applying
  QT to one-week-ahead COVID-19 death forecasts separately for each quantile, over the
  15 forecasting teams and all 50 states.} 
\label{fig:QT_crossing_frac}
\end{figure}


\section{Proofs for Sections \ref{sec:constrained_GEQ} and
  \ref{sec:multiQT_guarantees}}  
\label{sec:proofs_APPENDIX} 

We prove results for the delayed feedback setting with a constant delay of $D \geq 0$. 
Results for the no-delay setting follow immediately by setting $D=0$. The object
of our analysis is lazy gradient descent (lazy GD) with constant delay, which we
now discuss in some further detail. 

With delay $D \geq 0$, we do not observe $y_t$ at time $t$ (except when $D=0$); rather, it is
observed after we play our action at time $t+D$, at which point we use it to take
a gradient step. Recall that lazy \GD{} in the current
setting is given by \eqref{eq:lazyGD_update_with_delay} followed by 
\eqref{eq:lazyGD_projection}, where for convenience we set $\g_t(\btheta_t)= 0$
for $t \leq 0$. By unrolling \eqref{eq:lazyGD_update_with_delay} we obtain 
\begin{equation}
\label{eq:lazyGD_update_with_delay_unrolled}
\bttheta_{t+1} = \bttheta_1 - \eta \sum_{s=1}^{t-D} \g_s(\btheta_s),
\end{equation}
where we adopt the convention that the summation from $a$ to $b$ is zero if $b
\leq a$. Using \eqref{eq:lazyGD_update_with_delay_unrolled}, we get two simple facts that we will use
below. First, substituting \eqref{eq:lazyGD_update_with_delay_unrolled} into \eqref{eq:lazyGD_projection} allows us to rewrite the whole lazy \GD{} algorithm as  
\begin{equation} 
\label{eq:lazyGD_with_delay_unrolled}
\btheta_{t+1} = \Pi_{C_{t+1}}\bigg( \bttheta_1 - \eta \sum_{s=1}^{t-D}
\g_s(\btheta_s) \bigg).
\end{equation} 
Second, by rearranging \eqref{eq:lazyGD_update_with_delay_unrolled} and
using the triangle inequality, we get that the average gradient satisfies    
\begin{equation}
\label{eq:lazyGD_avg_gradient_with_delay}
\bigg\| \frac{1}{T} \sum_{t=1}^T \g_t(\btheta_t) \bigg\|_2 \leq
\frac{\|\bttheta_1\|_2 + \|\bttheta_{T+D+1}\|_2}{\eta T}.
\end{equation}

\subsection{Constrained \GEQ{} for lazy \GD{} with delay}  

In this section, we prove the results needed to show that lazy gradient descent (with delay) achieves constrained gradient equilibrium when inward flow holds. 

\subsubsection{Proof of \Cref{prop:GEQ_lazyGD_with_delay}}

We follow the general proof structure from Proposition 5 of
\cite{angelopoulos2025gradient}. We begin by expanding the square in
\eqref{eq:lazyGD_update}:  
\begin{align*}
\|\bttheta_{T+D+1}\|_2^2 
&= \|\bttheta_{T+D}\|_2^2 + \eta^2 \|\g_T(\btheta_T)\|_2^2 - 2\eta \langle
\g_T(\btheta_T), \bttheta_{T+D} \rangle \\
\numberthis\label{eq:A1}
&\leq \|\bttheta_{T+D}\|_2^2 + \eta^2 L^2 - 2\eta \langle \g_T(\btheta_T),
  \bttheta_{T+D} \rangle,
\end{align*}
where the second line uses Lipschitzness. We focus on bounding the third term in
\eqref{eq:A1}, which we rewrite as   
\begin{equation}\label{eq:A2} 
-2\eta \langle \g_T(\btheta_T), \bttheta_{T+D} \rangle = - 2\eta \langle
\g_T(\btheta_T), \bttheta_T \rangle - 2\eta \langle \g_T(\btheta_T),
\bttheta_{T+D} - \bttheta_T \rangle.
\end{equation}
We start by bounding the first term in \eqref{eq:A2}.
Due to inward flow, instead of bounding the inner product of the gradient and the hidden iterate, we can instead bound the inner product with the played iterate: by part (ii) of \Cref{lemma:inward_flow} below, we have
\smash{$-\langle \g_T(\btheta_T), \bttheta_T \rangle \leq -\langle
  \g_T(\btheta_T), \btheta_T \rangle$}. Proceeding in cases, if $\|\btheta_T\|_2
> h_T$, then the restorativity condition kicks in and we have   
\[
-\langle \g_T(\btheta_T), \btheta_T \rangle \leq 0.
\]
Otherwise, if $\|\btheta_T\|_2 \leq h_T$, then we have
\[
-\langle \g_T(\btheta_T), \btheta_T \rangle 
 \leq \|\g_T(\btheta_T)\|_2 \|\btheta_T\|_2
\leq L h_T
\]
by Cauchy-Schwarz, Lipschitzness, and the assumption on
$\|\btheta_T\|_2$. Combining the above arguments, we have \smash{$-\langle  
  \g_T(\btheta_T), \bttheta_T \rangle \leq \max\{0, L h_T\} =  L h_T$}. 

The second term in \eqref{eq:A2} is the penalty we incur for delayed feedback. 
To bound it, note that  
\begin{align*}
-\langle \g_T(\btheta_T), \bttheta_{T+D} - \bttheta_T \rangle 
&\leq \|\g_T(\btheta_T)\|_2 \|\bttheta_{T+D} - \bttheta_T\|_2 \\
&\leq \|\g_T(\btheta_T)\|_2 \bigg\|\eta \sum_{t=T}^{T+D-1} \g_t(\btheta_t)
\bigg\|_2 \\ 
&\leq \eta \|\g_T(\btheta_T)\|_2 \bigg( \sum_{t=T}^{T+D-1}
\|\g_t(\btheta_t)\|_2 \bigg) \\ 
&\leq \eta DL^2,
\end{align*}
where the first line uses Cauchy-Schwarz, the second line is due to 
\eqref{eq:lazyGD_update_with_delay}, the third uses the triangle inequality, and
the fourth uses the Lipschitzness assumption. 
 
Inserting both of these bounds into \eqref{eq:A2}, we get
\[
- 2\eta \langle \g_T(\btheta_T), \bttheta_{T+D} \rangle \leq 2 \eta L h_T +
2\eta^2 DL^2.  
\]
Plugging this back into \eqref{eq:A1}, we obtain 
\begin{align*}
\|\bttheta_{T+D+1}\|_2^2 
&\leq \|\bttheta_{T+D}\|_2^2 + \eta^2L^2 + 2 \eta L h_T + 2\eta^2DL^2 \\
&= \|\bttheta_{T+D}\|_2^2 + \eta^2 L^2 (2D+1) + 2 \eta L h_T \\
&\leq \|\bttheta_{D+1}\|_2^2 + \eta^2 L^2 (2D+1)T + 2 \eta L \sum_{t=1}^T h_t \\
&\leq \|\bttheta_1\|_2^2 + \eta^2 L^2 (2D+1)T + 2 \eta L \sum_{t=1}^T h_t,
\end{align*}
where the last line uses \smash{$\|\bttheta_t \|_2 = \|\bttheta_1\|_2$}, for all
$t \leq D+1$. Taking a square root gives the bound on
\smash{$\|\bttheta_{T+D+1}\|_2$} stated in the theorem.
    
To get the bound on the $\ell_2$ norm of the average gradient, we first simplify
the \smash{$\|\bttheta_{T+D+1}\|_2$} bound by observing that the nondecreasing
property of $h_t$ implies \smash{$\sum_{t=1}^T h_t \leq T h_T$}. We then apply 
the fact \smash{$\sqrt{a + b} \leq \sqrt{a} + \sqrt{b}$}, and lastly invoke
\eqref{eq:lazyGD_avg_gradient_with_delay}.  

\subsubsection{Gradient alignment lemma}

We now state and prove a fact used in the previous proof about the effect of
projection on the inner product of an iterate and its gradient when inward
flow is satisfied. We use {$N_C(\x) = \{\v : \v^{\T}(\x-\y) \geq
  0 \; \text{for all $\y \in C$}\}$} to denote the normal cone of a set $C$ at
$x$.  

\begin{lemma}
\label{lemma:inward_flow}
If a loss $\ell$ (with gradient $g$) and a closed convex set $C$ satisfy inward flow, then:

\begin{enumerate}
\item[(i)] 
$\langle \v, \g(\z) \rangle \geq 0$ for all $\z \in \mathrm{bd}(C)$ and $\v \in N_C(\z)$.

\item[(ii)] $\langle \z, \g(\Pi_C(\z)) \rangle \geq
  \langle \Pi_C(\z), \g(\Pi_C(\z)) \rangle$ for all $\z$.
\end{enumerate}
\end{lemma}

\begin{proof}
For part (i), by definition of inward flow, we know that there exists
$\varepsilon>0$ such that $\z - \varepsilon \g(\z) = \bomega$ for some
$\bomega \in C$. We can thus write    
\begin{align*}
\langle \v, \g(\z) \rangle 
&= \frac{1}{\varepsilon} \langle \v, \varepsilon \g(\z) \rangle \\ 
&= \frac{1}{\varepsilon} \langle \v, \z - \bomega \rangle  \\
&\geq 0,
\end{align*}
where the inequality holds as \smash{$\v \in N_C(\z)$}. For part (ii), if $\z \in C$, the result is trivial. Now consider $\z \notin C$. Abbreviating
\smash{$\z_0 = \Pi_C(\z)$}, by definition of Euclidean projection, there exists
\smash{$\v \in N_C(\z_0)$} such that $\z = \z_0 + \v$. As $\langle \v, \g(\z_0)
\rangle \geq 0$ by part (i), it follows that $\langle \z, \g(\z_0) \rangle =
\langle \z_0, \g(\z_0) \rangle + \langle \v, \g(\z_0) \rangle \geq \langle \z_0,
\g(\z_0) \rangle$, and this completes the proof.   
\end{proof}

\subsection{Calibration theory for MultiQT}

What remains now is to prove Lemmas \ref{lemma:multiqt_cond1} and
\ref{lemma:multiqt_cond2}, which we do below.

\subsubsection{Proof of \Cref{lemma:multiqt_cond1}}

Suppose $\|\btheta_t\|_2 > h$. This implies there exists $\alpha^* \in \cA$
such that \smash{$|\theta^{\alpha^*}_t| > h/\sqrt{m}$}, because otherwise we
would have \smash{$\|\btheta_t\|_2^2 = {\sum_{i=1}^m (\theta_t^{\alpha_i})^2}
  \leq {\sum_{i=1}^m h^2/m} = h^2$}. Now expand the inner product:     
\begin{align*} 
\langle \btheta_t, \g_t(\btheta_t) \rangle 
&= \sum_{\alpha \in \cA} \theta_t^{\alpha} (\cov_t^{\alpha} - \alpha) \\
\numberthis\label{eq:restorativity_proof_decomposition}
&= \sum_{\alpha : \theta_t^{\alpha} < -R} \theta_t^{\alpha} 
(\cov_t^{\alpha} - \alpha) + 
\sum_{\alpha : \theta_t^{\alpha} > R} \theta_t^{\alpha}
(\cov_t^{\alpha} - \alpha) + 
\sum_{\alpha : -R \leq \theta_t^{\alpha} \leq R} \theta_t^{\alpha} 
(\cov_t^{\alpha} - \alpha). 
\end{align*}
We will show that the first two sums must be positive and then argue that at
least one of the sums must be large. In the first summation, since
\smash{$\theta_t^{\alpha} < -R$}, we must have \smash{$\cov_t^{\alpha} = 0$}, so  
\smash{$\cov_t^{\alpha} - \alpha = -\alpha$} is negative; thus each summand is 
positive. In the second summation, since \smash{$\theta_t^{\alpha} > R$}, we
must have \smash{$\cov_t^{\alpha} = 1$}, so \smash{$\cov_t^{\alpha} - \alpha =
  1-\alpha$} is positive; thus each of these summands is also positive.   

To see that at least one of the sums must be large, observe that since \smash{$h
  \geq Rm^{3/2} / d_{\cA}$} by assumption, we have \smash{$|\theta^{\alpha^*}_t|
  > h/\sqrt{m} \geq Rm / d_{\cA} \geq R$}. Thus we know that $\alpha^*$ must
appear in the indices of one of the first two summations. If $\alpha^*$ appears
in the first summation, this means \smash{$\theta^{\alpha^*}_t < -h/\sqrt{m}$},
so the first summation can be lower bounded by \smash{$h \alpha^*/\sqrt{m}$}.
If $\alpha^*$ appears in the second summation, the second summation can
be similarly lower bounded by \smash{$h (1-\alpha^*)/\sqrt{m}$}. Combining, we
conclude that the first two sums in \eqref{eq:restorativity_proof_decomposition}
can be lower bounded as    
\[
\sum_{\alpha : \theta_t^{\alpha} < -R} \theta_t^{\alpha} 
(\cov_t^{\alpha} - \alpha) + 
\sum_{\alpha : \theta_t^{\alpha} > R} \theta_t^{\alpha} 
(\cov_t^{\alpha} - \alpha) 
\geq \frac{h \min(\alpha^*, 1-\alpha^*)}{\sqrt{m}} 
\geq \frac{h d_{\cA}}{\sqrt{m}}.
\] 
The third sum in \eqref{eq:restorativity_proof_decomposition} is lower bounded
by $-Rm$, since \smash{$\cov_t^{\alpha} - \alpha \in [-1,1]$}. Plugging this all
back into \eqref{eq:restorativity_proof_decomposition}, we get
\begin{equation}
\label{eq:restorativity_proof_conclusion}
\langle \btheta_t, \g_t(\btheta_t) \rangle \geq \frac{h d_{\cA}}{\sqrt{m}} - Rm.
\end{equation}
Note that the right-hand side is nonnegative for any \smash{$h \geq Rm^{3/2} /
  d_{\cA}$}.

\subsubsection{Proof of \Cref{lemma:multiqt_cond2}}

We will show that for any $\theta_t \in C_t = \cK - \b_t$, there exists $\delta
> 0$ such that $\theta_t - \varepsilon g_t(\theta_t) \in C_t$, for all
$\varepsilon \leq \delta$. That is, we will show that if we took a small enough
step in the direction of the negative gradient in \eqref{eq:multiqt_gradient}
starting from $\btheta_t$, then the quantiles would remain uncrossed. We do so
by first arguing that we do not have to worry about crossings between quantiles
on the same side of $y_t$, then arguing that the quantiles which sandwich $y_t$
must be separated by a positive distance, allowing us to maintain proper ordering
for small enough $\delta$.

For $\varepsilon>0$, let $\bomega = \btheta_t - \varepsilon
\g_t(\btheta_t)$, with elements \smash{$\omega^{\alpha} = \theta_t^{\alpha}  -
  \varepsilon (\cov_t^{\alpha} - \alpha)$}, $\alpha \in \cA$. We want to show
that for small enough $\varepsilon$, we  
have $\bomega \in C_t$. In other words, we must verify \smash{$\omega^{\alpha_i}
  + b_t^{\alpha_i} \leq \omega^{\alpha_{i+1}} + b_t^{\alpha_{i+1}}$} for $i =
1,\dots,m-1$. First we show that for any pair $\alpha < \beta$, if
\smash{$\cov_t^{\alpha} = \cov_t^{\beta}$}, then \smash{$\omega^{\alpha} + 
  b_t^{\alpha} \leq \omega^{\beta} + b_t^{\beta}$} for any $\varepsilon >0$. 
To see this, observe
\begin{align*}
\omega^{\beta} + b_t^{\beta} - (\omega^{\alpha} + b_t^{\alpha}) 
&= \theta^{\beta}_t - \varepsilon (\cov_t^{\beta} - \beta) + 
b_t^{\beta} - [\theta^{\alpha}_t - \varepsilon (\cov_t^{\alpha} -\alpha) + 
b_t^{\alpha}] \\
&= \theta^{\beta}_t + b_t^{\beta} -  (\theta^{\alpha}_t + b_t^{\alpha}) - 
\varepsilon (\cov_t^{\beta} - \cov_t^{\alpha} - \beta + \alpha) \\
&\geq  \theta^{\beta}_t + b_t^{\beta} -  (\theta^{\alpha}_t + b_t^{\alpha}) \\
&\geq 0,
\end{align*}
where the third line uses \smash{$\cov_t^{\alpha} = \cov_t^{\beta}$} and $\beta
> \alpha$, and the fourth uses $\theta_t \in C_t$. Because 
\[
\theta_t^{\alpha_1} + b_t^{\alpha_1} \leq \theta_t^{\alpha_2} + b_t^{\alpha_2}
\leq \dots \leq \theta_t^{\alpha_m} + b_t^{\alpha_m},
\]
we know that
\[
0 \leq \cov_t^{\alpha_1} \leq \cov_t^{\alpha_2} \leq \dots \leq
\cov_t^{\alpha_m} \leq 1.
\]
Thus, there exists $k \in \{-1, 0, \dots, m\}$ such that
\smash{$\cov_t^{\alpha_i} = 0$} for all $i \leq k$ and \smash{$\cov_t^{\alpha_i}
  = 1$} for all $i > k$. For $i<k$ and $i>k$, we have \smash{$\omega^{\alpha_i}
  + b_t^{\alpha_i} \leq \omega^{\alpha_{i+1}} + b_t^{\alpha_{i+1}}$} for any
$\varepsilon > 0$ by the fact above. The only case that remains to check is
$i=k$. 
If $k=-1$ or $k=m$, then this means \smash{$\cov_t^{\alpha}$} is the same for
all quantile levels, so we are done, by the fact proven above. Now
consider $0 \leq k \leq m-1$. Since \smash{$\cov_t^{\alpha_k} = 0$} and 
\smash{$\cov_t^{\alpha_{k+1}} = 1$}, this implies 
\[
y_t \in (\theta_t^{\alpha_k} + b_t^{\alpha_k}, 
\theta_t^{\alpha_{k+1}} + b_t^{\alpha_{k+1}}],
\]
which implies \smash{$(\theta_t^{\alpha_{k+1}} + b_t^{\alpha_{k+1}}) -
  (\theta_t^{\alpha_k} + b_t^{\alpha_k}) > 0$}. Informally, since
\smash{$\theta_t^{\alpha_k}$} and \smash{$\theta_t^{\alpha_{k+1}}$} are
separated by a positive amount, we can increase \smash{$\theta_t^{\alpha_k}$} by  
a little and decrease \smash{$\theta_t^{\alpha_{k+1}}$} by a little and still
maintain the ordering. Formally, setting \smash{$\delta =
  [(\theta_t^{\alpha_{k+1}} + b_t^{\alpha_{k+1}}) - (\theta_t^{\alpha_k} +
  b_t^{\alpha_k})] / 2$}, we see that for any $\varepsilon \leq \delta$, 
\begin{align*}
\omega^{\alpha_{k+1}} + b_t^{\alpha_{k+1}} - (\omega^{\alpha_k} +
  b_t^{\alpha_k}) &= 
\theta^{\alpha_{k+1}}_t - \varepsilon (\cov_t^{\alpha_{k+1}} - \alpha_{k+1}) +
  b_t^{\alpha_{k+1}} - [\theta^{\alpha_k}_t - \varepsilon (\cov_t^{\alpha_k} -
  \alpha_k) + b_t^{\alpha_k}] \\
&\geq \theta^{\alpha_{k+1}}_t + b_t^{\alpha_{k+1}} - ( \theta^{\alpha_k}_t +
  b_t^{\alpha_k})  - 2\varepsilon \\ 
&\geq 0,
\end{align*}
where the second line is due to \smash{$|\cov_t^{\alpha} - \alpha| \leq 1$} for
any $\alpha \in [0,1]$, and the third line is due to the choice of
$\varepsilon$.  

\subsection{Regret of lazy \GD{} with delay}

To build up towards a proof of \Cref{thm:multiQT_regret_with_delay}, we first derive a
general bound on the regret of lazy gradient descent with delay, in a setting
with time-varying constraint sets $C_t \subseteq \R^m$, $t = 1,2,\dots$. The time-varying
constraint sets are what makes this problem unusual, and to our knowledge,
standard regret bounds do not apply in this setting. While it might be possible
to adapt more general results for lazy gradient updates (or follow the
regularized leader) under adaptive regularization, e.g., as surveyed in
\cite{mcmahan2017survey}, we provide a relatively simple and self-contained
analysis below, which leverages inward flow. 

\begin{theorem}
\label{thm:lazyGD_regret}
Assume that for each $t$, the loss function $\ell_t$ is $L$-Lipschitz and
convex, the set $C_t$ is closed and convex, and the pair $(\ell_t, C_t)$
satisfies inward flow. Then, for all $T \geq 
1$ and $\u \in \R^m$, the lazy \GD{} iterates produced by
\eqref{eq:lazyGD_update} and \eqref{eq:lazyGD_projection} satisfy 
\[
\frac{1}{T} \sum_{t=1}^T \ell_t(\btheta_t) - \frac{1}{T} \sum_{t=1}^T \ell_t(\u)
\leq \frac{\|\bttheta_1 - \u\|_2^2}{2\eta T}  + \frac{\eta (2D+1) L^2}{2}.  
\]
\end{theorem}

\begin{proof}
By convexity, $\ell_t(\u) \geq \ell_t(\btheta_t) + \langle \g_t(\btheta_t),
\u-\btheta_t \rangle$. Rearranging and summing over $t$ gives   
\begin{equation}
\label{eq:lazyGD_regret_bound1}
\sum_{t=1}^T (\ell_t(\btheta_t) - \ell_t(\u))
\leq \sum_{t=1}^T \langle \g_t(\btheta_t), \btheta_t - \u \rangle.
\end{equation}
By the representation \eqref{eq:lazyGD_with_delay_unrolled}, note that we can
write  
\[
\bttheta_1 - \eta \sum_{s=1}^{t-D-1} \g_s(\btheta_s) = \btheta_{t} + \v_{t} 
\]
for some $\v_{t} \in N_{C_t}(\btheta_{t})$. (Recall our convention that we
set $g_t(\btheta_t) = 0$ for $t \leq 0$.) In other words, 
\[
\btheta_{t} = \bttheta_1 - \eta \sum_{s=1}^{t-D-1} \g_s(\btheta_s) - \v_{t}, 
\]
and therefore we have
\begin{align*}
\sum_{t=1}^T \langle \g_t(\btheta_t), \btheta_t \rangle 
&= \sum_{t=1}^T \langle \g_t(\btheta_t), \bttheta_1 \rangle - \eta
  \sum_{t=1}^T \bigg\langle \g_t(\btheta_t), \sum_{s=1}^{t-D-1} \g_s(\btheta_s) 
  \bigg\rangle - \sum_{t=1}^T \langle \g_t(\btheta_t), \v_t \rangle \\
&\leq \sum_{t=1}^T \langle \g_t(\btheta_t), \bttheta_1 \rangle - \eta
  \sum_{t=1}^T \bigg\langle \g_t(\btheta_t), \sum_{s=1}^{t-D-1} \g_s(\btheta_s) 
  \bigg\rangle,
\end{align*}
where the second line holds because each summand in the third sum satisfies $\langle
\g_t(\btheta_t), \v_t \rangle \geq 0$: if $\btheta_t \in \mathrm{int}(C_t)$,
then $\v_t = 0$, otherwise if $\btheta_t \in \mathrm{bd}(C_t)$, then this inner
product is nonnegative by part (i) of Lemma \ref{lemma:inward_flow} as a
consequence of inward flow. Plugging this into \eqref{eq:lazyGD_regret_bound1}
gives     
\begin{equation}
\label{eq:lazyGD_regret_bound2}
\sum_{t=1}^T (\ell_t(\btheta_t) - \ell_t(\u))
\leq \sum_{t=1}^T \langle \g_t(\btheta_t), \bttheta_1 - \u \rangle - \eta  
\sum_{t=1}^T \bigg\langle \g_t(\btheta_t), \sum_{s=1}^{t-D-1}\g_s(\btheta_s) 
\bigg\rangle, 
\end{equation}
and from here on, we follow standard arguments for online gradient descent
(or follow the regularized leader, more generally). Beginning with the second
term in \eqref{eq:lazyGD_regret_bound2}, observe 
\begin{align*}
\bigg\langle \g_t(\btheta_t), \sum_{s=1}^{t-D-1}\g_s(\btheta_s) \bigg\rangle   
&= \bigg\langle \g_t(\btheta_t), \sum_{s=1}^{t-1} \g_s(\btheta_s) \bigg\rangle -
\bigg\langle \g_t(\btheta_t), \sum_{s=t-D}^{t-1} \g_s(\btheta_s) \bigg\rangle \\
\numberthis\label{eq:lazyGD_regret_bound3}
&\geq \bigg\langle \g_t(\btheta_t), \sum_{s=1}^{t-1} \g_s(\btheta_s) \bigg\rangle -
DL^2, 
\end{align*}
where the second line uses Lipschitzness. For the first term above, note that
\[
\bigg\langle \g_t(\btheta_t), \sum_{s=1}^{t-1} \g_s(\btheta_s) \bigg\rangle =
\frac{1}{2} \Bigg( \bigg\| \sum_{s=1}^{t}\g_s(\btheta_s) \bigg\|_2^2 -
\bigg\| \sum_{s=1}^{t-1} \g_s(\btheta_s) \bigg\|_2^2 - \|\g_t(\btheta_t)\|_2^2
\Bigg).  
\]
Summing over $t = 1,\dots,T$, the right-hand side telescopes, yielding
\[
\sum_{t=1}^T \bigg\langle \g_t(\btheta_t), \sum_{s<t} \g_s(\btheta_s)
\bigg\rangle = \frac{1}{2} \bigg\| \sum_{t=1}^T \g_t(\btheta_t) \bigg\|_2^2 -
\frac{1}{2} \sum_{t=1}^T \|\g_t(\btheta_t)\|_2^2.
\]
Substituting back into \eqref{eq:lazyGD_regret_bound3}, we get
\begin{align*}
\sum_{t=1}^T \bigg\langle \g_t(\btheta_t), \sum_{s=1}^{t-D-1} \g_s(\btheta_s)
  \bigg\rangle 
&\geq \frac{1}{2} \bigg\| \sum_{t=1}^T \g_t(\btheta_t) \bigg\|_2^2 - \frac{1}{2} 
\sum_{t=1}^T \|\g_t(\btheta_t)\|_2^2 - DL^2T \\
&\geq \frac{1}{2} \bigg\| \sum_{t=1}^T \g_t(\btheta_t) \bigg\|_2^2 - \frac{1}{2}
(2D+1) L^2T,
\end{align*}
where the second line again uses Lipschitzness, and from
\eqref{eq:lazyGD_regret_bound2} we then have   
\begin{equation}
\label{eq:lazyGD_regret_bound4} 
\sum_{t=1}^T (\ell_t(\btheta_t) - \ell_t(\u))
\leq \sum_{t=1}^T \langle \g_t(\btheta_t), \bttheta_1 - \u \rangle -
\frac{\eta}{2} \bigg\| \sum_{t=1}^T \g_t(\btheta_t) \bigg\|_2^2 + \frac{\eta}{2}
(2D+1) L^2T.
\end{equation}
Now we bound the first two terms in \eqref{eq:lazyGD_regret_bound4}. Observe   
\[
0 \leq \bigg\| \bttheta_1 - \u - \eta \sum_{t=1}^T \g_t(\btheta_t)  
  \bigg\|_2^2 = \|\bttheta_1 - \u\|_2^2 - 2\eta \bigg\langle \bttheta_1 - \u,    
  \sum_{t=1}^T \g_t(\btheta_t) \bigg\rangle + \eta^2 \bigg\| \sum_{t=1}^T 
  \g_t(\btheta_t) \bigg\|_2^2, 
\]
which implies
\[
\sum_{t=1}^T \langle \g_t(\btheta_t), \bttheta_1 - \u \rangle \leq 
\frac{1}{2\eta} \|\bttheta_1 - \u\|_2^2 + \frac{\eta}{2} \bigg\| \sum_{t=1}^T    
  \g_t(\btheta_t) \bigg\|_2^2.
\]
Plugging this into \eqref{eq:lazyGD_regret_bound4} yields the desired result. 
\end{proof}

\subsection{Regret theory for MultiQT}

In this section, we prove the regret bound for MultiQT, which follows easily by combining our above result on the regret of lazy mirror descent with a lemma characterizing the optimal comparator. 

\subsubsection{Proof of \Cref{thm:multiQT_regret_with_delay}}

This is a direct consequence of \Cref{thm:lazyGD_regret},
specialized to the MultiQT setting. By this result, for any $\btheta$
(which was written as $\u$ in the previous theorem),
\begin{align*}
\frac{1}{T} \sum_{t=1}^T \ell_t(\btheta_t) - \frac{1}{T} \sum_{t=1}^T
  \ell_t(\btheta) 
&\leq \frac{\|\bttheta_1 - \btheta\|_2^2}{2\eta T} + \frac{\eta (2D+1) L^2}{2}
  \\    
&\leq \frac{\|\bttheta_1\|_2^2}{\eta T} + \frac{\|\btheta\|_2^2}{\eta T} +
\frac{\eta (2D+1) L^2}{2},
\end{align*}
where the second line uses $\|a+b\|_2^2 \leq 2\|a\|_2^2 + 2\|b\|_2^2$. Now we    
plug in an optimal point $\btheta^*$, which is defined to minimize the aggregate
quantile loss, and we use the optimal comparator lemma below, which says that 
$\|\btheta^*\|_2^2 \leq R^2 m$. This completes the proof.      

\subsubsection{Optimal comparator lemma}

We state and prove a result used above regarding the $\ell_2$ norm of the optimal
comparator in the MultiQT setting.

\begin{lemma}
\label{lemma:optimal_comparator}
Let $\btheta^*$ be an
optimal fixed offset in hindsight, according to the MultiQT
loss --- that is, $\btheta^*$ minimizes 
\smash{$\sum_{t=1}^T \ell_t(\btheta)$} over all $\btheta \in \R^m$, where $\ell_t(\theta) = \rho_{\cA}(b_t + \theta, y_t)$. Then, under the conditions of \Cref{thm:multiQT_regret_with_delay}, we have $\|\btheta^*\|_2^2 \leq R^2 m$.  
\end{lemma}

\begin{proof}
In brief, $\btheta^*$ is the vector of empirical quantiles of the residuals,
which is guaranteed to lie in between the extremes of these residuals, implying the claimed result after a comparison inequality between $\ell_\infty$
and $\ell_2$ norms. In more detail, recall that in \eqref{eq:quantile_loss} we defined the quantile loss
$\rho_{\alpha}$ for a single level $\alpha$. From
the definition, it is immediate that  
\[
\rho_{\alpha}(b_t^{\alpha} + \theta^{\alpha}, y_t) =
\rho_{\alpha}(\theta_t^{\alpha}, y_t - b_t^{\alpha}),
\]
and therefore the MultiQT loss $\ell_t$
defined in \eqref{eq:multiqt_loss} can be
rewritten as 
\[
\ell_t(\theta) = \sum_{\alpha \in \cA} \rho_{\alpha}(\theta_t^{\alpha}, y_t -
b_t^{\alpha}).  
\]
Summing this over $t = 1,\dots,T$, we get
\[
\sum_{t=1}^T \sum_{\alpha \in \cA} \rho_{\alpha}(\theta_t^{\alpha}, y_t -
b_t^{\alpha}) = \sum_{\alpha \in \cA} \sum_{t=1}^T
\rho_{\alpha}(\theta_t^{\alpha}, y_t - b_t^{\alpha}),  
\]
of which $\btheta^*$ is the minimizer
over all $\btheta \in \R^m$. This minimization decouples into separate
minimizations per quantile level. For each level $\alpha$, by a standard result,
minimizers of the loss
\[
\sum_{t=1}^T \rho_{\alpha}(\theta_t^{\alpha}, y_t - b_t^{\alpha})   
\]
are empirical $\alpha$-level quantiles of the given data, here the residuals \smash{$y_t - 
  b_t^{\alpha}$}, $t = 1,\dots,T$. In general, this will not be unique, but any
such minimizer will lie in between the maximum and minimum values of the data, which are bounded by $-R$ 
and $R$ by assumption. Hence, we know that $\|\btheta^*\|_\infty \leq R$, 
and therefore \smash{$\|\btheta^*\|_2 \leq R \sqrt{m}$} by a standard
comparison inequality between $\ell_\infty$ and $\ell_2$ norms. 
\end{proof}






\section{Miscellaneous negative results}
\label{sec:negative_results_APPENDIX}

In this section, we prove the negative results stated in the paper.

\subsection{Proof of \Cref{prop:ordering_QT_fails}}

We prove this by constructing a counterexample, which is valid both when $G$ is the sorting operator and the isotonic projection  
operator. Take \smash{$b_t^{\alpha} = 0$} for all $\alpha$ and $t$. This implies $\q_t = \btheta_t$, so below we will reference $\btheta_t$ directly. For
simplicity, we consider only two quantile levels $\cA=\{\alpha, \beta\}$, where
$\alpha=0.5$ and $\beta=0.75$.
Recall that \smash{$\theta_t^{\alpha}$} is the $\alpha$-level QT forecast
at time $t$, and let \smash{$\htheta_t^{\alpha}$} denote the $\alpha$-level
forecast after applying the map $G$. We initialize \smash{$\theta_1^{\alpha} =
  \theta_1^{\beta} = 0$}.  
We now define a sequence $y_t$ with crossing events at a nonvanishing fraction
of time steps, resulting in the incorrect long-run coverage after applying
$G$. 
Consider the following sequence of
outcomes, visualized in Figure \ref{fig:QT_sorting_failure}:
\begin{itemize}
\item $y_1$ lands above both forecasts, so both forecasts increase and become 
  separated by a positive gap; 
\item $y_2$ lands in this gap, so the $\alpha$-level forecast decreases and the
  $\beta$-level forecast increases, and the quantiles are now crossed;  
\item $y_3$ lands in between the two crossed quantiles;
\item $y_4$ through $y_8$ are a sequence of values that cause the forecasts to
  reset to the starting point of zero at time $t=9$, at which point we repeat
  the entire subsequence ad infinitum. 
\end{itemize}
Of the eight time steps in each subsequence, \smash{$\theta_t^{\alpha}$} covers
$y_t$ four times, yielding the desired coverage of 0.5, and
\smash{$\theta_t^{\beta}$} covers $y_t$ six times out of eight, yielding the
desired coverage of 0.75. 

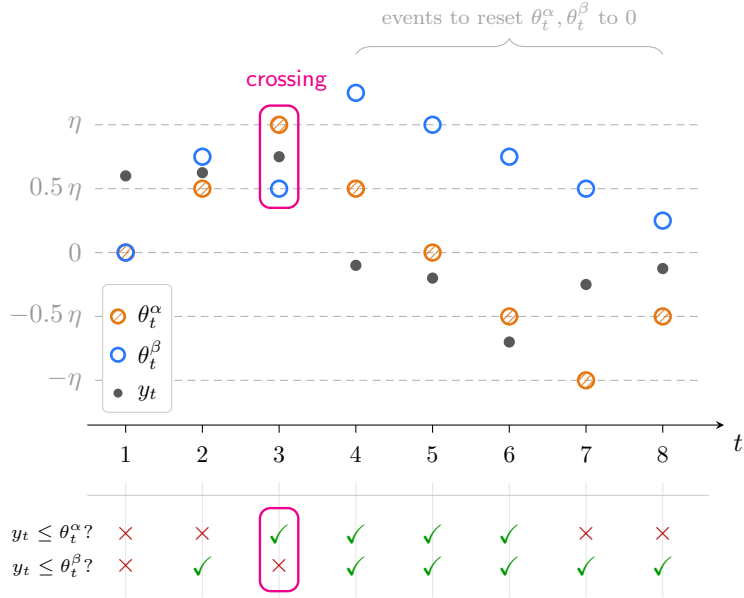
\begin{figure}[t]
\centering
\begin{tikzpicture}[x=1.2cm,y=2.0cm,>=stealth, scale=0.85]
  \definecolor{aColor}{RGB}{230,120,20}   
  \definecolor{bColor}{RGB}{40,120,255}   
  \definecolor{yColor}{gray}{0.35}        

  \draw[->] (0.5,-1.35) -- (8.8,-1.35) node[below right] {$t$};
  \foreach \t in {1,...,8} {
    \draw (\t,-1.35) -- (\t,-1.41) node[below=2pt,scale=0.9] {$\t$};
  }

  \foreach \yy/\lab in {1.0/\,$\eta$, 0.5/0.5\,$\eta$, 0/0,
                        -0.5/$-0.5\,\eta$, -1.0/$-\eta$} {
    \draw[densely dashed,gray!60] (0.6,\yy) -- (8.5,\yy);
    \node[left,gray!80] at (0.55,\yy) {\lab};
  }

  \foreach \t/\y in {1/0.00, 2/0.50, 3/1.00, 4/0.50, 5/0.00, 6/-0.50, 7/-1.00, 8/-0.50} {
    \draw[aColor, line width=1pt, pattern=north east lines, pattern color=aColor, fill opacity=0.9]
      (\t,\y) circle [radius=3.5pt];
  }

  \foreach \t/\y in {1/0.00, 2/0.75, 3/0.50, 4/1.25, 5/1.00, 6/0.75, 7/0.50, 8/0.25} {
    \draw[bColor, line width=1pt] (\t,\y) circle [radius=3.5pt];
  }

  \foreach \t/\y in {
    1/0.60,  2/0.625, 3/0.75,
    4/-0.10, 5/-0.20, 6/-0.70,
    7/-0.25, 8/-0.125
  } {
    \fill[yColor] (\t,\y) circle [radius=2.5pt];
  }

  \draw[magenta, line width=0.9pt, rounded corners=4pt]
        (2.75,0.35) rectangle (3.25,1.15);
  \node[magenta] at (3.1,1.35) {\small \textsf{crossing}};

  \draw[decorate,decoration={brace,amplitude=6pt},gray!60]
        (4.0,1.55) -- (8.0,1.55);
  \node[gray!70] at (6.0,1.85) {\footnotesize events to reset $\theta_t^\alpha,\theta_t^\beta$ to $0$};

  \foreach \t in {1,...,8} { \draw[gray!20] (\t,-1.8) -- (\t,-2.7); }
  \draw[gray!40] (0.5,-1.9) -- (8.6,-1.9);

  \node[anchor=east] at (0.7,-2.20) {\scriptsize $y_t\le \theta_t^\alpha$?};
  \node[anchor=east] at (0.7,-2.45) {\scriptsize $y_t\le \theta_t^\beta$?};


  \foreach \t/\sym in {1/\cross,2/\cross,3/\tick,4/\tick,5/\tick,6/\tick,7/\cross,8/\cross} {
    \node at (\t,-2.20) {\sym};
  }
  \foreach \t/\sym in {1/\cross,2/\tick,3/\cross,4/\tick,5/\tick,6/\tick,7/\tick,8/\tick} {
    \node at (\t,-2.45) {\sym};
  }


  \draw[magenta, line width=0.9pt, rounded corners=4pt]
        (2.75,-2.65) rectangle (3.25,-2);

  \begin{scope}[shift={(0.7,-1.25)}]
    \draw[fill=white, draw=gray!40,rounded corners=2pt] (0,0) rectangle (0.9,1.0);
    \draw[aColor, line width=1pt, pattern=north east lines, pattern color=aColor, fill opacity=0.7]
          (0.2,0.75) circle [radius=3pt];
    \node[anchor=west,scale=0.9] at (0.35,0.75) {$\theta_t^{\alpha}$};
    \draw[bColor, line width=1pt] (0.2,0.45) circle [radius=3pt];
    \node[anchor=west,scale=0.9] at (0.35,0.45) {$\theta_t^{\beta}$};
    \fill[yColor] (0.2,0.15) circle [radius=2pt];
    \node[anchor=west,scale=0.9] at (0.35,0.15) {$y_t$};
  \end{scope}
\end{tikzpicture}
\vspace{-1cm}
\caption{An example where post hoc ordering the QT iterates (via either sorting or isotonic regression) fails to achieve the correct
  coverage with two quantile levels, $\alpha=0.5$ and $\beta=0.75$. The sequence
  $y_t$ relative to \smash{$\theta_t^{\alpha}$} and \smash{$\theta_t^{\beta}$}
  is (1) above both, (2) in between, (3) in between the crossed quantiles, (4-6)
  below both, (7-8) in between both. Both forecasts are initialized to zero at
  time $t=1$ and return to zero at time $t=9$, at which point the sequence of
  $y_t$ is repeated. Success and failure of coverage is marked with
  \tick~and \cross, respectively. Averaging across each row, we see that 
  \smash{$\theta_t^{\alpha}$} achieves a coverage of 0.5 and
  \smash{$\theta_t^{\beta}$} a coverage of 0.75, as desired. However, after
  applying post hoc ordering, the coverage events at $t=3$ are modified: for sorting, the coverage events are swapped with each other, and for isotonic regression, either both coverage events become successes or both become failures. In all cases, the ordered iterates fail to achieve the correct coverage for at least one quantile level.}   
\label{fig:QT_sorting_failure}
\end{figure}

When $G$ is the sorting operator, the crossing means the coverage events at the third time step are swapped, causing
the sorted quantiles \smash{$\htheta_t^{\alpha}$} and
\smash{$\htheta_t^{\beta}$} to yield coverages of 3/8 and 7/8, respectively.    

When $G$ is the isotonic regression operator, the crossing at the third time step similarly causes a problem. By the pool adjacent violators
algorithm (PAVA) \citep{barlow1972statistical}, we know that isotonic regression
maps any pair of crossed quantiles to the same value. Thus, rather than swapping the coverage events at
$t=3$, the use of isotonic regression causes one of the
coverage events to flip. Let \smash{$\htheta_3^*$} denote the common value that 
\smash{$\theta_3^{\alpha}$} and \smash{$\theta_3^{\beta}$} are mapped to by
isotonic regression (their average). Now, if \smash{$y_3 \leq \htheta_3^*$},
then the coverage indicators for the ordered quantiles at $t=3$ will both be
one; conversely, if \smash{$y_3 > \htheta_3^*$}, then both indicators will both
be zero. In either case, coverage will fail to be obtained by the ordered 
quantiles at one of the levels (the one whose indicators flipped after applying
isotonic regression).  

\subsection{Proof of \Cref{prop:PGD_fails}}

We construct a counterexample where projected \GD{} fails to yield
coverage. As in the proof of \Cref{prop:ordering_QT_fails}, let
\smash{$b_t^{\alpha} = 0$} for all $\alpha$ and $t$, and $\cA=\{\alpha,
\beta\}$, where now $\alpha < \beta$ and $\alpha + \beta = 0.5$. 
Initialize \smash{$\theta^{\alpha}_1 = \theta^{\beta}_1 = q$} for some
$q\in\R$. Suppose we observe $y_1 > q$, so
\smash{$\ttheta^{\alpha}_2=q+\eta\alpha$} and
\smash{$\ttheta^{\beta}_2=q+\eta\beta$}. Since $\alpha < \beta$, the quantiles
are ordered and therefore we have \smash{$\theta_2^{\alpha} =
  \ttheta_2^{\alpha}$} and \smash{$\theta_2^{\beta} = \ttheta_2^{\beta}$}.    

Now suppose we observe \smash{$y_2 \in (\theta^{\alpha}_2, \theta^{\beta}_2]$},
so we updates the hidden iterates to
\begin{align*}
\ttheta^{\alpha}_3 &= q + 2\eta\alpha, \\
\ttheta^{\beta}_3 &= q + \eta\beta - \eta(1-\beta) = q + \eta(2\beta-1).  
\end{align*}
Since $\beta = 0.5 - \alpha$, we have $2\beta - 1 = -2\alpha$, so a crossing has
occurred: \smash{$\ttheta^{\alpha}_3 > \ttheta^{\beta}_3$}. By the pool adjacent
violators algorithm (PAVA) \citep{barlow1972statistical}, we know that isotonic
regression will map these two values to their average: 
\[
\theta^{\alpha}_3 = \theta^{\beta}_3 
= \frac{\ttheta_3^{\alpha} + \ttheta_3^{\beta}}{2} 
= q + \frac{\eta (2 (\alpha + \beta) -1)}{2} = q,
\]
where the last equality uses $\alpha + \beta = 0.5$. This puts us back to the 
starting point from $t=1$; we can therefore repeat this process, so that the
$\alpha$-level forecasts achieve a coverage of 0.5 and the $\beta$-level
forecasts achieve a coverage of 1.    

\subsection{MultiQT with sorting}

The next result highlights the importance of using isotonic projection to
enforce the ordering constraints in MultiQT, as it shows that replacing this
with sorting does not achieve calibration in general.  

\begin{proposition}\label{prop:multiqt_with_sorting_fails}
Let $\q_t, t=1,2,\dots$ be forecasts obtained by running Procedure
\ref{proc:multiQT} but where the projection step in
\eqref{eq:multiQT_projection} is replaced with sorting---that is, $\q_t \in
\R^m$ is set equal to the vector which results from sorting the entries of
\smash{$\b_t + \bttheta_t \in \R^m$}. Then, there exists a set of levels $\cA$ 
and sequence of target values and base forecasts $(y_t,\b_t)$ with bounded errors 
(i.e., \smash{$|y_t - b_t^{\alpha}|$} is bounded for all $\alpha$ and $t$) such
that for any learning rate $\eta > 0$ the forecasts fail to achieve calibration :
\smash{$\lim_{T \to \infty} \frac{1}{T} \sum_{t=1}^T \one\{y_t \leq
  q_t^{\alpha}\} \neq \alpha$} for some $\alpha \in \cA$.
\end{proposition}

\begin{proof}
As in previous counterexamples, we set \smash{$b_t^{\alpha} = 0$} for all
$\alpha$ and $t$, thus $\q_t = \btheta_t$ for all $t$, and $\cA=\{\alpha,  
\beta\}$, where $0 < \alpha < \beta < 1$ and $\alpha = 1-\beta$. Initialize
\smash{$\ttheta^{\alpha}_1 < \ttheta^{\beta}_1$}, such that
\smash{$\ttheta^{\beta}_1 - \ttheta^{\alpha}_1 < 2 \eta \alpha$}. Since these
are ordered, we have \smash{$\theta_1^{\alpha} = \ttheta_1^{\alpha}$} and 
\smash{$\theta_1^{\beta} = \ttheta_1^{\beta}$}.

Suppose we observe \smash{$y_1 \in (\theta^{\alpha}_1, \theta^{\beta}_1]$}, so
the hidden iterate updates are 
\begin{align*}
\ttheta^{\alpha}_2 &= \ttheta^{\alpha}_1 + \eta\alpha, \\
\ttheta^{\beta}_2 &= \ttheta^{\beta}_1 - \eta(1-\beta). 
\end{align*}
These are crossed, as \smash{$\ttheta^{\beta}_2 - \ttheta^{\alpha}_2 = 
\ttheta^{\beta}_1 - \ttheta^{\alpha}_1 - \eta(1-\beta+\alpha) =
\ttheta^{\beta}_1 - \ttheta^{\alpha}_1 - 2\eta\alpha < 0$}, by assumption.  
Sorting yields the played updates: \smash{$\theta_2^{\alpha} =
  \ttheta_2^{\beta}$} and \smash{$\theta_2^{\beta} = \ttheta_2^{\alpha}$}, and
the key realization for this counterexample is that the played iterates now have
a  \emph{positive} gap (instead of zero gap, as with isotonic projection), which
can be exploited to continue driving them farther away from each other.

In particular, suppose \smash{$y_2 \in (\theta^{\alpha}_2, \theta^{\beta}_2]$},
so the hidden iterate updates are
\begin{align*}
\ttheta^{\alpha}_3 &= \ttheta^{\alpha}_2 + \eta\alpha, \\
\ttheta^{\beta}_3 &= \ttheta^{\beta}_2 - \eta(1-\beta). 
\end{align*}
Since \smash{$\ttheta^{\alpha}_2$} and \smash{$\ttheta^{\beta}_2$} are already
crossed, this update keeps them crossed (and increases the gap in between
them). Sorting yields played iterates \smash{$\theta_3^{\alpha} =
  \ttheta_3^{\beta}$} and \smash{$\theta_3^{\beta} = \ttheta_3^{\alpha}$}.
This can be repeated ad infinitum, causing \smash{$\ttheta_t^{\alpha} \to \infty$}
and  \smash{$\ttheta_t^{\beta} \to -\infty$} as $t \to \infty$. Hence, the
$\beta$-level coverage goes to 1 and $\alpha$-level coverage goes to 0. 
\end{proof}

\subsection{MultiQT with positively separated quantiles}

The next result highlights the importance of not only projection, but
specifically projection to the (shifted) isotonic cone; modifying the constraint
set to induce quantiles separated by $\varepsilon > 0$ fails to achieve 
calibration in general.     

\begin{proposition}\label{prop:multiqt_with_epsilon_separation_fails} 
Consider the set defined in \eqref{eq:eps_separation}, which can be equivalently
written as \smash{$C_t^{\varepsilon} = \cK^{\varepsilon} - \b_t$}, where 
\[
\cK^{\varepsilon} = \Big\{ x \in \R^m : x_i + \varepsilon \leq x_{i+1},
\; i = 1,2, \dots, m-1 \Big\}
\]
for $\varepsilon > 0$.
Let $\q_t, t=1,2,\dots$ be forecasts obtained by running Procedure
\ref{proc:multiQT} but where the projection step in
\eqref{eq:multiQT_projection} is replaced with \smash{$\q_t =
  \Pi_{\cK^{\varepsilon}}(\b_t + \bttheta_t)$.} Then, for any $\varepsilon > 0$, 
there exists a set of levels $\cA$ and sequence of target values and base 
forecasts $(y_t,\b_t)$ with bounded errors (i.e., \smash{$|y_t - b_t^{\alpha}|$}
is bounded for all $\alpha$ and $t$) such that for any learning rate $\eta > 0$
the forecasts fail to achieve calibration : \smash{$\lim_{T \to \infty} \frac{1}{T}
  \sum_{t=1}^T \one\{y_t \leq q_t^{\alpha}\} \neq \alpha$} for some $\alpha \in \cA$. 
\end{proposition}

\begin{proof}
We adapt the construction from the proof of
\Cref{prop:multiqt_with_sorting_fails}. Consider the same initialization and initial target $y_1$. Under projection onto
\smash{$\cK^{\varepsilon}$}, the played updates satisfy
\smash{$\theta_2^{\beta} - \theta_2^{\alpha} \geq \varepsilon$}, creating a 
strictly positive gap. By choosing \smash{$y_2 \in (\theta^{\alpha}_2,
  \theta^{\beta}_2]$}, we can continue driving the played iterates away from one 
another. This results in the same limiting behavior as in the previous
construction, where coverage at the upper level converges to 1 and coverage at
the lower level converges to 0.  
\end{proof}


\section{Fast rates for constrained gradient equilibrium}
\label{sec:proofs_for_faster_rate_APPENDIX}

We derive fast rates for constrained gradient equilibrium and, consequently, calibration without crossings by refining the
analysis to leverage a positive curvature assumption. We note that these
results could be extended to the setting of delayed feedback, but for
simplicity, we state and prove all results in the no-delay setting ($D=0$). 

\subsection{Proof of \Cref{prop:GEQ_with_bounded_distance}}

We follow the general proof structure from Proposition 5 of
\cite{angelopoulos2025gradient}. For convenience, let us redefine \smash{$h_t =  
\max\{\|\bttheta_1\|_2, h_t\}$} and let \smash{$h_0 = \|\bttheta_1\|_2$}. 
We will use induction to show $\|\bttheta_{T+1}\|_2 \leq h_T + B + \eta L$ for
all $T \geq 0$. The base case for $T=0$ holds trivially. For the inductive step,
assume the inequality holds up through $T$. We split into two cases. First, if
$\|\bttheta_T\|_2 \leq h_T + B$, then by the triangle inequality we have 
\begin{align*}
\|\bttheta_{T+1}\|_2 
&\leq \|\bttheta_T\|_2 + \eta \|g_T(\btheta_T)\|_2 \\
&\leq h_T + B + \eta L,
\end{align*}
where the second inequality invokes Lipschitzness of the loss. Second, if
$\|\bttheta_T\|_2 > h_T + B$, then
\begin{align*}
\|\bttheta_{T+1}\|_2^2 
&= \|\bttheta_T\|_2^2 + \eta^2 \|g_T(\btheta_T)\|_2^2 - 
  2 \eta \langle \bttheta_T, g_T(\btheta_T) \rangle \\
&\leq \|\bttheta_T\|_2^2 + \eta^2 L^2 - 
  2 \eta \langle \bttheta_T, g_T(\btheta_T) \rangle \\
&\leq \|\bttheta_T\|_2^2 + \eta^2 L^2 - 2 \eta \phi_T(\btheta_T) \\
&\leq \|\bttheta_T\|_2^2 \\
&\leq (h_T + B + \eta L)^2 \\
&\leq (h_{T+1} + B + \eta L)^2
\end{align*}
where second line applies Lipschitzness, the third is discussed below, the fourth uses the assumed curvature
condition on $\phi_T(\btheta_T)$, the fifth applies the inductive hypothesis,
and the sixth uses the increasing property of $h_t$. Taking a square root
would conclude the inductive step.    

It remains to verify the third line above, which uses \smash{$\langle
  \bttheta_T, \g_T(\btheta_T) \rangle \geq \phi_T(\btheta_T)$}. This is due to
restorativity, inward flow, and the bounded distance assumption. In particular,
note that by the triangle inequality 
\[   
\|\btheta_T\|_2 > \|\bttheta_T\|_2 - \|\btheta_T - \bttheta_T\|_2 \geq h + B - B
= h.  
\]
Thus, by restorativity of $\ell$, we have $\langle \btheta_T, \g_T(\btheta_T) \rangle
\geq \phi_T(\btheta_T)$. By inward flow, we can then apply
part (ii) of \Cref{lemma:inward_flow} to get \smash{$\langle \bttheta_T, \g_T(\btheta_T) \rangle 
  \geq \phi_T(\btheta_T)$} as desired. This completes the proof of the iterate
bound.

For the average gradient bound, we simplify the iterate bound using $\max\{a,b\}
\leq a + b$, and then apply \eqref{eq:lazyGD_avg_gradient_with_delay} with
$D=0$. 

\subsection{Proof of \Cref{lemma:point_forecasts_bounded_distance}}

Our proof has two main steps. First, we will show that if the base forecaster is
a point forecaster, then the entries of \smash{$\bttheta_t$} cannot get too
crossed: specifically, \smash{$\bttheta^{\alpha_i}_t - \ttheta_t^{\alpha_{i+1}}
  \leq \eta$} for all $i = 1,2,\dots, m-1$. We then bound the projection 
distance \smash{$\|\bttheta_t - \btheta_t\|_2 = \|\bttheta_t  - 
  \Pi_{\cK-b_t}(\bttheta_t)\|_2$} for any \smash{$\bttheta_t$} satisfying this
crossing bound to obtain the desired result.     

Before beginning with the first step, we make the following important
observation about the MultiQT iterates when the base forecasts are point
forecasts: if \smash{$b_t^{\alpha} = \mu_t$} for all $\alpha$, then the played
iterate is simply the result of running isotonic regression on the hidden
iterate, i.e.,   
\[
\btheta_t = \Pi_{\cK}(\bttheta_t).
\]
To see why, recall that by definition \smash{$\btheta_t =
  \Pi_{\cK-b_t}(\bttheta_t)$}, but for $b_t = \mu_t \mathbf{1}$, where $\mathbf{1} \in \R^m$ denotes the vector of all ones, the isotonic cone is shift-invariant: $\cK - \mu_t \mathbf{1} = 
\cK$. 

We now show that \smash{$\bttheta^{\alpha_i}_t - \ttheta_t^{\alpha_{i+1}} \leq
  \eta$} for all $i = 1,2,\dots, m-1$ and all times $t$, by induction on
$t$. The base case holds by assumption: \smash{$\bttheta_1$} in the lemma is
assumed to lie in $\cK$. Now assume the statement holds through time $t$. Define
\smash{$\Delta_t^i = \ttheta_t^{\alpha_{i+1}} - \bttheta^{\alpha_i}_t$}, where 
\smash{$\Delta^i_t < 0$} means a crossing has occurred, and \smash{$\Delta^i_t 
  \geq 0$} means entries $i$ and $i+1$ of \smash{$\bttheta_t$} are ordered. Fix
any $i$. We break our analysis into two cases.
\begin{itemize}
\item \emph{Case 1}: \smash{$\Delta^i_t < 0$} (the entries are crossed at time 
  $t$). In this case, isotonic regression will pool entries $i$ and $i+1$ so
  that \smash{$q_t^{\alpha_i} = q_t^{\alpha_{i+1}}$}
  \citep{barlow1972statistical}, which implies \smash{$\cov_t^{\alpha_i} = 
    \cov_t^{\alpha_{i+1}}$}. Thus,     
  \[
  \Delta^i_{t+1} = \Delta^i_t - \eta [(\cov_t^{\alpha_{i+1}} - \alpha_{i+1}) -
  (\cov_t^{\alpha_i} - \alpha_i)] = \Delta^i_t + \eta(\alpha_{i+1} - \alpha_i)
  \geq \Delta_t^i \geq - \eta, 
  \]
  where the last inequality follows from the inductive hypothesis.

\item \emph{Case 2}: \smash{$\Delta^i_t \geq 0$} (the entries are ordered at
  time $t$). In this case, \smash{$q_t^{\alpha_i} \leq q_t^{\alpha_{i+1}}$}, 
  so \smash{$\cov_t^{\alpha_{i+1}} \geq \cov_t^{\alpha_i}$}, and 
  \[
  \Delta^i_{t+1} = \Delta^i_t - \eta [(\cov_t^{\alpha_{i+1}} -
  \cov_t^{\alpha_i}) - (\alpha_{i+1} - \alpha_i)] \geq \Delta^i_t - \eta (1 -
  (\alpha_{i+1} - \alpha_i)). 
  \]
  Since \smash{$\Delta^i_t \geq 0$} and $(\alpha_{i+1} - \alpha_i) > 0$, it
  follows that \smash{$\Delta^i_{t+1} > -\eta$}. 
\end{itemize}
In both cases, \smash{$\Delta^i_{t+1} \geq -\eta$}, which establishes
\smash{$\ttheta^{\alpha_i}_{t+1} - \ttheta_{t+1}^{\alpha_{i+1}} \leq \eta$},
completing the inductive proof.  

Now fix any \smash{$\bttheta_t$} satisfying \smash{$\bttheta^{\alpha_i}_t -  
\ttheta_t^{\alpha_{i+1}} \leq \eta$} for all $i = 1,2,\dots,m-1$. Consider
constructing the ordered vector \smash{$\bar\btheta_t$} as follows: iterate
through the indices of \smash{$\bttheta_t$} and, whenever we encounter an
unordered entry, we set its value equal to that of the previous index. To make
this more explicit:
\begin{itemize}
\item we set \smash{$\bar\theta_t^{\alpha_1} = \ttheta_t^{\alpha_1}$};
\item for $i=2,\dots,m-1$, we set 
  \[
  \bar\theta_t^{\alpha_i} = \begin{cases}
  \bar\theta_t^{\alpha_{i-1}} & \text{if $\ttheta_t^{\alpha_i} <
    \bar\theta_t^{\alpha_{i-1}}$}, \\ 
  \ttheta_t^{\alpha_i} & \text{otherwise}.
  \end{cases}
  \]
\end{itemize}
Since $\bar\btheta_t \in \cK$, we have
\begin{equation}
\label{eq:projection_bound}
\|\bttheta_t - \Pi_{\cK}(\bttheta_t)\|_2 \leq \|\bttheta_t - \bar\btheta_t\|_2. 
\end{equation}
To bound the right-hand side, observe that \smash{$\ttheta_t^{\alpha_1} -
  \bar\theta_t^{\alpha_1} = 0$} and, for any $i \geq 2$, 
\[
  \ttheta_t^{\alpha_i} - \bar\theta_t^{\alpha_i} = \begin{cases} 
  \ttheta_t^{\alpha_i} - \bar\theta_t^{\alpha_{i-1}} & 
  \text{if $\ttheta_t^{\alpha_i} < \bar\theta_t^{\alpha_{i-1}}$}, \\  
  0 & \text{otherwise}.
  \end{cases}
\]
As \smash{$\ttheta_t^{\alpha_i} \geq \ttheta_t^{\alpha_{i-1}} - \eta$}, the
above display implies 
\[
\ttheta_t^{\alpha_{i-1}} - \bar\theta_t^{\alpha_{i-1}} - \eta \leq 
\ttheta_t^{\alpha_i} - \bar\theta_t^{\alpha_i} \leq 0.
\]
Therefore \smash{$|\ttheta_t^{\alpha_i} - \bar\theta_t^{\alpha_i}| \leq \eta
  (i-1)$}, and 
\[
\|\bttheta_t - \bar\btheta_t\|_2 \leq \sqrt{\sum_{i=1}^m(\eta (i-1))^2} 
= \eta \sqrt{\sum_{i=1}^m (i-1)^2}
= \eta \sqrt{\frac{m(m-1)(2m-1)}{6}}
\leq \frac{\eta m^{3/2}}{\sqrt{3}}. 
\]
Plugging this bound back into \eqref{eq:projection_bound} completes the proof.

\subsection{Proof of \Cref{cor:multiQT_point_forecast_calibration_rate}}

If we inspect the conclusion \eqref{eq:restorativity_proof_conclusion} from the
proof of \Cref{lemma:multiqt_cond1} carefully, then we see what was shown here 
is actually stronger than the conclusion stated in the lemma. Rephrased, this
proof showed that for any $c \geq 1$, the MultiQT loss $\ell_t$ is $(h_c,
\phi_c)$-restorative at all times $t$, for \smash{$h_c = c Rm^{3/2} /
  d_{\cA}$}, and $\phi_c(\theta) = (c-1) Rm$. Thus to satisfy the positive
curvature condition in \Cref{prop:GEQ_with_bounded_distance}, we require 
\[
(c-1) Rm \geq \eta L^2 / 2 \iff c \geq \eta L^2 / (2Rm) + 1.
\]
The smallest allowable value of $c$ here is $c^* = \eta L^2 / (2Rm)
+ 1$. Recalling that $L^2 = m$ for the MultiQT loss, this leads to the value
\smash{$h^* = h_{c^*} = (\eta/2 + R)(m^{3/2} /  d_{\cA})$}. 

Note that we have shown that each MultiQT loss is $(h^*, \eta L^2 /
2)$-restorative. Since, additionally, the hidden and played iterates remain
within an $\ell_2$ distance of \smash{$B = \eta m^{3/2} / \sqrt{3}$} from each
other by \Cref{lemma:point_forecasts_bounded_distance}, we can apply 
\Cref{prop:GEQ_with_bounded_distance}, which yields the result. 

\section{Additional experimental results} \label{sec:additional_experiments}

In this section, we provide supplementary empirical results that further
illustrate the behavior of MultiQT across our forecasting datasets. First, we 
recompute the main experimental results using PIT entropy as an alternative
calibration metric, in place of the $\ell_1$ calibration error. We find that our conclusions remain qualitatively unchanged. Then, we 
present additional case studies (comprehensive calibration curves and forecast 
visualizations) for COVID-19 death forecasting and energy forecasting, which
confirm that MultiQT consistently improves calibration. 

\subsection{Results using PIT entropy} 
\label{sec:pit_entropy_results_APPENDIX}


\emph{PIT entropy} \citep{gneiting2007probabilistic, rumack2022recalibrating} is
a calibration metric based on the entropy of the distribution of the probability
integral transform (PIT) values, computed from forecasts and their corresponding
targets. Specifically, given forecasts represented via cumulative distribution  
functions (CDFs) $F_t$, $t = 1,2,3,\dots$, and associated target values $y_t$, $
t= 1,2,3,\dots$, the PIT values are defined as $F_t(y_t)$, $t =
1,2,3,\dots$. The PIT entropy is then defined as the Shannon entropy of the
empirical distribution of these PIT values.  

To compute this entropy in practice, we divide the unit interval into $K = 10$  
equal-width bins. Let \smash{$\hat{p}_k$} be the empirical frequency of PIT
values for bin $k$. As our metric, we use the normalized Shannon entropy: 
\[
\hat{H} = -\frac{1}{\log K} \sum_{k=1}^K \hat{p}_k \log \hat{p}_k, 
\]
where the division by $\log K$ ensures that \smash{$\hat{H}$} lies in $[0,1]$. 
Note that under perfect calibration, the PIT values should be distributed
uniformly on $[0,1]$, which has maximal entropy. Thus, a value of
\smash{$\hat{H}$} near one indicates good calibration, while a value near zero
indicates poor calibration.    

In our setting, we have quantile forecasts $\q_t$, $t = 1,2,3,\dots$ and not
CDFs. To construct CDFs from these forecasts (so that we can compute PIT entropy),
we follow the procedure from Appendix A.1 of \cite{buchweitz2025asymmetric}.  
This uses linear interpolation in between intermediate quantiles combined with
exponential tails for values outside the extreme quantiles. Below we describe
the procedure for constructing $F_t$ from $\q_t$.
\begin{itemize}
\item If there exists $i \leq m-1$ such that \smash{$y \in
  [q_t^{\alpha_i},q_t^{\alpha_{i+1}}]$}, then \smash{$F_t(y) = \alpha_i +
  \frac{y -  q_t^{\alpha_i}}{q_t^{\alpha_{i+1}}- q_t^{\alpha_i}}(\alpha_{i+1} - 
  \alpha_i)$}.  
  \begin{itemize}
  \item Note that when ties occur (i.e.,
    \smash{$q_t^{\alpha_{i+1}}=q_t^{\alpha_{i}}$}), the interior slope on this
    segment is undefined, so when $y$ equals a tied forecast value, we set 
    $F_t(y)$ to be the largest quantile level in the tied block. Formally,
    \smash{$F_t(y)=\alpha^{i^*}$} where \smash{$i^* =\max\{i : q_t^{\alpha_i} = 
      y\}$}.     
  \end{itemize}

\item If no such $i$ exists (so \smash{$y < q_t^{\alpha_1}$} or \smash{$y >
    q_t^{\alpha_m}$}), then we use exponential tails, chosen so that the density
  at the boundary matches that at the nearest interior segment.  
  \begin{itemize}
  \item For $y < q_t^{\alpha_1}$, we first find the smallest $i$ such that
    $q_t^{\alpha_{i+1}} \neq q_t^{\alpha_i}$, and compute \smash{$\rho =
    \frac{\alpha_{i+1}-\alpha_i}{q_t^{\alpha_{i+1}} - q_t^{\alpha_i}}$}. Define  
    $\lambda =  \frac{\rho}{\alpha_1}$. We then let $F_t(y) = \alpha_1
    e^{\lambda (y-q_t^{\alpha_1})}$.  
  \item Similarly, for $y > q_t^{\alpha_m}$, we first find the largest $i$ such
    that $q_t^{\alpha_{i+1}} \neq q_t^{\alpha_i}$, and compute \smash{$\rho =  
    \frac{\alpha_{i+1}-\alpha_i}{q_t^{\alpha_{i+1}} - q_t^{\alpha_i}}$}. Define 
    $\lambda = \frac{\rho}{1-\alpha_m}$. We then let $F_t(y) = 1 - (1-\alpha_m)
    e^{-\lambda (y-q_t^{\alpha_m})}$. 
  \end{itemize}
\end{itemize}
After performing this procedure to translate a given sequence of quantile
forecasts into CDFs, we compute the PIT values and PIT entropy as described
above.   

We now reproduce the main figures from Section \ref{sec:experiments} using PIT 
entropy in place of average calibration error, as used in the main
text. Figure \ref{fig:covid_arrows_pit_entropy} is the analog of Figure 
\ref{fig:covid_arrows}, and Figure \ref{fig:energy_arrows_PIT_entropy} the
analog of Figure \ref{fig:energy_arrows}. We see qualitatively very similar
trends, and MultiQT again results in strong improvements in calibration.     

\begin{figure}[ht]
\includegraphics[width=\textwidth]{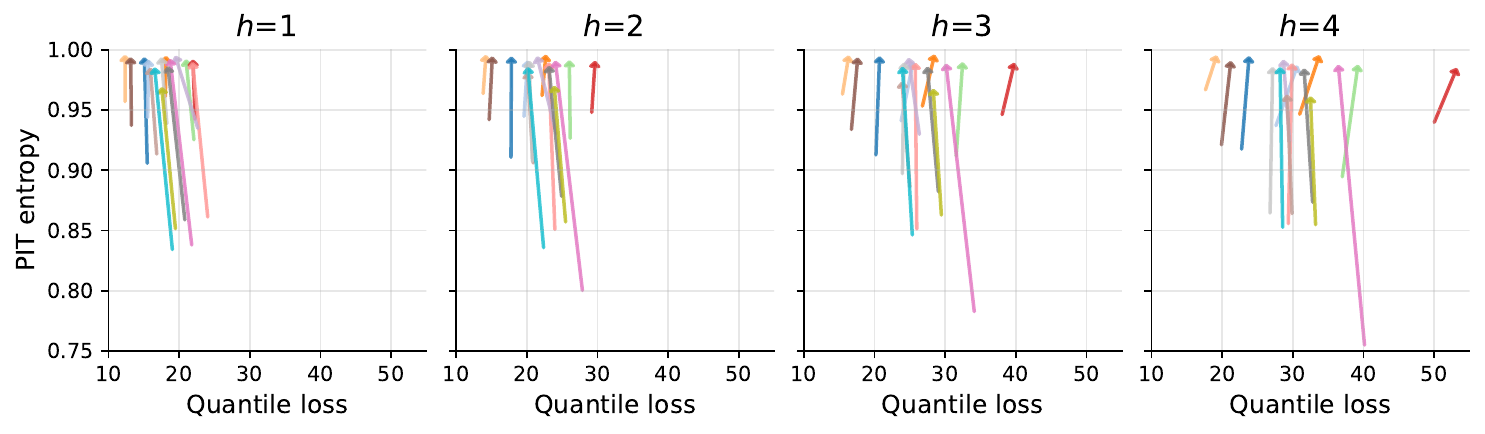}
\caption{PIT entropy versus quantile loss on the COVID-19 death dataset, analogous to Figure
  \ref{fig:covid_arrows}. For PIT entropy, higher is better.}
\label{fig:covid_arrows_pit_entropy}
\end{figure}

\begin{figure}[ht]
\begin{subfigure}{0.5\textwidth}
\includegraphics[width=\linewidth]{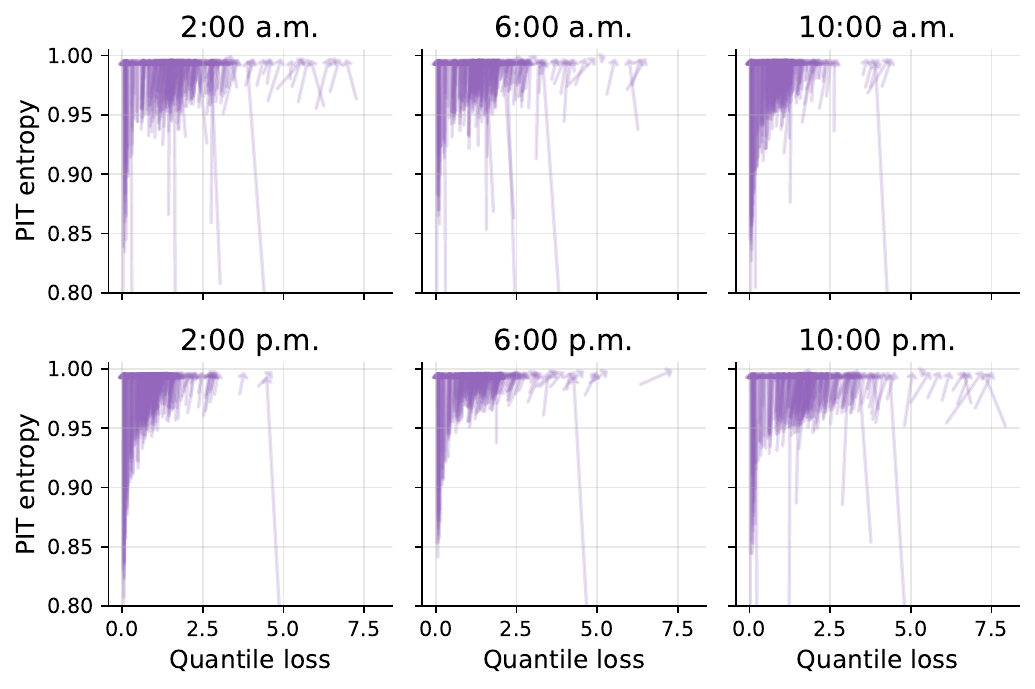}
\caption{Wind}
\label{fig:energy_wind_arrows}
\end{subfigure}%
\begin{subfigure}{0.5\textwidth}
\includegraphics[width=\linewidth]{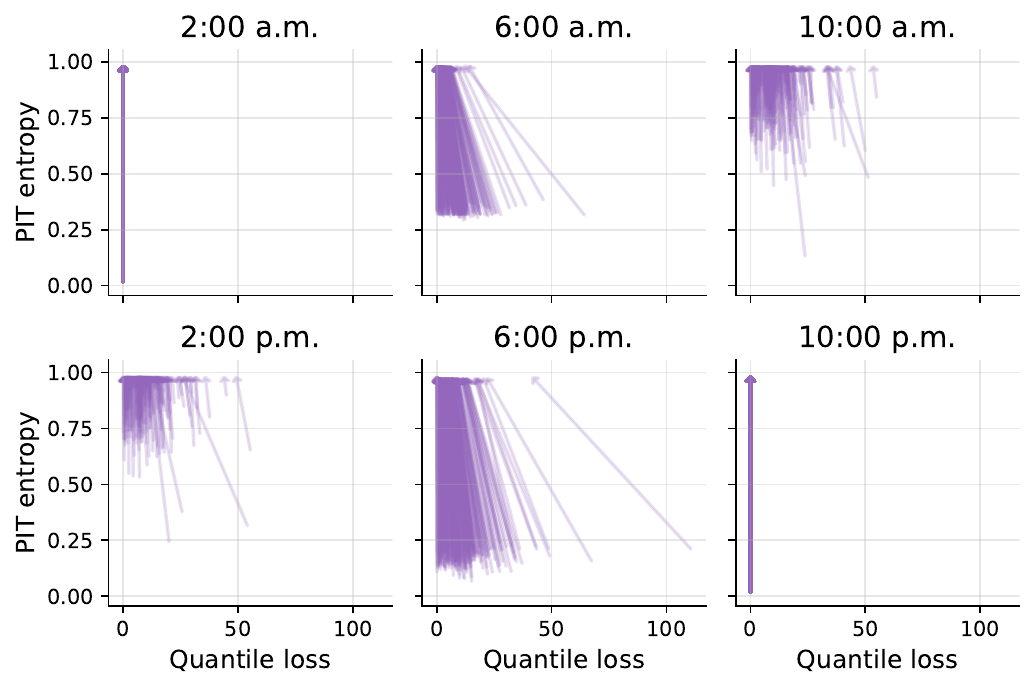}
\caption{Solar}
\label{fig:energy_solar_arrows}
\end{subfigure}
\caption{PIT entropy versus quantile loss on the energy dataset, analogous to Figure
  \ref{fig:energy_arrows}. For PIT entropy, higher is better.}
\label{fig:energy_arrows_PIT_entropy}
\end{figure}

\subsection{Additional COVID-19 forecasting results} 
\label{sec:covid_case_studies_APPENDIX}

To complement Figure \ref{fig:covid_calibration_h=1} in the main paper, which
shows actual versus desired coverage for one-week-ahead COVID-19 death
forecasters before and after applying MultiQT, Figure
\ref{fig:covid_calibration_all_h} shows the same calibration plots for all
forecasting horizons (one, two, three, and four weeks ahead). We observe that
MultiQT consistently improves calibration across all forecasting horizons.

\begin{figure}[h!]
\centering
\begin{subfigure}{0.24\textwidth}
\includegraphics[width=\linewidth]{figs/covid_calibration_raw_horizon=1.pdf}
\end{subfigure}%
\begin{subfigure}{0.24\textwidth}
\includegraphics[width=\linewidth]{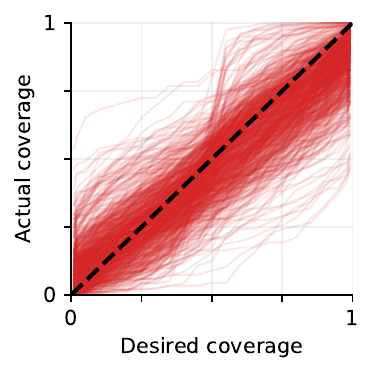}
\end{subfigure}%
\begin{subfigure}{0.24\textwidth}
\includegraphics[width=\linewidth]{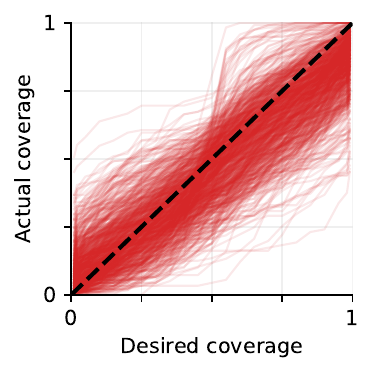}
\end{subfigure}%
\begin{subfigure}{0.24\textwidth}
\includegraphics[width=\linewidth]{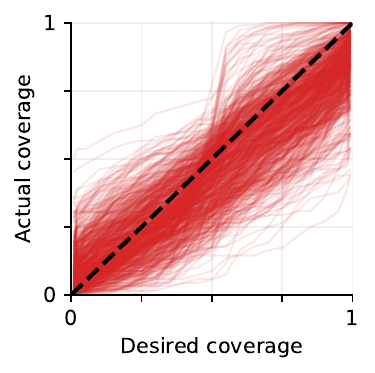}
\end{subfigure}

\begin{subfigure}{0.24\textwidth}
\includegraphics[width=\linewidth]{figs/covid_calibration_cal_horizon=1.pdf}
\caption{$h=1$}
\end{subfigure}%
\begin{subfigure}{0.24\textwidth}
\includegraphics[width=\linewidth]{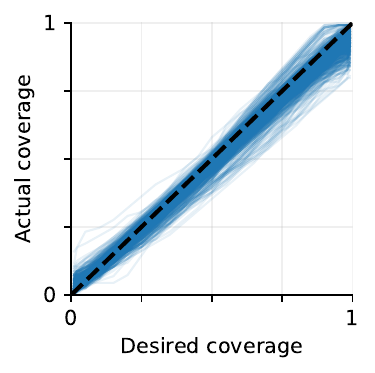}
\caption{$h=2$}
\end{subfigure}%
\begin{subfigure}{0.24\textwidth}
\includegraphics[width=\linewidth]{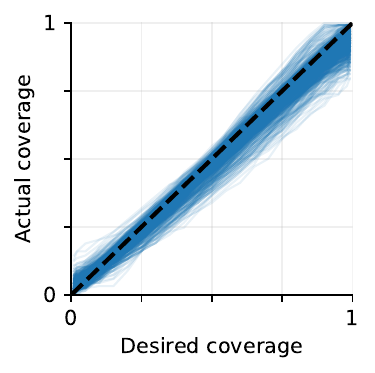}
\caption{$h=3$}
\end{subfigure}%
\begin{subfigure}{0.24\textwidth}
\includegraphics[width=\linewidth]{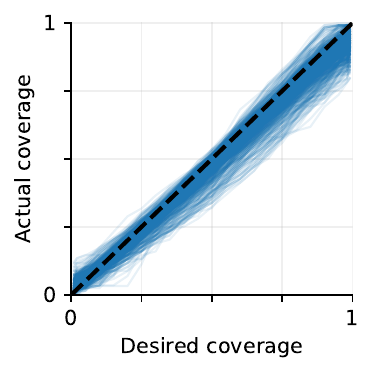}
\caption{$h=4$}
\end{subfigure}

\caption{Actual versus desired coverage for COVID-19 death forecasting,
  analogous to Figure \ref{fig:covid_calibration_h=1}. Here the top row
  represents base forecasts, and the bottom row the forecasts after applying
  MultiQT.}  
\label{fig:covid_calibration_all_h}
\end{figure}

Figures
\ref{fig:all_forecasters_ca_h=1_pt1}--\ref{fig:all_forecasters_vt_h=1_pt2}
display individual COVID-19 death forecasts before and after applying MultiQT,
analogous to Figure \ref{fig:example} in the main text. To provide a sense of
the effect of MultiQT on forecasts for states having large and small
populations, the first pair of figures (Figures
\ref{fig:all_forecasters_ca_h=1_pt1} and \ref{fig:all_forecasters_ca_h=1_pt2})  
show the effect of using MultiQT to correct one-week-ahead forecasts for California (the largest state by population) from each of the forecasting teams, and the
second pair of figures (Figures \ref{fig:all_forecasters_vt_h=1_pt1} and 
\ref{fig:all_forecasters_vt_h=1_pt2}) show the same for Vermont (one of the
smallest states).    

Recall that forecasts are made at levels 0.01, 0.025, 0.05, 0.1, 0.15, 0.2,
0.25, 0.3, 0.35, 0.4, 0.45, 0.5, 0.55, 0.6, 0.65, 0.7, 0.75, 0.8, 0.85, 0.9,
0.95, 0.975, and 0.99.  To visualize these, we plot colored bands where the
lightest opacity connects the 0.01 and 0.99 level forecasts, the next lightest
connects the 0.025 and 0.975 level forecasts, and so on. We can use these plots
to inspect calibration---if the 0.01 and 0.99 level quantile forecasts are
calibrated, then we should see that the true value falls within the lightest
opacity band 98\% of the time. Zooming in on the raw forecasts, we can see that
this is not the case for many of the forecasters initially, but after applying
MultiQT the coverage of the extreme quantiles is much improved.

\begin{figure}[p]
\includegraphics[width=\linewidth]{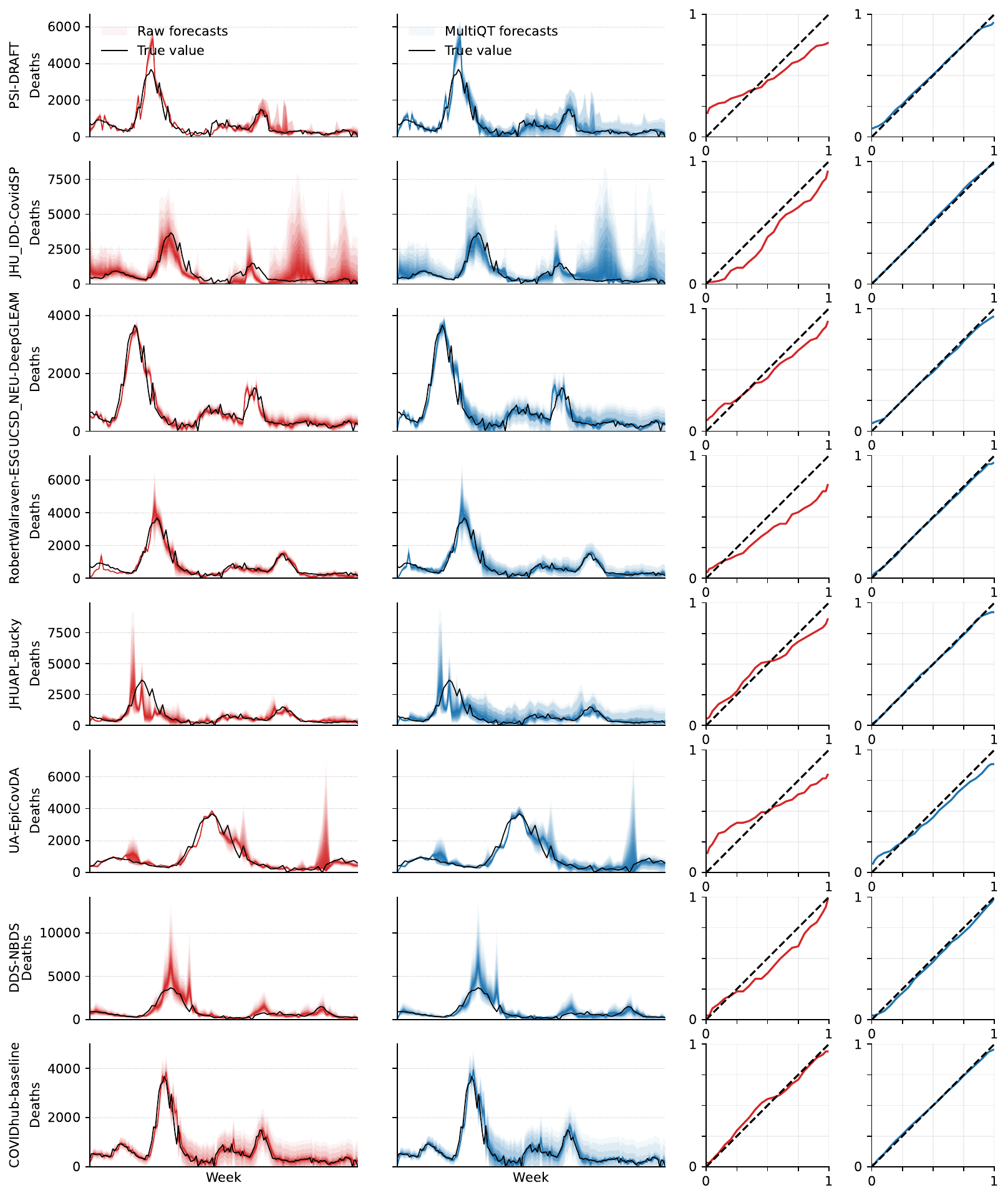}
\caption{One-week-ahead forecasts of weekly COVID-19 deaths in California (part 1 of 2). Each row corresponds to one forecaster, each with their own forecast date range. The  
first column shows the raw forecasts, the second column shows the forecasts after using MultiQT, the third column shows actual versus desired coverage for the raw forecasts, and the fourth shows the same for the MultiQT forecasts.}   
\label{fig:all_forecasters_ca_h=1_pt1}
\end{figure}

\begin{figure}[p]
\includegraphics[width=\linewidth]{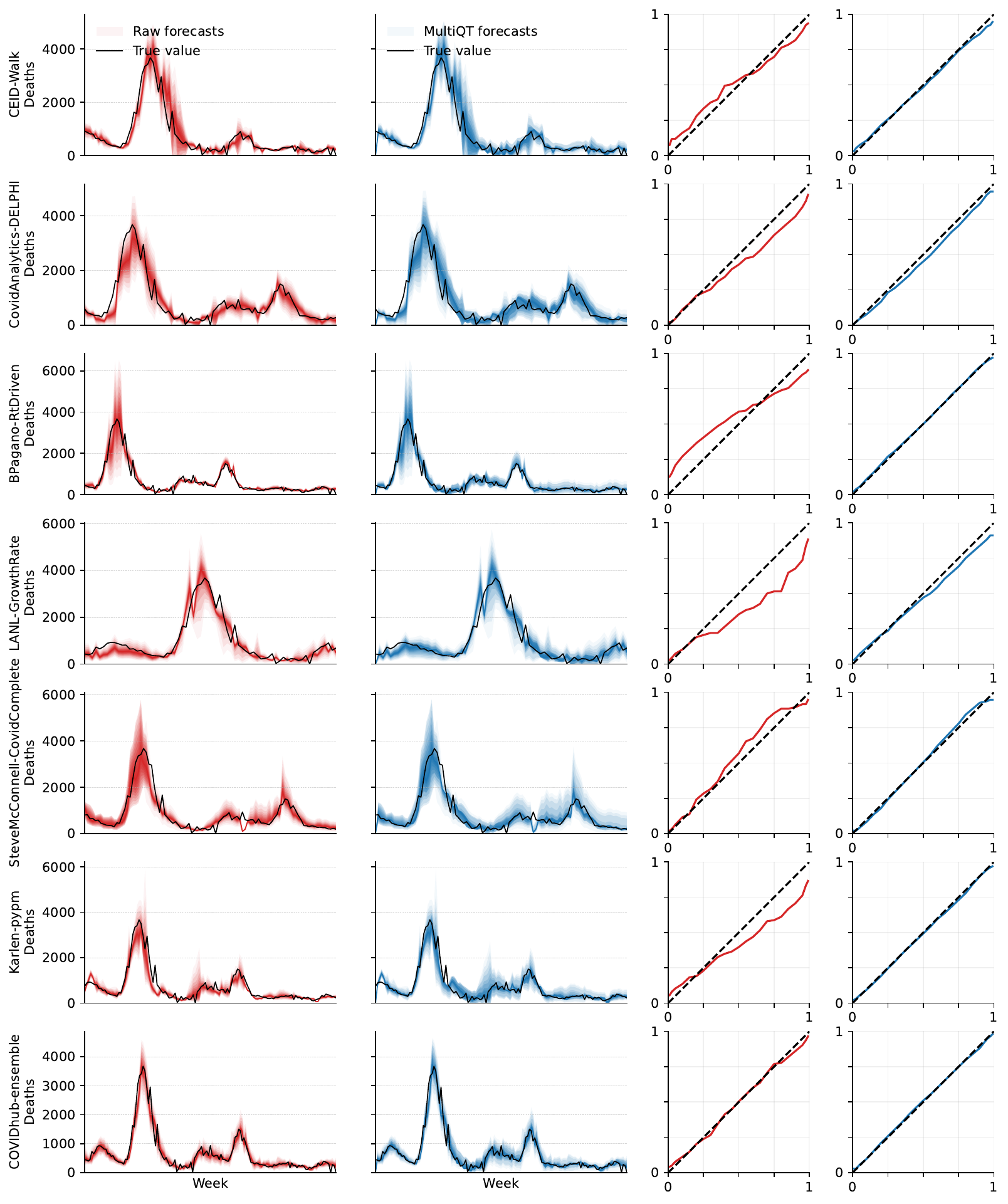}
\caption{One-week-ahead forecasts of weekly COVID-19 deaths in California (part 2 of 2), as in Figure
  \ref{fig:all_forecasters_ca_h=1_pt1}, for the remaining forecasters.}
\label{fig:all_forecasters_ca_h=1_pt2}
\end{figure}

\begin{figure}[p]
\includegraphics[width=\linewidth]{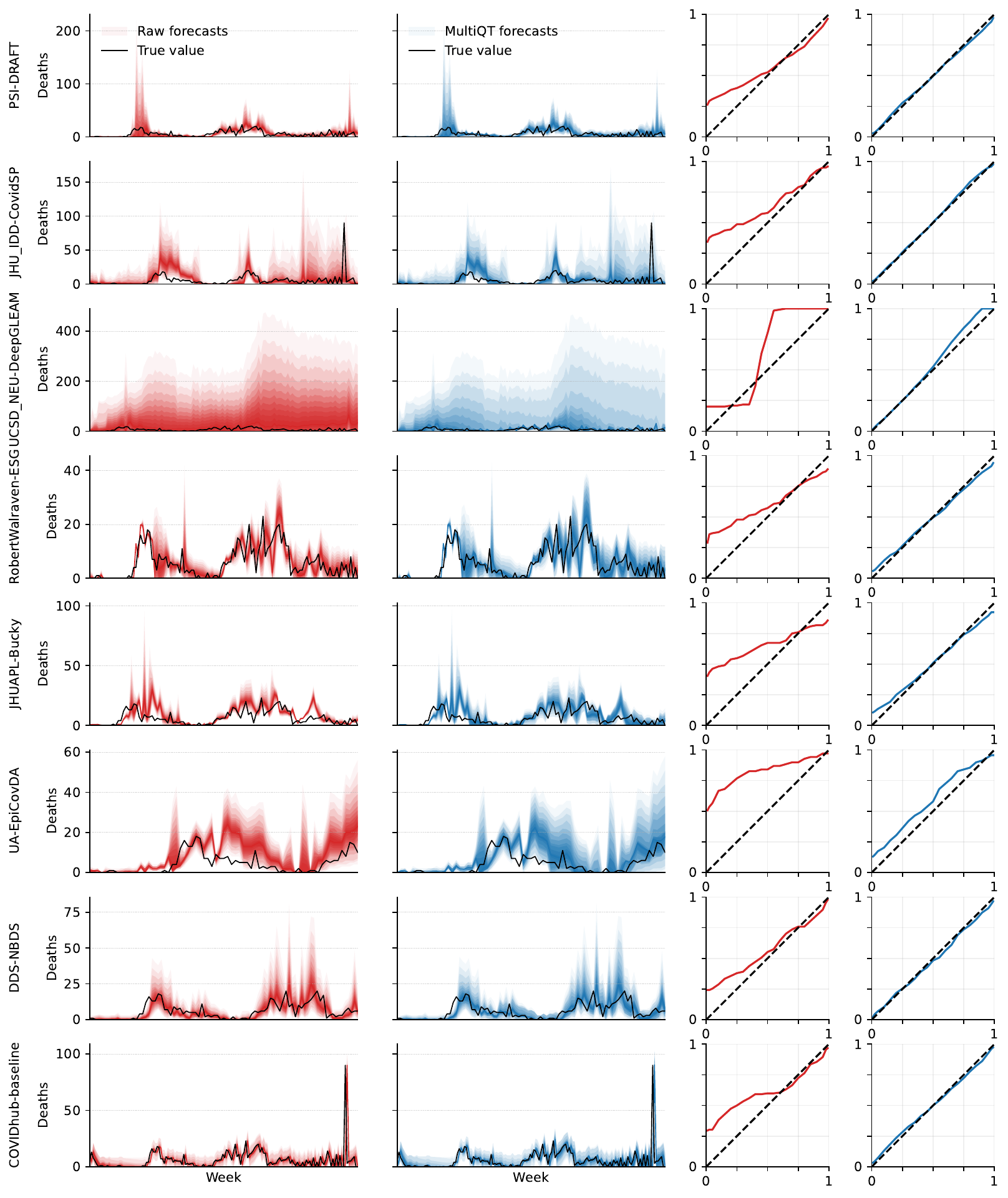}
\caption{One-week-ahead forecasts of weekly COVID-19 deaths in Vermont (part 1 of 2). This is analogous to Figure \ref{fig:all_forecasters_ca_h=1_pt1}, but for Vermont.}   
\label{fig:all_forecasters_vt_h=1_pt1}
\end{figure}

\begin{figure}[p]
\includegraphics[width=\linewidth]{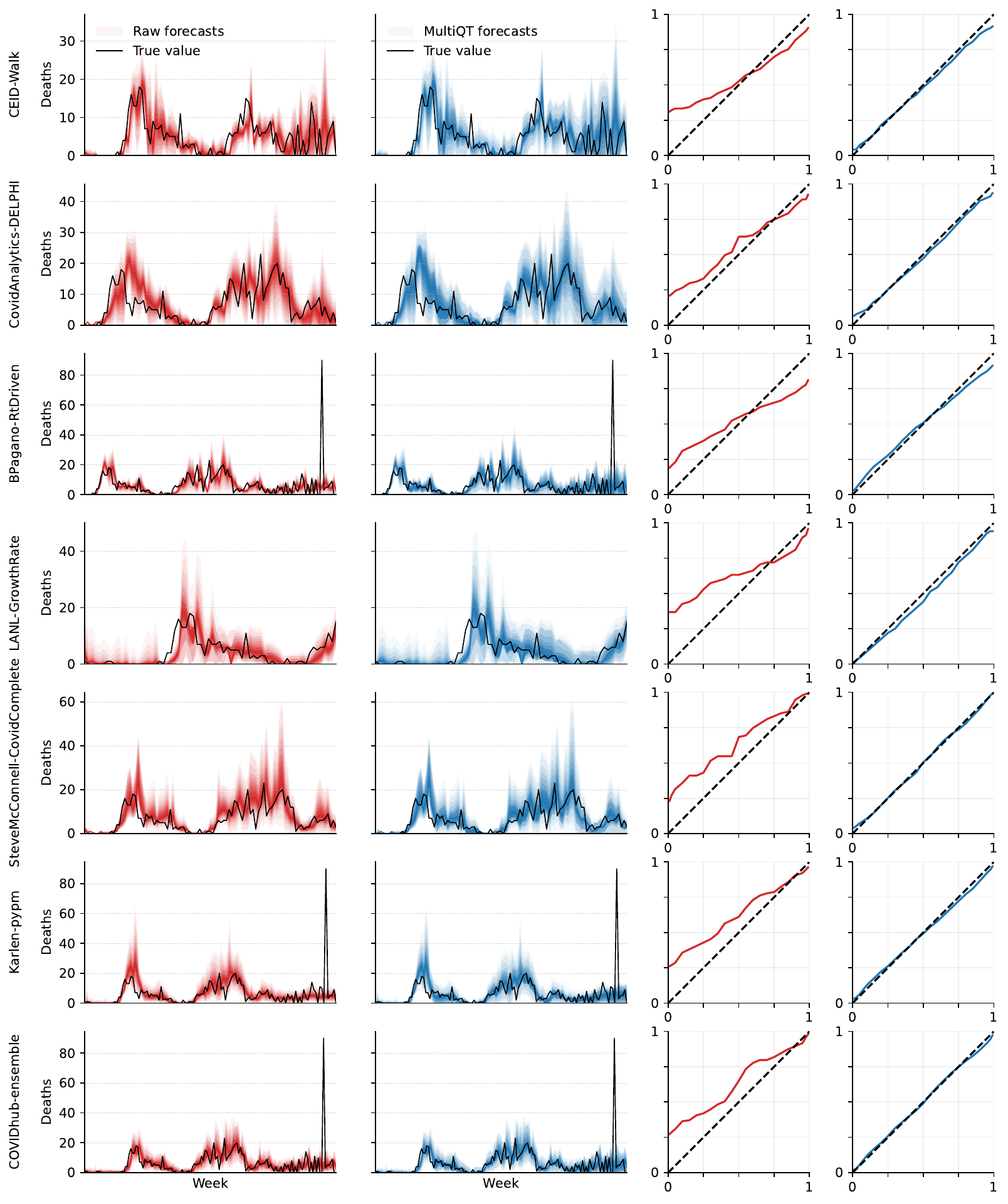}
\caption{One-week-ahead forecasts of weekly COVID-19 deaths in Vermont (part 2 of 2), as in Figure
  \ref{fig:all_forecasters_vt_h=1_pt1}, for the remaining forecasters.}
\label{fig:all_forecasters_vt_h=1_pt2}
\end{figure}

\subsection{Additional energy forecasting results} 
\label{sec:energy_case_studies_APPENDIX}

To complement Figure \ref{fig:energy_calibration_curves} from before,
which shows calibration curves for daily energy forecasts at 10:00 a.m., we
provide analogous plots for 2:00 a.m., 6:00 a.m., 2:00 p.m., 6:00 p.m., and
10:00 p.m., in Figure \ref{fig:wind_calibration_all_hours} (wind energy) and
Figure \ref{fig:solar_calibration_all_hours} (solar energy). MultiQT produces
near-perfect calibration at all hours. We note that in practice, it would be
unnecessary to generate solar energy forecasts for 2:00 a.m.\ and 10:00 p.m. At
these nighttime hours, the solar energy production is always zero, and the raw
quantile forecasts are also zero for all levels. 
Figures \ref{fig:wind_8forecasters} and \ref{fig:solar_8forecasters} visualize
the forecasts before and after applying MultiQT for the 10:00 a.m.\ time block
at eight randomly sampled wind and solar farm sites. 

\begin{figure}[h!]
\centering
\begin{subfigure}{0.16\textwidth}
\includegraphics[width=1.1\linewidth]{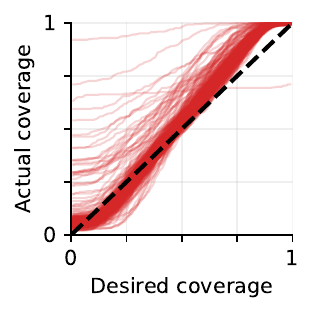}
\end{subfigure}%
\begin{subfigure}{0.16\textwidth}
\includegraphics[width=1.1\linewidth]{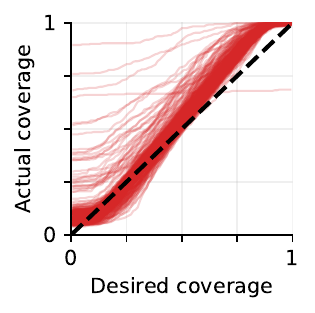}
\end{subfigure}%
\begin{subfigure}{0.16\textwidth}
\includegraphics[width=1.1\linewidth]{figs/energy_hour_16_calibration_raw_target_variable=Wind.pdf}
\end{subfigure}%
\begin{subfigure}{0.16\textwidth}
\includegraphics[width=1.1\linewidth]{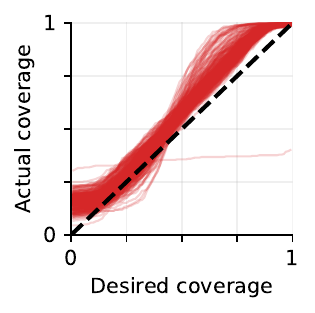}
\end{subfigure}%
\begin{subfigure}{0.16\textwidth}
\includegraphics[width=1.1\linewidth]{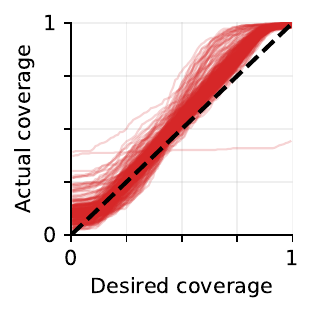}
\end{subfigure}%
\begin{subfigure}{0.16\textwidth}
\includegraphics[width=1.1\linewidth]{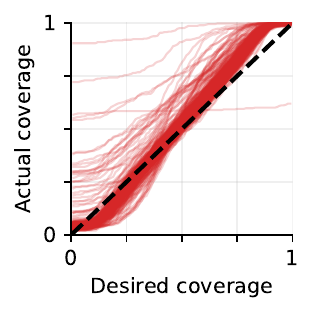}
\end{subfigure}

\begin{subfigure}{0.16\textwidth}
\includegraphics[width=1.1\linewidth]{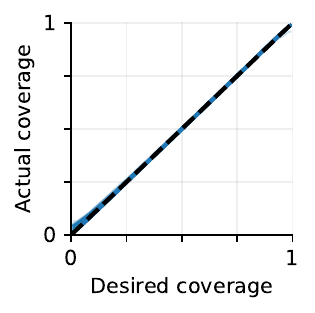}
\caption{2:00 a.m.}
\end{subfigure}%
\begin{subfigure}{0.16\textwidth}
\includegraphics[width=1.1\linewidth]{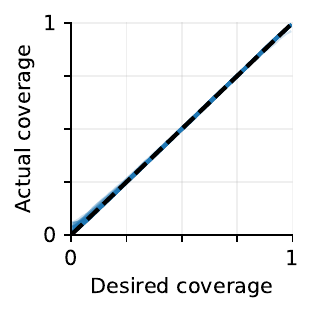}
\caption{6:00 a.m.}
\end{subfigure}%
\begin{subfigure}{0.16\textwidth}
\includegraphics[width=1.1\linewidth]{figs/energy_hour_16_calibration_cal_target_variable=Wind.pdf}
\caption{10:00 a.m.}
\end{subfigure}%
\begin{subfigure}{0.16\textwidth}
\includegraphics[width=1.1\linewidth]{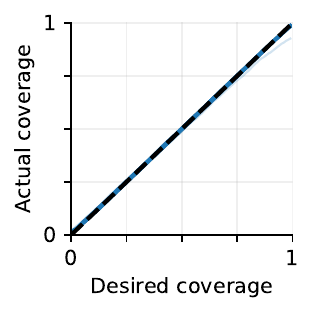}
\caption{2:00 p.m.}
\end{subfigure}%
\begin{subfigure}{0.16\textwidth}
\includegraphics[width=1.1\linewidth]{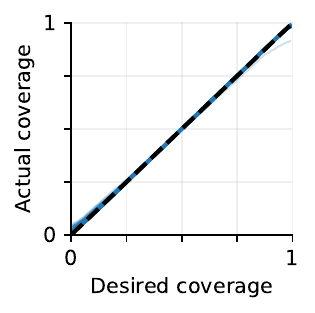}
\caption{6:00 p.m.}
\end{subfigure}%
\begin{subfigure}{0.16\textwidth}
\includegraphics[width=1.1\linewidth]{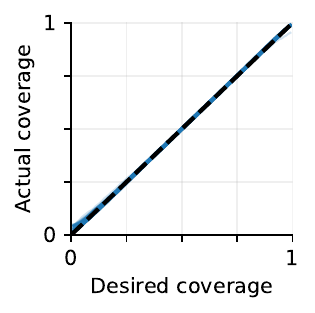}
\caption{10:00 p.m.}
\end{subfigure}

\caption{Actual versus desired coverage for wind energy forecasting, analogous
  to Figure \ref{fig:energy_calibration_curves}. Here the top row corresponds to the 
  raw forecasts, and the bottom row corresponds to the forecasts after applying MultiQT.}   
\label{fig:wind_calibration_all_hours}
\end{figure}

\begin{figure}[h!]
\centering
\begin{subfigure}{0.16\textwidth}
\includegraphics[width=1.1\linewidth]{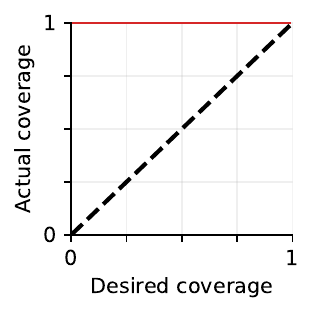}
\end{subfigure}%
\begin{subfigure}{0.16\textwidth}
\includegraphics[width=1.1\linewidth]{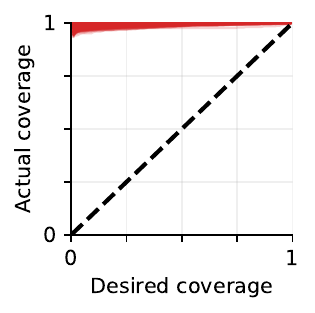}
\end{subfigure}%
\begin{subfigure}{0.16\textwidth}
\includegraphics[width=1.1\linewidth]{figs/energy_hour_16_calibration_raw_target_variable=Solar.pdf}
\end{subfigure}%
\begin{subfigure}{0.16\textwidth}
\includegraphics[width=1.1\linewidth]{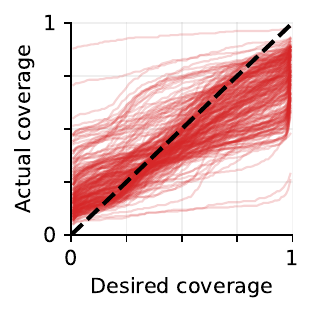}
\end{subfigure}%
\begin{subfigure}{0.16\textwidth}
\includegraphics[width=1.1\linewidth]{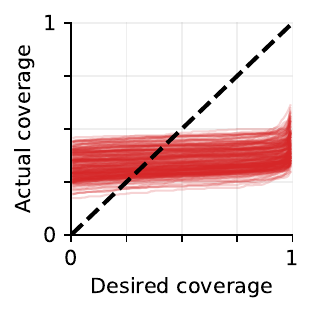}
\end{subfigure}%
\begin{subfigure}{0.16\textwidth}
\includegraphics[width=1.1\linewidth]{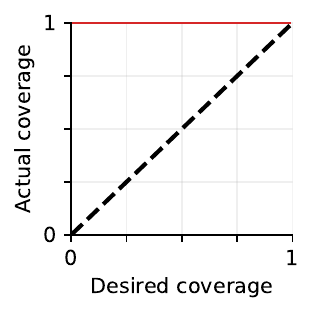}
\end{subfigure}

\begin{subfigure}{0.16\textwidth}
\includegraphics[width=1.1\linewidth]{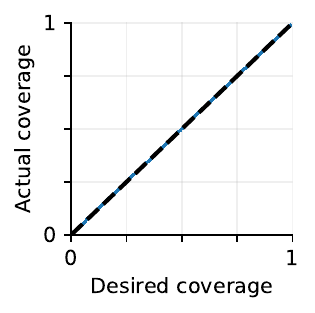}
\caption{2:00 a.m.}
\end{subfigure}%
\begin{subfigure}{0.16\textwidth}
\includegraphics[width=1.1\linewidth]{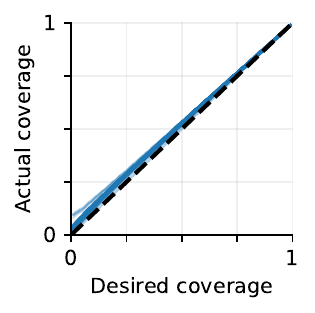}
\caption{6:00 a.m.}
\end{subfigure}%
\begin{subfigure}{0.16\textwidth}
\includegraphics[width=1.1\linewidth]{figs/energy_hour_16_calibration_cal_target_variable=Solar.pdf}
\caption{10:00 a.m.}
\end{subfigure}%
\begin{subfigure}{0.16\textwidth}
\includegraphics[width=1.1\linewidth]{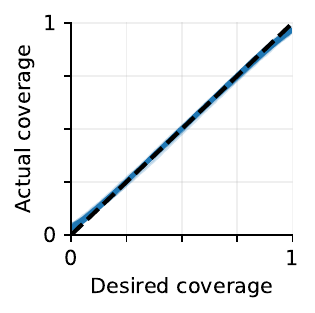}
\caption{2:00 p.m.}
\end{subfigure}%
\begin{subfigure}{0.16\textwidth}
\includegraphics[width=1.1\linewidth]{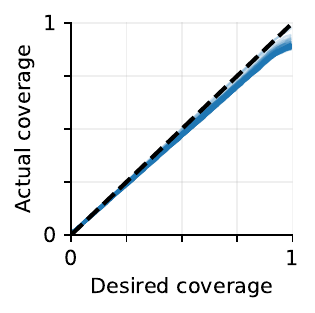}
\caption{6:00 p.m.}
\end{subfigure}%
\begin{subfigure}{0.16\textwidth}
\includegraphics[width=1.1\linewidth]{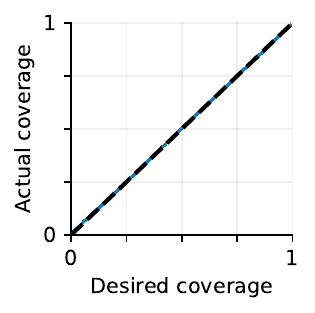}
\caption{10:00 p.m.}
\end{subfigure}

\caption{Actual versus desired coverage for solar energy forecasting, analogous
  to Figure \ref{fig:energy_calibration_curves}. Here the top row corresponds to the 
  raw forecasts, and the bottom row corresponds to the forecasts after applying MultiQT.}   
\label{fig:solar_calibration_all_hours}
\end{figure}

\begin{figure}[p]
\includegraphics[width=\linewidth]{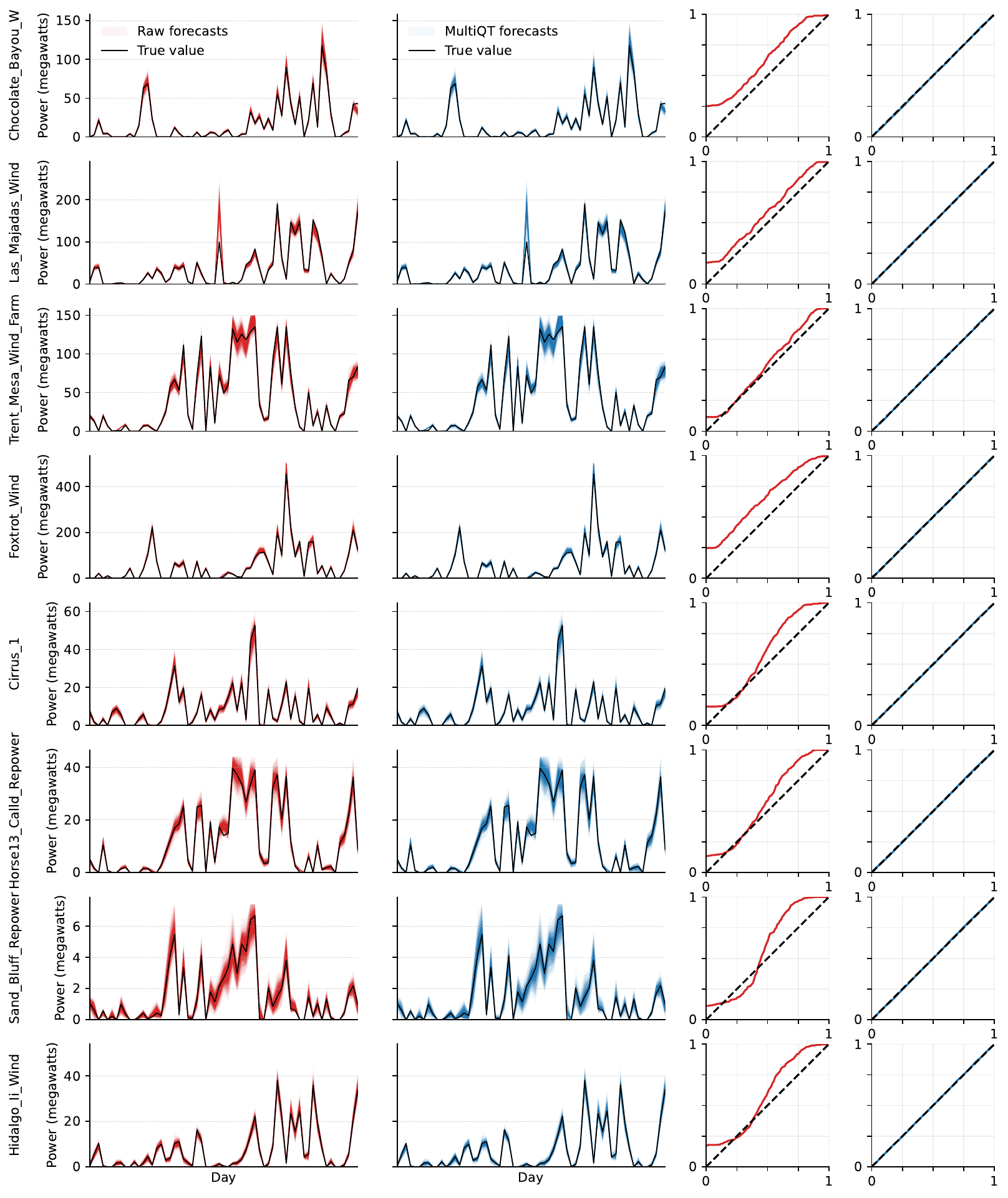}
\caption{Day-ahead wind energy forecasts for the 10:00 a.m.\ time block at eight
  randomly sampled wind farm sites, where each row is a different site. For the
  sake of illustration, forecasts are plotted only for September 1, 2018 to
  October 31, 2018, but calibration is computed using forecasts for every day in
  2018.}  
\label{fig:wind_8forecasters}
\end{figure}

\begin{figure}
\includegraphics[width=\linewidth]{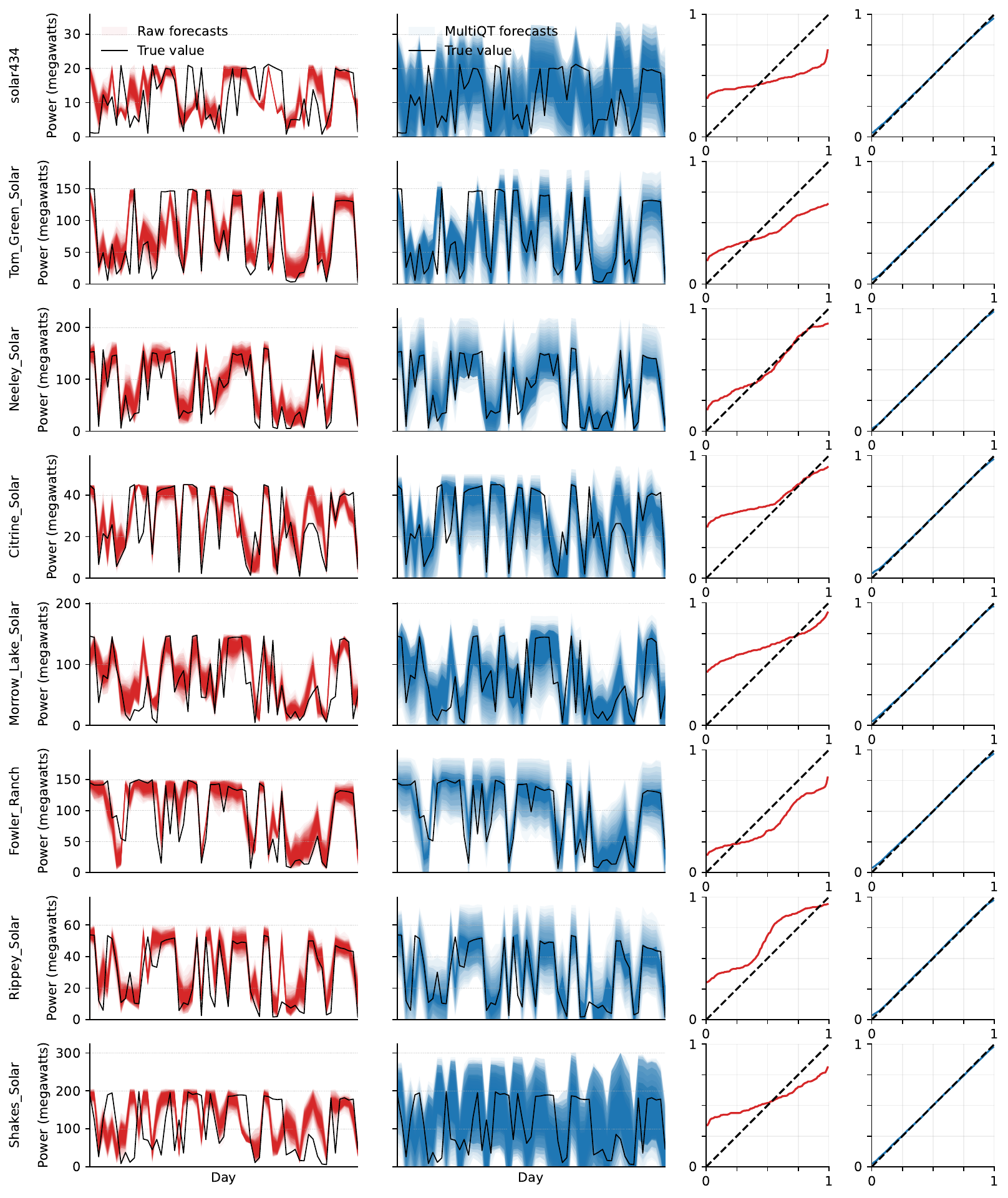}
\caption{As in Figure \ref{fig:wind_8forecasters}, now for solar energy
  forecasting.} 
\label{fig:solar_8forecasters}
\end{figure}

\end{document}